\documentclass[sigconf,balance=false]{acmart}
\usepackage{popets}

\setcopyright{popets}
\copyrightyear{2023}

\acmYear{2023}
\acmVolume{2023}
\acmNumber{1}
\acmDOI{XXXXXXX.XXXXXXX}
\acmISBN{}
\acmConference{Proceedings on Privacy Enhancing Technologies}
\settopmatter{printacmref=false,printccs=false,printfolios=true}

\usepackage[utf8]{inputenc}

\usepackage{amsmath}
\usepackage{amsfonts}
\usepackage{bm}









\def\eqref#1{equation~\ref{#1}}
\def\Eqref#1{Equation~\ref{#1}}








\def\1{\bm{1}}










\DeclareMathAlphabet{\mathsfit}{\encodingdefault}{\sfdefault}{m}{sl}
\SetMathAlphabet{\mathsfit}{bold}{\encodingdefault}{\sfdefault}{bx}{n}














\DeclareMathOperator*{\argmax}{arg\,max}

\usepackage{arydshln}

\makeatletter
\def\adl@drawiv#1#2#3{%
        \hskip.5\tabcolsep
        \xleaders#3{#2.5\@tempdimb #1{1}#2.5\@tempdimb}%
                #2\z@ plus1fil minus1fil\relax
        \hskip.5\tabcolsep}
\newcommand{\cdashlinelr}[1]{%
  \noalign{\vskip\aboverulesep
           \global\let\@dashdrawstore\adl@draw
           \global\let\adl@draw\adl@drawiv}
  \cdashline{#1}
  \noalign{\global\let\adl@draw\@dashdrawstore
           \vskip\belowrulesep}}
\makeatother





\usepackage{xparse}
\usepackage{xspace}
\usepackage{algorithm}
\usepackage[noend]{algpseudocode}
\usepackage{fixltx2e}
\usepackage{amsmath}
\usepackage{tikz}
\usepackage{float}
\usepackage{multirow}
\usepackage{multicol}
\usepackage{colortbl}
\usepackage{wrapfig}

\usepackage{adjustbox}
\usepackage{microtype}
\usepackage{graphicx}
\usepackage{subfigure}
\usepackage{booktabs} 
\usepackage[shortlabels]{enumitem}

\usepackage{hyperref}
\usepackage{amsthm}
\usepackage{enumitem}
\usepackage{wrapfig}
\usepackage{blindtext}
\usepackage{placeins}
\usepackage{bbm}

\usepackage{cuted}
\setlength\stripsep{3pt plus 1pt minus 1pt}


\setlist[itemize]{leftmargin=*}
\setlist[enumerate]{leftmargin=*}

\usepackage[capitalize, noabbrev]{cleveref}
\crefalias{df}{definition}
\crefalias{lem}{lemma}
\crefalias{prop}{proposition}
\crefalias{thm}{theorem}
\crefalias{cor}{corollary}
\crefalias{conj}{conjecture}
\crefalias{emlp}{example}
\crefalias{rem}{remark}
\Crefname{algocf}{Algorithm}{Algorithms}
\Crefformat{equation}{Equation~(#2#1#3)}
\crefformat{equation}{Eq.~(#2#1#3)}

\newtheorem{theorem}{Theorem}[section]

\newtheorem{prop}[theorem]{Proposition}
\newtheorem{lemma}[theorem]{Lemma}

\newtheorem{definition}{Definition}
\renewcommand\qedsymbol{$\blacksquare$}

\renewcommand{\algorithmicrequire}{\textbf{Input: }}
\renewcommand{\algorithmicensure}{\textbf{Output: }}

\newcommand{\ie}{\textit{i.e.,}\@\xspace}
\newcommand{\eg}{\textit{e.g.,}\@\xspace}

\newcommand{\capc}{CaPC\@\xspace}


\newcommand{\single}{single-label\xspace}
\newcommand{\multi}{multi-label\xspace}

\newcommand{\Multi}{Multi-label\xspace}

\newcommand{\renyi}{R\'enyi\xspace}

\newcommand{\pate}{PATE\@\xspace}
\newcommand{\tpate}{$\tau$-PATE\@\xspace}

\renewcommand{\binary}{Binary\@\xspace}
\newcommand{\powerset}{Powerset\@\xspace}

\newcommand{\answeringParties}{\textit{answering parties}\@\xspace}

\newcommand{\teacherModels}{\textit{teacher models}\@\xspace}

   

\newif\ifdraft
\draftfalse

\newcommand{\new}[1]{\textcolor{black}{#1}}

\ifdraft

\newcommand{\ahmad}[1]{\textcolor{red}{[Ahmad: #1]}}

\newcommand{\chris}[1]{\textcolor{red}{Chris: #1}}
\newcommand{\vinith}[1]{\textcolor{blue}{Vinith: #1}}
\newcommand{\yunxiang}[1]{\textcolor{cyan}{Yunxiang: #1}}

\definecolor{chocolate(traditional)}{rgb}{0.48, 0.25, 0.0}

\definecolor{darkpastelgreen}{rgb}{0.01, 0.75, 0.24}
\newcommand{\natalie}[1]{\textcolor{darkpastelgreen}{natalie: #1}}
\definecolor{amber(sae/ece)}{rgb}{1.0, 0.49, 0.0}
\newcommand{\adam}[1]{\textcolor{red}{[Adam: #1]}}
\newcommand{\franzi}[1]{\textcolor{red}{[Franzi: #1]}}
\newcommand{\jiaqi}[1]{\textcolor{blue}{Jiaqi: #1}}
\newcommand{\emmy}[1]{\textcolor{orange}{Emmy: #1}}
\newcommand{\amrita}[1]{\textcolor{red}{Amrita: #1}}

\else

\newcommand{\chris}[1]{}
\newcommand{\vinith}[1]{}
\newcommand{\adam}[1]{}
\newcommand{\yunxiang}[1]{}
\newcommand{\natalie}[1]{}
\newcommand{\franziska}[1]{}
\newcommand{\franzi}[1]{}
\newcommand{\jiaqi}[1]{}
\newcommand{\emmy}[1]{}
\newcommand{\amrita}[1]{}
\newcommand{\ahmad}[1]{}

\fi


\begin{document}

\title{Private Multi-Winner Voting for Machine Learning}

\author{Adam Dziedzic}
\email{adam.dziedzic@utoronto.ca}
\affiliation{%
  \institution{University of Toronto and Vector Institute}
  \country{Canada}
}

\author{Christopher A. Choquette-Choo}\authornote{The work was done while the author was at the University of Toronto and Vector Institute.}
\email{choquette.christopher@gmail.com}
\affiliation{%
  \institution{Google Research, Brain Team}
  \country{USA}
}

\author{Natalie Dullerud}
\email{natalie.dullerud@mail.utoronto.ca}
\affiliation{%
  \institution{University of Toronto and Vector Institute}
  \country{Canada}
}

\author{Vinith Suriyakumar}\authornotemark[1]
\email{suriyaku@gmail.com}
\affiliation{%
  \institution{MIT}
  \country{USA}
}

\author{Ali Shahin Shamsabadi}\authornotemark[1]
\email{a.shahinshamsabadi@turing.ac.uk}
\affiliation{%
  \institution{The Alan Turing Institute}
  \country{UK}
}

\author{Muhammad Ahmad Kaleem}
\email{ahmad.kaleem@mail.utoronto.ca}
\affiliation{%
  \institution{University of Toronto and Vector Institute}
  \country{Canada}
}

\author{Somesh Jha}
\email{jha@cs.wisc.edu}
\affiliation{%
  \institution{University of Wisconsin-Madison}
  \country{USA}
}

\author{Nicolas Papernot}
\email{nicolas.papernot@utoronto.ca}
\affiliation{%
  \institution{University of Toronto and Vector Institute}
  \country{Canada}
}

\author{Xiao Wang}
\email{wangxiao@cs.northwestern.edu}
\affiliation{%
  \institution{Northwestern University}
  \country{USA}
}

\begin{abstract}
Private multi-winner voting is the task of revealing $k$-hot binary vectors satisfying a bounded differential privacy (DP) guarantee. This task has been understudied in machine learning literature despite its prevalence in many domains such as healthcare. We propose three new DP multi-winner mechanisms: Binary, $\tau$, and Powerset voting. Binary voting operates independently per label through composition. $\tau$ voting bounds votes optimally in their $\ell_2$ norm for tight data-independent guarantees. Powerset voting operates over the entire binary vector by viewing the possible outcomes as a power set. Our theoretical and empirical analysis shows that Binary voting can be a competitive mechanism on many tasks unless there are strong correlations between labels, in which case Powerset voting outperforms it. We use our mechanisms to enable privacy-preserving multi-label learning in the \emph{central setting} by extending the canonical single-label technique: PATE. We find that our techniques outperform current state-of-the-art approaches on large, real-world healthcare data and standard multi-label benchmarks. We further enable multi-label confidential and private collaborative (CaPC) learning and show that model performance can be significantly improved in the multi-site setting.
\end{abstract}


\keywords{privacy, machine learning, multi-label classification, differential privacy}

\maketitle

\section{Introduction}

\label{Introduction}
Differential privacy techniques for machine learning have predominantly focused on two techniques: differentially private stochastic gradient descent (DPSGD)~\cite{dp-sgd} and private aggregation of teacher ensembles (PATE)~\cite{papernot2017semi}. At the core of these techniques are two mechanisms. The Gaussian mechanism enables arbitrary private queries of data (e.g., gradients in DPSGD~\cite{dp-sgd}). Instead, the noisy $\argmax$ can only reveal the max count. This is used to reveal the predicted label in \textit{Private Aggregation of Teacher Ensembles} (PATE)~\cite{papernot2017semi}.
These two mechanisms appeal well to the canonical
\textit{\single} classification per input (a.k.a. multi-class classification).

In contrast, more real-world tasks, such as \emph{\multi} classification~\cite{tsoumakas2007multi}, can be modeled using multi-winner elections~\cite{10.5555/3298239.3298313,Faliszewski2017MultiwinnerVA}. These are settings where more than one candidate can win, i.e., each input can have $>1$ class present. Outside of 
elections, other multi-winner election scenarios include 
canonical computer visions tasks like object recognition, where models must recognize all objects present in an image~\cite{boutell2004learning}. Another scenario is the task of inferring which topics were written about in a corpus of text (document): e.g., a news article may discuss politics, finance, and/or education.
The principal setting we consider is healthcare, 
in which patient data (\eg symptom reports or X-rays) may be indicative of multiple conditions~\cite{chexpert2019, johnson2019mimiccxrjpg}. 

We thus focus on creating private mechanisms for releasing the outcome of a multi-winner election. We first formalize the multi-winner election. Then, we propose \binary voting, a simple yet powerful solution that answers the $k$ single-winner (for each class) elections independently. We prove that it is optimal when there is a lack of correlation among the outcomes of particular candidates and show empirically that this mechanism can outperform state-of-the-art baselines based on DP-SGD.
Recognizing that correlations can often exist in data, we then propose $\tau$ and Powerset voting for obtaining tighter guarantees. $\tau$ voting obtains a tight data-independent privacy bound by limiting the number of votes from each voter, obtained via an $\ell_2$ bound on their ballot. Powerset voting is created by casting the multi-winner election to an analogous single-winner election---thus, Powerset voting reveals the result for all $k$ candidates jointly.

By replacing the noisy $\argmax$ mechanism in (\single) PATE, we create \multi PATE for DP \multi semi-supervised machine learning. We conduct extensive empirical evaluation on large datasets, including Pascal VOC, a common \multi benchmark, as well as CheXpert~\cite{chexpert2019}, MIMIC~\cite{johnson2019mimiccxrjpg}, and PadChest~\cite{padchest2020} which are healthcare datasets. Despite different public data assumptions, we compare against DPSGD (including an improved adaptive variant~\cite{AdaptiveDPSGD2021}) because it is the only private baseline in \multi classification thus far. 
We find that with the modest assumption of public unlabeled data (required for PATE), we achieve new state-of-the-art DP \multi models with 85\% AUC (17\% better than DPSGD) on Pascal VOC. 
Because many \multi settings may benefit from distributed, multi-site learning (e.g., healthcare), we further integrate our methods with the \textit{CaPC} protocol for \textit{Confidential and Private Collaborative} learning~\cite{capc2021iclr}.

We find that training with our mechanisms can improve model performance significantly on large, real-world healthcare data with sensitive attributes. Our main contributions are as follows:
\begin{enumerate}
    \item We create three new DP aggregation mechanisms for private multi-winner voting: \binary voting, $\tau$ voting, and \powerset voting. We show theoretically and empirically that \binary voting performs better unless there is high correlation among labels.
    
    \item We enable private \multi semi-supervised learning that achieves SOTA performance on large real-world tasks: Pascal VOC and 3 healthcare datasets.  

    \item We enable \multi collaborative learning in the multi-site scenario and show empirically that this significantly improves model performance.  
\end{enumerate}

\section{Background and Related Work}\label{sec:background}
\subsection{DP for Machine Learning}
\label{ssec:dp}

Differential Privacy (DP) is the canonical framework for measuring the privacy leakage of a randomized algorithm~\cite{dwork2006calibrating}. It requires the mechanism (in our work, the training algorithm), to produce statistically indistinguishable outputs on any pair of \textit{adjacent} datasets: those differing by any but only one data point. This bounds the probability of an adversary inferring properties of the training data from the mechanism's outputs.

\begin{definition}[Differential Privacy]
\label{differential_privacy}
A randomized mechanism $\mathcal{M}$ with domain $\mathcal{D}$ and range $\mathcal{R}$ satisfies $(\varepsilon, \delta)$-differential privacy if for any subset $\mathcal{S} \subseteq \mathcal{R}$ and any adjacent datasets $X, X' \in \mathcal{D}$, i.e. $\|X - X'\|_{1} \leq 1$, the following inequality holds: $
{\rm Pr}\left[\mathcal{M}(X) \in \mathcal{S}\right] \leq e^{\varepsilon} {\rm Pr}\left[\mathcal{M}(X') \in \mathcal{S}\right] + \delta.$
\end{definition}

We use \renyi Differential Privacy (RDP)~\cite{mironov2017renyi}. Because RDP bounds the privacy loss with the \renyi-divergence, it enables tighter accounting for our mechanisms that use Gaussian noise. RDP is a generalization of 
$(\varepsilon,\delta=0)-DP$. We use standard conversions to report $(\varepsilon,\delta)-DP$.

\begin{definition}[Rényi Differential Privacy]
\label{renyi_differential_privacy}
A randomized mechanism $\mathcal{M}$ is said to satisfy $\varepsilon$-Rényi differential privacy of order $\lambda$, or $(\lambda, \varepsilon)$-RDP for short, if for any adjacent datasets $X, X' \in \mathcal{D}$:
\begin{flalign*}
&D_{\lambda}(\mathcal{M}(X) \, || \, \mathcal{M}(X')) = &&\\
&= \frac{1}{\lambda - 1} \log\mathbb{E}_{\theta \sim \mathcal{M}(X)}\left[\left(\frac{{\rm Pr}[\mathcal{M}(X) = \theta]}{{\rm Pr}[\mathcal{M}(X') = \theta]}\right)^{\lambda - 1}\right] \leq \varepsilon, &&
\end{flalign*}
\end{definition}
where $D_{\lambda}(P\, || \, Q)$ is the \renyi Divergence between distributions $P$ and $Q$ defined over a range $\mathcal{R}$.
It is convenient to consider RDP in its functional form as $\varepsilon_{\mathcal{M}}(\lambda)$, which is the RDP $\varepsilon$ of mechanism $\mathcal{M}$ at order $\lambda$. Our privacy analysis builds on the following result.

\begin{lemma}
\label{lemma:rdp-gaussian}
\textbf{RDP-Gaussian mechanism (for a \single setting)}~\cite{mironov2017renyi}. Let $f:\mathcal{X} \rightarrow \mathcal{R}$ have bounded $\ell_2$ sensitivity for any two neighboring datasets $X, X^\prime$, i.e., $\left\lVert f(X) - f(X^\prime) \right\rVert_2 \le \Delta_2$. The Gaussian mechanism $\mathcal{M}(X) = f(X) + \mathcal{N}(0,\sigma^2)$ obeys RDP with $\varepsilon_{\mathcal{M}}(\lambda) = \frac{\lambda \Delta_2^2}{2 \sigma^2}$.
\end{lemma}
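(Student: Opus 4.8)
The plan is to reduce the claim to a one-dimensional Gaussian computation using the translation- and rotation-invariance of Rényi divergence, and then to maximize over adjacent datasets. Fix any adjacent $X, X'$ and set $\nu = f(X) - f(X')$, so that $\mathcal{M}(X)$ and $\mathcal{M}(X')$ are the Gaussians $\mathcal{N}(f(X), \sigma^2 I)$ and $\mathcal{N}(f(X'), \sigma^2 I)$. Since $D_\lambda$ is unchanged when both arguments are shifted by the same vector, $D_\lambda(\mathcal{M}(X) \, || \, \mathcal{M}(X')) = D_\lambda(\mathcal{N}(\nu, \sigma^2 I) \, || \, \mathcal{N}(0, \sigma^2 I))$. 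Because the common covariance $\sigma^2 I$ is isotropic, I would then rotate coordinates so that $\nu$ is mapped to $(\lVert \nu \rVert_2, 0, \dots, 0)$; the standard Gaussian is invariant under this rotation, and by additivity of Rényi divergence over product measures the $d-1$ coordinates orthogonal to $\nu$ contribute $0$. Hence it suffices to compute $D_\lambda(\mathcal{N}(m, \sigma^2) \, || \, \mathcal{N}(0, \sigma^2))$ in one dimension with $m = \lVert \nu \rVert_2$.

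For the scalar case I would work directly from the definition $D_\lambda(P \, || \, Q) = \frac{1}{\lambda-1}\log \int p(x)^\lambda q(x)^{1-\lambda}\, dx$. Writing out the Gaussian densities, the integrand is a constant times $\exp\!\big(-\tfrac{1}{2\sigma^2}(\lambda(x-m)^2 + (1-\lambda)x^2)\big)$, and the quadratic in the exponent expands to $x^2 - 2\lambda m x + \lambda m^2 = (x - \lambda m)^2 - \lambda(\lambda-1)m^2$. The coefficient of $x^2$ is $1$ regardless of $\lambda$, so the integral is an ordinary Gaussian integral and equals $\exp\!\big(\tfrac{\lambda(\lambda-1)m^2}{2\sigma^2}\big)$. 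Applying $\tfrac{1}{\lambda-1}\log(\cdot)$ yields $D_\lambda(\mathcal{N}(m,\sigma^2) \, || \, \mathcal{N}(0,\sigma^2)) = \tfrac{\lambda m^2}{2\sigma^2}$, and therefore $D_\lambda(\mathcal{M}(X) \, || \, \mathcal{M}(X')) = \tfrac{\lambda \lVert \nu \rVert_2^2}{2\sigma^2}$. (Alternatively one can skip the rotation and perform the same completion of the square directly in $\mathbb{R}^d$, since the cross term is linear and the quadratic part stays $\lVert x \rVert_2^2$.)

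Finally, to get a bound valid for every pair of neighbors I would take the supremum: by the sensitivity hypothesis $\lVert \nu \rVert_2 = \lVert f(X) - f(X') \rVert_2 \le \Delta_2$, so $\varepsilon_{\mathcal{M}}(\lambda) = \sup_{X, X' \text{ adjacent}} D_\lambda(\mathcal{M}(X) \, || \, \mathcal{M}(X')) \le \tfrac{\lambda \Delta_2^2}{2\sigma^2}$. I do not expect any real obstacle here; the only points to be careful about are the algebra of completing the square and confirming the Gaussian integral converges (it does for every $\lambda \ne 1$, because the $x^2$ coefficient is $1$), together with noting that $\lambda > 1$ is exactly the regime in which the order-$\lambda$ RDP of Definition~\ref{renyi_differential_privacy} is defined, so no extra assumptions are needed. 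It is worth remarking that the inequality is tight — it is attained whenever some adjacent pair achieves $\lVert f(X) - f(X') \rVert_2 = \Delta_2$ — so the stated expression is the exact RDP curve of the Gaussian mechanism, not merely an upper bound.
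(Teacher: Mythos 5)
Your proposal is correct and follows essentially the same route as the paper: the paper cites Mironov for this single-label statement but spells out the identical argument when proving its $k$-dimensional generalization (Lemma~\ref{lemma:rdp-gaussian-multilabel}), namely writing $D_\lambda$ as the integral of $p^\lambda q^{1-\lambda}$, completing the square in the exponent, evaluating the resulting Gaussian integral to get $\frac{\lambda \lVert f(X)-f(X')\rVert_2^2}{2\sigma^2}$, and bounding by the sensitivity. Your preliminary reduction to one dimension via rotation invariance is a harmless stylistic detour that you yourself note can be skipped by completing the square directly in $\mathbb{R}^d$, which is exactly what the paper does.
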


Another notable advantage of RDP over $(\varepsilon,\delta)$-DP is that it composes naturally. This will be helpful when analyzing our approach which repeatedly applies the same private mechanism to a dataset. 


\begin{lemma}
\label{lemma:rdp-composition}
\textbf{\new{RDP Adaptive Sequential Composition}}~\cite{mironov2017renyi}. \new{Given a sequence of $k$ adaptively chosen mechanisms 
$\mathcal{M}_1,\dots,\mathcal{M}_k$, 
 each saatisfying 
 $(\lambda,\varepsilon_i)$-RDP, then 
their union
satisfies $(\lambda,\sum_{i=1}^{k} \varepsilon_i)$-RDP.}
\end{lemma}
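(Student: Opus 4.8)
The plan is to prove the statement by induction on $k$, the crux being the two-fold composition case $k=2$; the base case $k=1$ is immediate. For the inductive step it suffices to show that if $\mathcal{M}_1$ satisfies $(\lambda,\varepsilon_1)$-RDP and, for every fixed value $y_1$ of the first output, $\mathcal{M}_2(\cdot\,;y_1)$ satisfies $(\lambda,\varepsilon_2)$-RDP, then the joint mechanism $X\mapsto(\mathcal{M}_1(X),\mathcal{M}_2(X;\mathcal{M}_1(X)))$ satisfies $(\lambda,\varepsilon_1+\varepsilon_2)$-RDP. Applying this with $\mathcal{M}_1$ taken to be the composition of the first $k-1$ mechanisms (which satisfies $(\lambda,\sum_{i<k}\varepsilon_i)$-RDP by the inductive hypothesis) and $\mathcal{M}_2$ taken to be $\mathcal{M}_k$ then closes the induction.

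For the two-fold case I would fix a pair of adjacent datasets $X,X'$ and let $p_1,q_1$ denote the densities of $\mathcal{M}_1(X),\mathcal{M}_1(X')$ and $p_2(\cdot\mid y_1),q_2(\cdot\mid y_1)$ the conditional densities of the second output given first output $y_1$, all with respect to a common dominating measure. Expanding the definition of R\'enyi divergence for the joint distributions and applying Fubini to factor the double integral gives
\begin{align*}
e^{(\lambda-1)D_\lambda(\cdot\,\|\,\cdot)} &= \int p_1(y_1)^{\lambda}\, q_1(y_1)^{1-\lambda}\left(\int p_2(y_2\mid y_1)^{\lambda}\, q_2(y_2\mid y_1)^{1-\lambda}\,dy_2\right)dy_1.
\end{align*}
The inner integral equals $e^{(\lambda-1)D_\lambda(\mathcal{M}_2(X;y_1)\,\|\,\mathcal{M}_2(X';y_1))}$, which is at most $e^{(\lambda-1)\varepsilon_2}$ by the RDP hypothesis on $\mathcal{M}_2$ (using $\lambda>1$, so that $t\mapsto e^{(\lambda-1)t}$ is increasing). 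Pulling this constant out and recognizing what remains as $e^{(\lambda-1)D_\lambda(\mathcal{M}_1(X)\,\|\,\mathcal{M}_1(X'))}\le e^{(\lambda-1)\varepsilon_1}$ yields the bound $e^{(\lambda-1)(\varepsilon_1+\varepsilon_2)}$; taking logarithms and dividing by $\lambda-1$ gives $D_\lambda\le\varepsilon_1+\varepsilon_2$, as required.

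The main point to handle carefully is the adaptivity: the law of the second output genuinely depends on the realized value $y_1$, so the argument needs the RDP guarantee for $\mathcal{M}_2$ to hold uniformly over all auxiliary inputs $y_1$ rather than merely on average; this is exactly the content of $(\lambda,\varepsilon_2)$-RDP as assumed, so the factorization above is legitimate. A secondary caveat is that the displayed manipulations assume $\lambda>1$; the endpoint $\lambda=1$ is the KL divergence and follows from the analogous additive chain rule, while $\lambda=\infty$ follows similarly (or by continuity), so I would dispatch these in a sentence rather than in detail. Absolute continuity is not an issue since each $\varepsilon_i<\infty$ already forces the relevant supports to be nested.
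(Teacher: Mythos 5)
The paper does not prove this lemma; it is quoted as a known result from the cited reference, whose own proof (Proposition 1 of Mironov's RDP paper) is exactly the argument you give: reduce to two-fold composition, factor the joint density, bound the inner integral uniformly over the first output $y_1$, and recognize the remaining integral as the R\'enyi divergence of $\mathcal{M}_1$. Your proof is correct, including the observation that adaptivity is handled precisely because the $(\lambda,\varepsilon_2)$-RDP bound for $\mathcal{M}_2$ is assumed uniformly in the auxiliary input, so there is nothing to add.
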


\subsection{PATE and \capc}
\label{sec:PATE}

\textbf{PATE} (Private Aggregation of Teacher Ensembles)~\cite{papernot2017semi} is a \single semi-supervised DP learning approach. First, an ensemble of models is trained on (disjoint) partitions of the training data. Each (teacher) model is then asked to predict (label) a test input by voting for one class. To provide the DP guarantee, only the noisy $\argmax$ of the teacher votes are released---instead of each vote directly. To do so, teacher votes are organized into a histogram where $n_i(x)$ indicates the number of teachers that voted for class $i$. Then,
${\rm argmax}\{n_i(x) + \mathcal{N}(0, \sigma_{G}^2)\}$ is released, where the Gaussian variance $\sigma_{G}^2$ controls the prediction's privacy loss.
Finally, these DP-labeled test inputs are used to train a student model which can be used by the model owner. Note that unlike DPSGD, this requires the additional but modest assumption of unlabeled data.
Here, each noisy label incurs additional privacy loss (rather than gradients in DPSGD). Loose data-independent guarantees are obtained through advanced composition~\cite{dwork2014algorithmic}. Tighter data-dependent guarantees, which can be safely released with additional analysis, are possible when many teachers agree on the predicted label~\cite{papernot2018scalable}.
To further reduce privacy loss, \textit{Confident GNMax} only reveals predictions when $\max_i\{ n_i(x)\} +  \mathcal{N}(0, \sigma_{T}^2) > T$, \ie there is high consensus among teachers.

\textbf{\capc} (Confidential and Private Collaboration) is a distributed collaborative learning framework that extends \pate from the central to the collaborative setting~\cite{capc2021iclr}. In addition to bounding the training data DP privacy loss, \capc also protects the confidentiality of test data and teacher model parameters by introducing new cryptographic primitives. In \capc, teacher models are known as \answeringParties and are distributed across sites, communicating via these cryptographic primitives---we use \answeringParties when referring specifically to \capc and \teacherModels otherwise.

We now clarify the two privacy protection terms we use.
\textit{Confidentiality} refers to the notion that no other party (or adversary) can view, in plaintext, the data of interest---it protects the inputs to our mechanism. This differs from \textit{DP} which reasons about what can be inferred from the outputs of our mechanism, in our case about the training set for each teacher model. Both protect data privacy in different ways. 
In \capc, each answering party is a unique protocol participant with their own (teacher) model. The querying party initiates the protocol by (1) encrypting unlabeled \new{data;} other (answering) parties return an encrypted label. Then, (2) the encrypted vote is secretly shared with both the answering party and the Privacy Guardian (PG), such that each has only one share (neither knows the label in plaintext); the PG follows PATE to achieve an \new{$(\varepsilon,\delta)-DP$} bound for all the answering parties. Finally, (3) the PG returns the DP vote to the querying party via secure computation. The goal is to maximize the number and quality of votes that can be returned to the querying party under a chosen \new{$(\varepsilon,\delta)-$}DP guarantee.


\subsection{Prior work on \Multi Classification}
The closest to our work is that of~\cite{privateknn2020cvpr}, which provides a privacy-preserving nearest neighbors algorithm formulated using PATE. The neighboring training points are the teachers and their labels the votes. Our mechanisms can similarly be used to extend~\cite{privateknn2020cvpr} to the \multi setting. However, their work assumes the existence of a publicly-available embedding model (used to project the high-dimensional data to a low-dimensional manifold for meaningful nearest-neighbor distance computation). This is a strong assumption that is often not applicable to the multi-site scenario we consider \new{and especially for sensitive tasks like healthcare.} 
Our approach does not make this assumption. 
Further, we show in Section~\ref{sec:mainscheme} that their $\ell_1$ vote clipping scheme is not optimal for the commonly used Gaussian distribution. Instead, we prove tighter bounds by clipping votes to an $\ell_2$ ball. 

The multi-label voting problem can also be formulated as simultaneously answering multiple counting queries which has been studied extensively in the theoretical literature, e.g.,~\cite{dagan2020boundednoise}. 
These have yet to show promise empirically in settings such as \multi voting. We explored these approaches in our setup (not shown) finding looser data-independent guarantees compared to our proposed mechanisms.
Another alternative \multi method is differentially private stochastic gradient descent (DPSGD), which is agnostic to the label-encoding because the mechanism releases gradient updates. \citet{abadi2016deep} proposed and evaluated this approach in the \single setting. \citet{AdaptiveDPSGD2021} applied it to the multi-label setting to train \multi models (for $k=5$ labels). 
We compare with~\citep{AdaptiveDPSGD2021} and find that our methods perform better. Many works explore using public labeled datasets to reduce privacy costs, e.g.,~\cite{feifei_2021_CVPR}---this is a stronger assumption than ours.
Finally, we stress that our evaluation includes large-scale healthcare datasets with a natural need for privacy and the additional challenge of being severely imbalanced.

\new{Private \textit{heavy hitters} is a related problem to private multi-label classification, where a set of $n$ users each has an element from a universe of size $d$. The goal is to find the elements that occur (approximately) most frequently, but do so under a local differential privacy constraint~\cite{NIPS2017_heavy_hitters, Zhu2020FederatedHH}. However, these methods release the value of the heavy hitter rather than the argmax.}
\section{Differential Privacy Mechanisms for Multi-Winner Elections} \label{sec:mainscheme}

Multi-winner election systems are direct generalizations of
popularly used single-winner elections. They are adopted commonly in real-life elections and their comprehensive analysis can be found in~\cite{10.5555/3298239.3298313} and~\cite{Faliszewski2017MultiwinnerVA}. Here, we formally define a multi-winner system and propose DP mechanisms to facilitate the private release of the outcome of a multi-winner election.

Compared to single-winner elections, multi-winner elections are more challenging for designing a DP mechanism: single-winner elections only output a single scalar per collection of ballots, indicating the sole winner, whereas multi-winner elections output a
vector of winners and thus have higher sensitivity (use more $\varepsilon$) per query.
DP mechanisms for single-winner voting have been well studied before, where the noisy $\argmax$ achieves tight
privacy loss resulting from tight sensitivity and \textit{information minimization}~\cite{dwork2014algorithmic}. However, to the best of our knowledge, no prior work formally defines and analyzes DP mechanisms for multi-winner election systems; hence, we provide and analyze Definition~\ref{def:multi-winner-vote}
to then get formal
privacy loss bounds in Section~\ref{sec:multi-pate}.

\begin{definition} 
\label{def:multi-winner-vote}
\textbf{$\theta$-multi-winner election.}
Given a set of $n$ voters, each with a ballot of $k$ candidates (coordinates)
${\bm b}\in\{0,1\}^k$ such that $\|{\bm b}\|_2 \leq \theta$. The outcome of a $\theta$-multi-winner election with threshold $T$ is defined as
\begin{equation*}
    f({\bm b}_1,\ldots,{\bm b}_n) \triangleq \left\{i : \sum_{j=1}^n {\bm b}_j[i] > T\right\},
\end{equation*}
\end{definition}
where $T=\frac{n}{2}$ and with the binary decision per label $i$, $\sum_{j=1}^{n} \bm{b}_j[i]$ represents the number of positive votes. Note that this definition is deterministic since no noise is added and setting the threshold $T=\frac{n}{2}$ is equivalent to using the number of negative votes ($n - \sum_{j=1}^{n} \bm{b}_j[i]$) as $T$.

\subsection{DP $\infty$-Multi-Winner Elections}\label{sec:sqrtk}
Observe that in~\Cref{def:multi-winner-vote}, we only require the release of a binary vector. Intuitively, this makes other related private mechanisms that release the entire privatized histogram sub-optimal, \eg multiple counting queries and the Gaussian mechanism---they should require no less (and often much more) budget than mechanisms that only release the binary vector because of information minimization~\cite{dwork2014algorithmic}. 

Thus, we instead focus on private release of the $\infty$-multi-winner-election\footnote{We use $\infty$ to emphasize that any vote $\bm{b} \in \{0, 1\}^k$ is allowed but a tighter bound exists: $\|{\bm b}\|_2 \leq \sqrt{k}<\infty$.}. Because the noisy $\argmax$ is a well-studied mechanism for single-winner elections, 
our first multi-winner approach is to decompose it into $k$ separate single-winner elections. Then, we can obtain a tight final guarantee on the multi-winner ballot via RDP composition over the $k$ independent applications (Lemma~\ref{lemma:rdp-composition}).
We call this \binary voting and state it formally in Definition~\ref{def:binary-voting} below. We again use RDP composition to obtain a total privacy budget across separate queries (instances of the multi-winner elections). 
\begin{definition}[DP Binary Voting Mechanism]\label{def:binary-voting}
For an $\infty$-multi-winner election, the binary voting mechanism based on noisy $\argmax$ is:
    \begin{equation*}
        \mathcal{M}_\sigma({\bm b}_1,\ldots,{\bm b}_n) \triangleq \left\{i : \left( \sum_{j=1}^n {\bm b}_j[i] + \mathcal{N}(0, \sigma^2_G) \right) > T_i \right\},
    \end{equation*}
\end{definition}
where $T_i = n-\sum_{j=1}^{n} \bm{b}_j[i]$ is the number of negative votes. \new{We use this per-candidate threshold $T_i$ instead of a fixed $T=n/2$ because this reduces the privacy cost. However, to preserve privacy, we must add noise to both the positive and negative vote counts (each bin of the histogram). Though we show only the noise added for positive votes, this is equivalent to adding less noise for each label. Note that this definition follows directly from PATE by viewing the $i$-th label release as a binary task of the label being present or not.}

The distinguishing feature of the above \binary mechanism is that it releases a binary result independently, per candidate. Thus, there are many possible \binary voting mechanisms satisfying this goal. We base ours on the noisy $\argmax$ mechanism because of its advantageous privacy guarantees stemming from \textit{information minimization}~\cite{dwork2014algorithmic}: changing the training set to any adjacent one can only modify a single ballot. Thus, this mechanism has advantageous properties both theoretically, and as well empirically as presented in~\cite{papernot2017semi,papernot2018scalable}. 
Next, we show that an independent, per-candidate voting mechanism (e.g., our \binary voting) is optimal in the coordinate-independent setting (Definition~\ref{def:coordindep} below). 
\begin{definition} 
\label{def:coordindep}
A function $f(X):\new{\mathcal{R}^d} \rightarrow \mathcal{R}^k$ is \textit{coordinate-independent} if the $i$'th output coordinate of $f$, $f_i$, is determined only by a unique subset $P_i(X)$ of the input $X$. \new{$P_i$ is a function that partitions (selects) columns of its input $X$ and uniqueness means that that $\forall i,j$ if $i \neq j$ then $P_i \cap P_j = \emptyset$ so no columns are shared by any $P_i$.} 
\end{definition}
We first build an intuition for our guarantee and then provide the formal statement (in Proposition~\ref{prop:binary-pate}). Recall from Section~\ref{sec:background} that the crux of DP analysis is to bound the sensitivity of the random mechanism $\mathcal{M}$, which is the noisy output of the function\footnote{Because the random mechanism differs minimally from the function, we use the two interchangeably.} $f$. 
\begin{definition} \label{def:sensitivity}
Sensitivity is the maximum deviation 
\begin{equation*}
\Delta_p f= \underset{(X,X'): ||X-X'||_1 = 1}{\max} ||f(X)-f(X')||_p 
\end{equation*}
of $f$'s output for any adjacent pair of inputs $X$ and $X'$, i.e., have a Hamming distance $||X-X'||_1$ of $1$, on their rows~\cite{dwork2014algorithmic}.
\end{definition}

Recall $\varepsilon_{\mathcal{M}}(\lambda) \propto\Delta_p \mathcal{M}$; thus, we seek to decompose a \multi function (with range $k$) in terms of the sensitivities of its corresponding $k$ \single functions, i.e., the scenario of \binary voting.
Observe that each output $f_i: i\in [k]$ of a \binary voting mechanism 
depends only on a unique subset of columns
$P_i(X)$ of the input ballots $X$. \new{Without loss of generality, let each output coordinate be binary, i.e., $f_i(X) \in \{0,1\}$, and let} each output depend on only one input column, i.e., let \new{$d=k$} and $P_i(X)=X_i$ such that $f_i(X)=f_i(X_i)$.
Then, $\Delta_p f$ \new{is upper bounded by $k^{1/p}$.
To see this, recall sensitivity is defined as the function's maximum $p$-norm deviation between any (but only) two datasets $X$ and $X'$. Thus, pick $X$ and $X'$ so that each binary output $f_i(\cdot)$ is flipped between predicted (1) and not predicted (0).}
\new{If $f$ is coordinate-independent, then its sensitivity will always be $k^{1/p}$ because $f_i(X)=f_i(X_i)$ and thus $X$ ($X'$) are the horizontal stacks of $X_i$
 ($X'_i$) such that $f_i(X_i)-f_i(X'_i)=1$.
}
\new{In other words, }the sensitivity of the \multi function is the sum of sensitivities of the \single functions that comprise it. 
Because \binary voting is applied independently on each \single function, it thus achieves \new{the} optimal sensitivity in the coordinate-independent setting.
Proposition~\ref{prop:binary-pate} below formally states our result. Appendix~\ref{app:binary-pate} contains supporting lemmas and the proofs. 
\begin{prop}\label{prop:binary-pate}
For a coordinate-independent multi-label 
function 
$f(X): \new{\mathcal{R}^d} \rightarrow \mathcal{R}^k$, the $\ell_1$ sensitivity 
$\Delta_1 f$ is equal to that of \binary voting applied per label and thus simultaneous private release of $f$ has no tighter privacy loss than $k$ applications of \binary voting.
\end{prop}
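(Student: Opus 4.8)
The core of the statement is the sensitivity identity $\Delta_1 f = \sum_{i=1}^{k}\Delta_1 f_i$, where $f_i$ denotes the $i$-th coordinate function of $f$; by the reduction described just before the proposition I will take each $f_i$ to be binary-valued and to depend on a single distinct input column, $f_i(X)=f_i(X_i)$ with $X_i=P_i(X)$ and the $X_i$ pairwise disjoint. I would first prove the upper bound $\Delta_1 f\le\sum_i\Delta_1 f_i$. For any adjacent pair $X,X'$ with $\|X-X'\|_1=1$ (differing in one row), coordinate-independence forces each block $X_i$ to agree with $X_i'$ outside that one row, so the pair $(X_i,X_i')$ is either identical or itself adjacent; hence $|f_i(X)-f_i(X')|=|f_i(X_i)-f_i(X_i')|\le\Delta_1 f_i$, and summing over the $k$ coordinates gives $\|f(X)-f(X')\|_1\le\sum_i\Delta_1 f_i$. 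Maximizing over adjacent pairs yields the bound; since each $f_i$ is $\{0,1\}$-valued this also recovers $\Delta_1 f\le k$.

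For the matching lower bound I would exhibit a single adjacent pair that witnesses all $k$ per-coordinate sensitivities simultaneously. For each $i$ pick, in the domain of $f_i$, a pair $(Y_i,Y_i')$ differing in exactly one row with $|f_i(Y_i)-f_i(Y_i')|=\Delta_1 f_i$. Because a multi-winner tally $\sum_j\bm{b}_j[i]$ is symmetric in the voters, I may relabel rows so that every witness differs in the \emph{same} row index $j_0$; stacking the columns $Y_1,\dots,Y_k$ horizontally into $X$ and $Y_1',\dots,Y_k'$ into $X'$ then produces inputs that differ only in row $j_0$, i.e., $\|X-X'\|_1=1$, while coordinate-independence gives $f_i(X)=f_i(Y_i)$ and $f_i(X')=f_i(Y_i')$. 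Hence $\|f(X)-f(X')\|_1=\sum_i\Delta_1 f_i$, so $\Delta_1 f\ge\sum_i\Delta_1 f_i$, and the two inequalities give the identity. By Definition~\ref{def:binary-voting} the right-hand side is exactly the total $\ell_1$ sensitivity that \binary voting incurs across its $k$ independent per-label invocations.

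It then remains to turn the sensitivity identity into the privacy-loss claim. \binary voting answers the $k$ coordinates independently, each via noisy $\argmax$, which is post-processing of a scalar Gaussian query on a vote count of unit sensitivity; by RDP composition (Lemma~\ref{lemma:rdp-composition}) and Lemma~\ref{lemma:rdp-gaussian} its total RDP is $\sum_{i=1}^{k}\varepsilon_{\mathcal{M}_i}(\lambda)=\sum_i \lambda\Delta_2 f_i^{2}/(2\sigma^2)$. Any mechanism releasing $f$ jointly must be calibrated against $\Delta_1 f=\sum_i\Delta_1 f_i$ (and $\Delta_1 f_i=\Delta_2 f_i$ for scalar, binary-valued $f_i$), so at the same noise scale its privacy loss is no smaller than that of \binary voting; that is, in the coordinate-independent regime joint release cannot improve on the per-label decomposition.

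The step I expect to be the main obstacle is the lower-bound synchronization: rigorously arguing that the $k$ single-coordinate witnesses can be merged into one joint row change. This rests on the product structure of the input (disjoint column blocks, so the row-$j_0$ entries in different blocks may be chosen freely and independently) together with exchangeability of voters; absent exchangeability I would instead build the worst-case single-row perturbation block by block directly from disjointness of the $P_i$. A secondary point to handle carefully is the histogram bookkeeping in Definition~\ref{def:binary-voting} --- the positive/negative vote bins and the per-label thresholds $T_i$ --- which I would dispatch by noting that each per-label release is post-processing of a unit-sensitivity scalar query, keeping $\Delta_2 f_i=1$ and the composition bound clean.
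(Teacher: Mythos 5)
Your proposal is correct and follows essentially the same route as the paper: your lower-bound construction (choosing a worst-case witness for each coordinate and horizontally stacking them into a single adjacent pair, using disjointness of the $P_i$) is precisely the paper's Lemma~\ref{lemma:coordindep}, and your final comparison of $\Delta_1 f=\sum_i\Delta_1 f_i$ with the composed cost of $k$ per-label releases matches the paper's proof of the proposition. The only difference is that you spell out the upper-bound direction and the RDP bookkeeping explicitly, and you flag the row-alignment subtlety that the paper leaves implicit.
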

Proposition~\ref{prop:binary-pate} shows that \binary voting has optimal sensitivity if $f$ is coordinate (candidate) independent. However, 
if the function were coordinate-dependent, \binary voting may be sub-optimal. To see this, consider the case where output coordinates $i$ and $j$ both solely depend on the input coordinate $l$. Then it may not be possible to find a pair of databases that simultaneously flips both outputs $i$ and $j$. For instance, if $\max f_i$ occurs at $X_l = a$ and $\min f_i$ at $X_l = b$, but one of $\max f_j$ or $\min f_j$ does not occur at these values, then now $\Delta_p f_j$ cannot be maximized while simultaneously maximizing $\Delta_p f_i$.
Thus, $\Delta_p f<\sum_i^k \Delta_p f_i$. 
Since the sensitivity may be less, the privacy loss may also be lower. This implies that if the underlying function is coordinate-dependent, \binary voting may now have larger sensitivity and thus a sub-optimal privacy loss.
We will discuss one method for leveraging such correlations in Definition~\ref{def:powerset-pate} and this overall approach in Section~\ref{sec5:limitations}.

We also find that \binary voting may be sub-optimal when the sensitivity is calculated via $p>1$ norms. This is a direct result of H\"older's Inequality~\cite{holder1889ueber}. We explore this in Section~\ref{ssec:tvote} by clipping in the $\ell_2$ norm.

\subsection{DP Mechanisms for $\tau$-Multi-Winner Election}\label{ssec:tvote}
Many real-world ballots may not have many chosen candidates, e.g., in Pascal VOC there are on average fewer than $2$ positive labels out of $20$ \new{, enabling $\tau<\sqrt{k}$}.
Thus, instead of allowing arbitrarily many votes per ballot, the $\tau$-multi-winner election restricts the votes in their $\ell_2$ norm, in turn limiting the sensitivity of the mechanism. In doing so, 
tighter data-independent privacy guarantees are obtained.  
However, this can come at a cost in utility if ballots were cast with more candidates selected than the chosen $\tau$ allows for, because the ballot must now be `clipped'. This is analogous to \textit{norm clipping} in DPSGD~\cite{dp-sgd}, which is commonly studied in DP literature.
To be meaningful, we require $\tau<\sqrt{k}$; otherwise, this is equivalent to \binary voting (Definition~\ref{def:binary-voting}). 

\begin{definition}[DP $\tau$ Voting Mechanism]
\label{def:pate-clip}
\textbf{Mechanism for $\tau$-Multi-Winner Election.} For a $\tau$-multi-winner election the $\tau$ Voting mechanism is
\begin{align*}
\mathcal{M}_{\sigma}(\bm{b}_1,\ldots,\bm{b}_n) & \triangleq \bigg\{i: \bigg(\sum_{j=1}^{n} \bm{v}_j[i] + \mathcal{N}(0,\sigma_G^2) \bigg) > T_i \bigg\}, & \\ \bm{v}_j & \triangleq \min(1, \frac{\tau}{\left\lVert \bm{b}_j\right\rVert_2})\bm{b}_j, &
\end{align*}
\end{definition}

\noindent where we choose $T_i = n-\sum_{j=1}^{n} \bm{v}_j[i]$, which represents the number of \textit{clipped} negative votes. The clipping itself was done on the positive votes (through the $v_j$'s). To obtain the number of negative votes, we subtract the number of clipped positive votes from the total number of votes (in this case $n$) and thus call these \textit{clipped} negative votes. To analyze the privacy loss of this new mechanism, we propose an extension of Lemma~\ref{lemma:rdp-gaussian} for this multi-voting task.
\begin{lemma}
\label{lemma:rdp-gaussian-multilabel}
\textbf{RDP-Gaussian mechanism for a \multi setting}. Let $f:\mathcal{X} \rightarrow \mathcal{R}^{k}$ obey $\left\lVert f(X) - f(X^\prime) \right\rVert_2 \le \Delta_2$ for neighboring datasets $X, X^\prime$. The Gaussian mechanism $\mathcal{M}(X) = f(X) + \mathcal{N}_{k}(0,\sigma^2I)$ obeys RDP with $\varepsilon_{\mathcal{M}}(\lambda) = \frac{\lambda \Delta_2^2}{2 \sigma^2}$.
\end{lemma}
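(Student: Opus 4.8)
The plan is to reduce the $k$-dimensional claim to the scalar statement already recorded in Lemma~\ref{lemma:rdp-gaussian} by exploiting the rotational symmetry of the isotropic Gaussian $\mathcal{N}_k(0,\sigma^2 I)$. Fix any pair of neighboring datasets $X, X'$ and set $\Delta = f(X) - f(X') \in \mathbb{R}^k$, so that $\|\Delta\|_2 \le \Delta_2$ by hypothesis. The two output distributions are $P = \mathcal{N}_k(f(X),\sigma^2 I)$ and $Q = \mathcal{N}_k(f(X'),\sigma^2 I)$, which differ only by the translation $\Delta$ of their common mean; if $\Delta = 0$ the Rényi divergence is $0$ and there is nothing to prove, so assume $\Delta \neq 0$.

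Next I would choose an orthogonal matrix $R$ whose first row is $\Delta/\|\Delta\|_2$. Since $z \mapsto Rz$ pushes $\mathcal{N}_k(\mu,\sigma^2 I)$ to $\mathcal{N}_k(R\mu,\sigma^2 I)$ and Rényi divergence is invariant under invertible reparametrizations, $D_\lambda(P\,\|\,Q) = D_\lambda(\mathcal{N}_k(Rf(X),\sigma^2 I)\,\|\,\mathcal{N}_k(Rf(X'),\sigma^2 I))$, and after this rotation the two means differ only in the first coordinate, by exactly $\|\Delta\|_2$. Both measures are product measures over the $k$ coordinates with identical marginals in coordinates $2,\dots,k$, so the Rényi divergence factorizes as a sum over coordinates, $D_\lambda(\prod_i P_i \,\|\, \prod_i Q_i) = \sum_i D_\lambda(P_i\,\|\,Q_i)$, and every term past the first vanishes. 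What remains is the one-dimensional quantity $D_\lambda(\mathcal{N}(0,\sigma^2)\,\|\,\mathcal{N}(\|\Delta\|_2,\sigma^2))$, which by the computation underlying Lemma~\ref{lemma:rdp-gaussian} equals $\frac{\lambda\|\Delta\|_2^2}{2\sigma^2}$.

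Finally I would take the supremum over neighboring $X, X'$: since $\|\Delta\|_2 \le \Delta_2$ for every such pair and $\frac{\lambda\|\Delta\|_2^2}{2\sigma^2}$ is increasing in $\|\Delta\|_2$, we get $\varepsilon_{\mathcal{M}}(\lambda) = \sup_{X \sim X'} D_\lambda(\mathcal{M}(X)\,\|\,\mathcal{M}(X')) \le \frac{\lambda\Delta_2^2}{2\sigma^2}$, with equality when the sensitivity bound is tight; note that no data-dependent quantity enters, so the guarantee is data-independent exactly as in the single-label case. Honestly there is no serious obstacle here — the result is essentially the scalar Gaussian-mechanism lemma in disguise; the only thing to keep clean is the rotation-plus-factorization bookkeeping and the degenerate case $\Delta = 0$. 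As a remark I would note the alternative of computing $D_\lambda$ between the two Gaussians head-on: for a common covariance $\sigma^2 I$ and means $\mu_1,\mu_2$ one completes the square inside $\mathbb{E}_{z\sim P}[(dP/dQ)^{\lambda-1}]$, recognizes the remaining integral as that of an unnormalized Gaussian density, and obtains $D_\lambda(\mathcal{N}_k(\mu_1,\sigma^2 I)\,\|\,\mathcal{N}_k(\mu_2,\sigma^2 I)) = \frac{\lambda}{2\sigma^2}\|\mu_1-\mu_2\|_2^2$ directly; the rotation argument is preferable only because it quarantines all the algebra inside the already-established scalar lemma.
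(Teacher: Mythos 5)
Your proof is correct, but your primary route differs from the paper's. The paper proves Lemma~\ref{lemma:rdp-gaussian-multilabel} by direct computation, mirroring Proposition~7 of Mironov: it writes out $D_\lambda\bigl(\mathcal{N}_k(f(X),\sigma^2 I)\,\|\,\mathcal{N}_k(f(X'),\sigma^2 I)\bigr)$ as a $k$-dimensional integral, combines the two exponentials, completes the square so the integral collapses to the Gaussian normalization constant, and reads off $\frac{\lambda\lVert f(X)-f(X')\rVert_2^2}{2\sigma^2}\le\frac{\lambda\Delta_2^2}{2\sigma^2}$ --- essentially the ``head-on'' computation you relegate to a closing remark. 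You instead rotate so that the mean shift lies along the first coordinate, use invariance of R\'enyi divergence under the orthogonal reparametrization and its additivity over product measures, and thereby quarantine all algebra inside the already-stated scalar Lemma~\ref{lemma:rdp-gaussian}. Both arguments are sound; yours makes the reduction to the \single case conceptually transparent and reuses the cited lemma as a black box, while the paper's direct integral is self-contained and is the form readers of the RDP literature will expect. The factorization identity $D_\lambda(\prod_i P_i\,\|\,\prod_i Q_i)=\sum_i D_\lambda(P_i\,\|\,Q_i)$ and the handling of the degenerate case $\Delta=0$ are both standard and correctly invoked, so there is no gap --- only a difference in packaging.
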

The following proof is similar to the one for Proposition 7 in~\citep{mironov2017renyi} (the crucial steps are 6 and 7 to transform the integral and obtain the Gauss error function):
\begin{proof}
\begin{align}
&D_{\lambda} (\mathcal{M}(X) || \mathcal{M}(X'))\\
&=D_{\lambda} (\mathcal{N}_k(f(X), \sigma^2I) || \mathcal{N}_k(f(X'), \sigma^2I)) \\
&=\frac{1}{\lambda-1} \log \bigg\{ \frac{1}{\sigma^k\sqrt{(2\pi)^k}} \int_{-\infty}^{\infty} \exp \bigg( \frac{-\lambda}{2\sigma^2} \left\lVert \theta - f(X) \right\rVert_2^2 \bigg)  \\
& \cdot \exp \bigg( \frac{-(1-\lambda)}{2\sigma^2} \left\lVert \theta - f(X') \right\rVert_2^2 \bigg) d\theta \bigg \} \\
&=\frac{1}{\lambda-1} \log \bigg\{ \frac{1}{\sigma^k\sqrt{(2\pi)^k}} \int_{-\infty}^{\infty} \\
&\exp \bigg( \frac{-\lambda \left\lVert \theta - f(X) \right\rVert_2^2 -(1-\lambda)\left\lVert \theta - f(X') \right\rVert_2^2}{2\sigma^2} \bigg) d\theta \bigg \} \\
&=\frac{1}{\lambda-1} \log \bigg\{ \frac{\sigma^k\sqrt{(2\pi)^k}}{\sigma^k\sqrt{(2\pi)^k}} \exp \frac{(\lambda-1)\lambda}{2\sigma^2} \left\lVert f(X) - f(X') \right\rVert_2^2 \bigg\} \\
&= \frac{\lambda \left\lVert f(X) - f(X') \right\rVert_2^2}{2\sigma^2} \le \frac{\lambda \Delta_2^2}{2 \sigma^2}
\end{align}
\end{proof}

\subsubsection{Norms for Clipping}
\label{sec:norms-clipping-appendix}
Recall that RDP provides tight privacy analysis for the Gaussian mechanism and that it analyzes sensitivity using the $\ell_2$ norm. This motivates our choice of $\ell_2$ vote clipping, because it exactly bounds the mechanism's sensitivity. Compared with the $\ell_1$ clipping proposed in
~\cite{privateknn2020cvpr}
our bound is no larger, and often much \textit{smaller}, than theirs.
We can see this by comparing the privacy loss in terms of the mechanism's sensitivity (see Definition~\ref{def:sensitivity}).
\begin{align}
\epsilon_{\mathcal{M}}(\lambda) =
\frac{\lambda \tau_2^2}{\sigma^2} = \frac{\lambda \Delta_2^2}{2\sigma^2} \le \frac{\lambda \Delta_1^2}{2\sigma^2} = \frac{\lambda (2\tau_1)^2}{2\sigma^2} = \frac{\lambda 2\tau_1^2}{\sigma^2}
\end{align}
where in the first inequality we use the fact that $\left\lVert x \right\rVert_2 \le \left\lVert x \right\rVert_1$. Empirically, we confirm that $\tau$ voting achieves a much tighter bound on many \multi datasets. Take $\tau_1$ to represent clipping in the $\ell_1$ norm and similarly $\tau_2$ to represent clipping in the $\ell_2$ norm. On Pascal VOC with $k=20$ labels, we find that we can set a much smaller $\tau_2=1.8$ but only a $\tau_1=3.4$ without deteriorating the mechanism's
balanced accuracy.
This leads to a $>\hspace{-1mm}6$x tighter (data-independent) privacy loss with the $\ell_2$ norm clipping compared to~\cite{privateknn2020cvpr} and \binary voting.
In practice, we select $\tau$ to be marginally larger than the average p-norm of labels in the training data. 

\subsection{Casting $\infty$-Multi-Winner Elections as Single-Winner Elections}
Similar to Section~\ref{sec:sqrtk}, we again explore how single-winner elections can be used to reveal outcomes of multi-winner elections. Instead, here we remove the independence assumption and cast the multi-winner election to one single-winner election (not $k$ independent ones). This improves on one of the major shortcomings of \binary voting: by simultaneously revealing all $k$ candidate's results in one outcome, we can leverage correlations in their results to reduce the privacy loss.

We achieve this new mechanism by encoding each of the $2^k$ different outcomes as a separate candidate for a single-winner voting mechanism. We call this mechanism the \powerset voting mechanism. 
\begin{definition}[DP Powerset Mechanism]\label{def:powerset-voting}
Denote the \powerset operator as $\mathcal{P(\cdot)}$. For a $\infty$-multi-winner election with ordered outcomes $\mathcal{P}(\{0,1\}^k)$, where $|\mathcal{P}(\{0,1\}^k)| = 2^k$, 
\begin{flalign*}
    & \mathcal{M}_\sigma (\bm{b}_1,\ldots,\bm{b}_n) \in \{0,1\}^k & \\ & \triangleq \argmax_i \left\{\mathcal{P}(\{0,1\}^k)_i: \sum_{j=1}^n \mathbbm{1}_{\bm{b}_j = \mathcal{P}(\{0,1\}^k)_i} + \mathcal{N}(0, \sigma_G^2) \right\}
\end{flalign*}
\end{definition}
\noindent where $\mathbbm{1}$ represents the indicator function. Importantly, this mechanism only approximates the true Multi-Winner solution because it can only reveal an outcome that was cast as a ballot. If the set of true plurality candidates had no associated ballot, then this result cannot be returned. However, with enough ballots, the resulting utility degradation is reduced.

\section{From Private Elections to Private Multi-Label Classification}
\label{sec:multi-pate}
Single-winner elections, and their DP mechanisms, have played an important role in privacy-preserving machine learning. Notably, PATE adopted the noisy $\argmax$ mechanism to enable training of \single models via semi-supervised knowledge transfer. By replacing the noisy $\argmax$ mechanism with our proposed DP multi-winner mechanisms, we thus create new \multi PATE systems for private \multi semi-supervised learning.


\textbf{Binary PATE, $\tau$ PATE, and Powerset PATE: }
We refer readers to Section~\ref{sec:PATE} for a detailed description of \single PATE. Multi-label PATE is nearly unchanged from a systems perspective because we only replace the DP mechanisms used. Notably, both forms of PATE leverage a bank of unlabeled data which is used for the semi-supervised knowledge transfer; this data is often widely available since it is unlabeled~\cite{papernot2017semi}. Each query uses some $\varepsilon-DP$ which, once the total composed RDP budget is exhausted, terminates the labelling process. Our mechanisms are used to label and calculate the per-query $\varepsilon$. These $m$ privately labeled data points are then used to train the student model which can be released. 
All of our mechanisms are compatible with propose-test-release~\cite{dwork2009differential} (used in Confident GNMax) which we leverage to reduce the privacy budget usage. We show the full $\tau$-PATE in~\Cref{alg:extended-pate}.

\begin{algorithm}
  \caption{\textbf{$\tau$ PATE}: \Multi classification with $\tau$-clipping in $\ell_2$-norm and with the Confident GNMax.}\label{alg:extended-pate}
  
  \algorithmicrequire{Data point $x$, clipping threshold $\tau_2$, Gaussian noise scale $\sigma_G$, Gaussian noise scale for Confident GNMax $\sigma_T$, $n$ teachers, each with model $f_j(x)\in \{0,1\}^k$, where $j \in [n]$.}
  
  \algorithmicensure{Aggregated vector $V$ with $V_i=1$ if returned label/feature present, otherwise $V_i=0$.} 

  \begin{algorithmic}[1]
      \ForAll{teachers $j \in [n]$}
         \State $v_j \gets \min(1, \frac{\tau_2}{\left\lVert f_j(x) \right\rVert_2}) f_j(x)$\Comment{$\tau$-clipping in $\ell_2$ norm}
      \EndFor
      \State $V^1 = \sum_{j=1}^{n} v_j$ \Comment{Number of positive votes per label}
      \State $V^0 = n - V^1$ \Comment{Number of negative votes per label}

      \ForAll{labels $i \in [k]$}
         \If {$\max\{V^0_i,V^1_i\} + \mathcal{N}(0,\sigma_T) < T$}
            \State $V_i = \perp$ \Comment{Confident-GNMax}
         \Else
             \State $V^0_i \gets V^0_i +  \mathcal{N}(0,\new{0.5\sigma^2_G)}$ \Comment{Add Gaussian noise}
             \State $V^1_i \gets V^1_i +  \mathcal{N}(0,\new{0.5\sigma^2_G})$ \Comment{for privacy protection}
             \If {$V^1_i > V^0_i$} \Comment{Decide on the output vote}
                \State $V_i = 1$
             \Else
                \State $V_i = 0$
             \EndIf
         \EndIf
      \EndFor
  \end{algorithmic}
\end{algorithm}
\vspace{-1em}

\section{Analysis of Multi-Label PATE}
We now analyze the privacy guarantees of our mechanisms within \multi PATE. Importantly, we have designed our mechanisms to be compatible with the data-dependent analysis of~\cite{papernot2018scalable} (by making them extensions of the noisy $argmax$) because it is often tighter than the data-independent guarantee which we discuss below.

\subsection{Data-Independent and Data-Dependent Analysis}\label{ssec:independent-analysis}
\textbf{Data Independent Analysis:}We first compare our data-independent guarantees with respect to the noisy $\argmax$ mechanism in a \single setting, for one query. When it satisfies a $\varepsilon_{\mathcal{M}}(\lambda)$-RDP, our \binary voting achieves a data-independent bound of $\sum_{i=1}^k\varepsilon_{\mathcal{M}}(\lambda)_i$, $\tau$-voting of $\frac{\lambda\tau^2}{\sigma^2}$, and \powerset voting of $\varepsilon_{\mathcal{M}}(\lambda)$. 

\textbf{Data-Dependent Analysis:} However, tighter privacy budgets may be reported when there is high empirically observed consensus on the binary vector, \ie there is a high likelihood for the returned outcome.
The tight data-dependent analysis of~\cite{papernot2018scalable} bounds the privacy loss in terms of the probability of the returned \single outcome, calculated from the histogram of teacher votes.
%
This data-dependent privacy loss is often much lower than the data-independent bounds above---even after the required sanitation before release. This sanitation is achieved by smooth sensitivity analysis of the mechanism to protect against information leakage resulting from the release of the budget---because its particular depends on the data itself. We follow the steps and additionally report the expected sanitized privacy budgets.

Because each of our \multi mechanisms view the votes in different manners, the resulting data-dependent privacy budgets can vary significantly. Below, we compare the data-dependent guarantees for \binary and \powerset PATE, where we empirically observe that $\tau$ PATE only marginally differs from \binary PATE resulting from how $\ell_2$ clipping impacts the vote distributions.

\label{sec:powerset-binary-appendix}
\subsection{Comparing Binary and Powerset PATE}
We begin with preliminaries on our \binary and \powerset mechanisms, and the data-dependent DP analysis of~\cite{papernot2018scalable}.

Recall that \binary PATE has $2\cdot k$ events: the presence or absence independently for each of the $k$ labels. Instead, \powerset PATE operates on the entire binary vector, where each configuration (subset) of this vector is an event---thus, there are $2^k$ events. This leads to three concrete differences of \powerset PATE: (1) it has exponentially more possible events, (2) when \new{any} labels \new{$i\in [k]$} occur simultaneously, it rapidly improves the privacy loss, and
(3) it only adds noise once to the entire binary vector. These directly impact the calculated data-dependent bounds.

The data-dependent privacy bound is a function of the probability distribution between events, estimated using the observed votes as follows. Calculate the difference between the two most likely events (label presence/absence in \binary voting, binary vector subsets in \powerset voting).
The gap is this difference scaled by the inverse of the noise standard deviation, which bounds the probability of any non-plurality event occurring.
To provide a final guarantee for the mechanism, as in \single PATE, we upper bound the Renyi divergence using Theorem~\ref{thm:pate-rdp-bound} below. The main random variable influencing this privacy loss is the gap, $q(\bar{n})$, which we calculate using Proposition~\ref{prop:gap} below. Theorem~\ref{thm:pate-rdp-bound} shows how to select ideal higher order moments $\mu_1,\mu_2$ and $\varepsilon_1,\varepsilon_2$, using the data-dependent value for $q(\bar{n})$ so that the RDP bound is a nonlinear function of $q(\bar{n})$ solely.

\begin{prop}[From~\cite{papernot2018scalable}]\label{prop:gap}
For a GNMax aggregator $\mathcal{M}_\sigma$, the teachers' votes histogram $\bar{n}=(n_1,\cdots,n_k)$, dataset $X$, and for any $i^*\in \mathcal{P}(\{0,1\}^k)$, where $\mathcal{P(\cdot)}$ denotes the \powerset, we have 
\begin{equation*}
    \mathbf{Pr}[\mathcal{M}_\sigma(X) \neq i^*] \leq q(\bar{n}),
\end{equation*}
and
\begin{equation*}
    q(\bar{n}) \triangleq \frac{1}{2} \sum_{i\neq i^*} erfc\left(\frac{n_i^*-n_i}{2\sigma}\right)
\end{equation*}
Where this is a minimal modification of~\cite{papernot2018scalable} to go from \single PATE to \multi \powerset PATE.
\end{prop}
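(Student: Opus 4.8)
The plan is to recognize that the Powerset GNMax aggregator is nothing more than the single-winner noisy-$\argmax$ mechanism run over the $2^k$ ``candidates'' indexed by $\mathcal{P}(\{0,1\}^k)$ instead of the $k$ class labels, and then to reuse the union-bound-plus-Gaussian-tail argument underlying the data-dependent analysis of~\cite{papernot2018scalable}. Concretely, I would first unwind the definition of $\mathcal{M}_\sigma$: the event $\{\mathcal{M}_\sigma(X)\neq i^*\}$ occurs exactly when, after adding independent $\mathcal{N}(0,\sigma^2)$ noise $Z_i$ to each histogram bin $n_i$, the argmax is attained at some $i\neq i^*$. Up to a tie event of probability zero (which can be absorbed by consistently choosing $\ge$ or $>$ in the tie-breaking rule), this event is contained in $\bigcup_{i\neq i^*}\{\, n_i+Z_i \ge n_{i^*}+Z_{i^*}\,\}$.

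Second, I would apply a union bound over $i\neq i^*$ and rewrite each term as $\Pr[\,n_i+Z_i\ge n_{i^*}+Z_{i^*}\,]=\Pr[\,Z_i-Z_{i^*}\ge n_{i^*}-n_i\,]$. Since $Z_i$ and $Z_{i^*}$ are independent $\mathcal{N}(0,\sigma^2)$, their difference is $\mathcal{N}(0,2\sigma^2)$; writing $Z_i-Z_{i^*}=\sqrt{2}\,\sigma\,W$ with $W\sim\mathcal{N}(0,1)$ and using the standard identity $\Pr[W\ge x]=\tfrac12\,\mathrm{erfc}(x/\sqrt2)$ with $x=(n_{i^*}-n_i)/(\sqrt2\,\sigma)$ gives $\Pr[\,n_i+Z_i\ge n_{i^*}+Z_{i^*}\,]=\tfrac12\,\mathrm{erfc}\!\big((n_{i^*}-n_i)/(2\sigma)\big)$. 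Summing over $i\neq i^*$ yields exactly $q(\bar n)$, which closes the argument.

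The only genuinely delicate point---and essentially the whole content of the ``minimal modification'' relative to~\cite{papernot2018scalable}---is checking that the re-indexing over the power set is legitimate: the single-winner tail bound never uses that the histogram bins correspond to class labels, only that the mechanism is a noisy argmax over a fixed finite candidate set with i.i.d.\ Gaussian perturbations, so the derivation transfers verbatim once we relabel the $2^k$ subsets as the candidate set. Two bookkeeping caveats I would flag in the write-up: (i) the variance used here is the per-bin $\sigma^2$ that the Powerset aggregator adds once to the whole binary vector, which differs from the split-noise bookkeeping of the Binary and $\tau$ variants in Algorithm~\ref{alg:extended-pate}; and (ii) $i^*$ is an \emph{arbitrary} reference outcome, not necessarily the empirical plurality bin, so $q(\bar n)$ is a meaningful (sub-$1$) bound only in the regime $n_{i^*}\ge n_i$ for every $i\neq i^*$, which is the regime relevant to the subsequent Rényi-divergence bound.
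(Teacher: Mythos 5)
Your derivation is correct and is essentially the argument behind the cited result: the paper itself offers no proof of Proposition~\ref{prop:gap}, deferring entirely to Proposition~7 of~\cite{papernot2018scalable}, and your union bound over $\{n_i+Z_i \ge n_{i^*}+Z_{i^*}\}$ followed by the $\mathcal{N}(0,2\sigma^2)$ tail computation $\Pr[Z_i - Z_{i^*} \ge n_{i^*}-n_i] = \tfrac12\,\mathrm{erfc}\bigl((n_{i^*}-n_i)/(2\sigma)\bigr)$ reproduces that proof exactly. You also correctly isolate the entire content of the paper's ``minimal modification'': the argument is agnostic to what the bins index, so relabeling the candidate set from the $k$ classes to the $2^k$ elements of $\mathcal{P}(\{0,1\}^k)$ is the only change, and your two caveats (single noise draw per binary vector, and $i^*$ being arbitrary so the bound is only informative when $i^*$ is the plurality bin) are accurate and worth keeping.
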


\begin{theorem}[From~\cite{papernot2018scalable}]\label{thm:pate-rdp-bound}
Let $\mathcal{M}$ be a randomized algorithm with $(\mu_1, \varepsilon_1)$-RDP and $(\mu_2, \varepsilon_2)$-RDP guarantees and suppose that there exists a likely outcome $i$ given a dataset $X$ and bound $\Tilde{q} \leq 1$ such that $\Tilde{q} \geq \Pr[\mathcal{M}(X) \neq i]$. Additionally suppose the $\lambda \leq \mu_1$ and $\Tilde{q} \leq exp^{(\mu_2-1)\varepsilon_2}\big(\frac{\mu_1}{\mu_1-1}\cdot \frac{\mu_2}{\mu_2-1}\big)^{\mu_2}$. Then for any neighboring dataset $X'$ of $X$, we have:
\begin{align*}
& D_\lambda(\mathcal{M}(X) || \mathcal{M}(X')) \leq & \\
& \frac{1}{\lambda-1}\log\big((1-\Tilde{q})\cdot\mathbf{A}(\Tilde{q}, \mu_2, \varepsilon_2)^{\lambda-1} + \Tilde{q}\cdot \mathbf{B}(\Tilde{q}, \mu_1, \varepsilon_1)^{\lambda-1}\big) &
\end{align*}
where $\mathbf{A}(\Tilde{q}, \mu_2, \varepsilon_2) \overset{\Delta}{=} (1-\Tilde{q})/\bigg(1-(\Tilde{q}e^{\varepsilon_2})^{\frac{\mu_2-1}{\mu_2}}\bigg)$ and $\mathbf{B}(\Tilde{q}, \mu_1, \varepsilon_1) \overset{\Delta}{=} exp^{\varepsilon_1}/\Tilde{q}^{\frac{1}{\mu_1-1}}$.
\end{theorem}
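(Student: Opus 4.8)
The plan is to unfold the R\'enyi divergence into its moment form and then bound the moment by isolating the likely outcome $i$. Write $p_o=\Pr[\mathcal{M}(X)=o]$, $p_o'=\Pr[\mathcal{M}(X')=o]$, and let $q=\Pr[\mathcal{M}(X)\neq i]$, so $q\le\tilde q$. By Definition~\ref{renyi_differential_privacy} it suffices to upper bound $\mathbb{E}_{o\sim\mathcal{M}(X)}\big[(p_o/p_o')^{\lambda-1}\big]=\tfrac{p_i^{\lambda}}{p_i'^{\lambda-1}}+\sum_{o\neq i}\tfrac{p_o^{\lambda}}{p_o'^{\lambda-1}}$, where I have used $p_i=1-q$ and $\sum_{o\neq i}p_o=q$. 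The whole argument is then to bound each of these two pieces using one of the two RDP guarantees, obtain a function $g(q)$, and finally replace $q$ by $\tilde q$.

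First I would handle the tail sum $\sum_{o\neq i}p_o^{\lambda}/p_o'^{\lambda-1}$. The key trick is the factorization $\tfrac{p_o^{\lambda}}{p_o'^{\lambda-1}}=p_o^{\frac{\mu_1-\lambda}{\mu_1-1}}\big(\tfrac{p_o^{\mu_1}}{p_o'^{\mu_1-1}}\big)^{\frac{\lambda-1}{\mu_1-1}}$, to which I apply H\"older's inequality with the conjugate exponents $\tfrac{\mu_1-1}{\mu_1-\lambda}$ and $\tfrac{\mu_1-1}{\lambda-1}$ (legitimate because $1<\lambda\le\mu_1$). This gives $\sum_{o\neq i}\tfrac{p_o^{\lambda}}{p_o'^{\lambda-1}}\le q^{\frac{\mu_1-\lambda}{\mu_1-1}}\big(\sum_{o}\tfrac{p_o^{\mu_1}}{p_o'^{\mu_1-1}}\big)^{\frac{\lambda-1}{\mu_1-1}}\le q^{\frac{\mu_1-\lambda}{\mu_1-1}}e^{(\lambda-1)\varepsilon_1}$, where the last inequality is just the $(\mu_1,\varepsilon_1)$-RDP hypothesis rewritten via Definition~\ref{renyi_differential_privacy}. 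Substituting $q=\tilde q$ this is exactly $\tilde q\,\mathbf{B}(\tilde q,\mu_1,\varepsilon_1)^{\lambda-1}$.

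Next I would handle the $o=i$ term, whose numerator is $(1-q)^{\lambda}$; what is needed is a lower bound on $p_i'=\Pr[\mathcal{M}(X')=i]$. For this I would invoke the probability-preservation property of R\'enyi divergence (obtained from the data-processing inequality applied to the indicator of $\{o\neq i\}$ together with the $(\mu_2,\varepsilon_2)$-RDP guarantee, with the roles of $X$ and $X'$ interchanged): this yields $\Pr[\mathcal{M}(X')\neq i]\le(q e^{\varepsilon_2})^{\frac{\mu_2-1}{\mu_2}}$ and hence $\tfrac{p_i^{\lambda}}{p_i'^{\lambda-1}}\le (1-q)^{\lambda}/\big(1-(q e^{\varepsilon_2})^{\frac{\mu_2-1}{\mu_2}}\big)^{\lambda-1}$, which at $q=\tilde q$ is exactly $(1-\tilde q)\,\mathbf{A}(\tilde q,\mu_2,\varepsilon_2)^{\lambda-1}$. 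Adding the two pieces produces a function $g(q)$ that equals the claimed right-hand side of the theorem at $q=\tilde q$.

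The last step --- and the one I expect to be the main obstacle --- is to pass from the true (unknown) $q$ to the bound $\tilde q$, i.e.\ to show $g(q)\le g(\tilde q)$ for all $q\in[0,\tilde q]$. The tail contribution $q^{\frac{\mu_1-\lambda}{\mu_1-1}}e^{(\lambda-1)\varepsilon_1}$ is manifestly nondecreasing (its exponent is $\ge 0$), but the $o=i$ contribution trades a shrinking numerator against a shrinking denominator, and it is precisely here that the remaining technical hypotheses of the theorem --- $\lambda\le\mu_1$, positivity of $1-(\tilde q e^{\varepsilon_2})^{\frac{\mu_2-1}{\mu_2}}$, and the stated bound on $\tilde q$ --- are used to pin down an interval on which $g'\ge 0$. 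Once monotonicity is in hand, $D_\lambda(\mathcal{M}(X)\,||\,\mathcal{M}(X'))\le\frac{1}{\lambda-1}\log g(\tilde q)$ is exactly the asserted inequality. The passage from the single-winner statement of~\cite{papernot2018scalable} to our multi-winner mechanisms is purely notational: $i$ now ranges over $\mathcal{P}(\{0,1\}^k)$ for Powerset voting (or over presence/absence per label for Binary voting), and Proposition~\ref{prop:gap} supplies the concrete value $\tilde q=q(\bar n)$.
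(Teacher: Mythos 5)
The paper itself offers no proof of this theorem---it is imported verbatim (as Theorem 6) from~\cite{papernot2018scalable}---so the only meaningful comparison is with the original source, and your reconstruction follows that proof essentially step for step: the split of the R\'enyi moment into the $o=i$ term and the tail, H\"older's inequality with conjugate exponents $\tfrac{\mu_1-1}{\mu_1-\lambda}$ and $\tfrac{\mu_1-1}{\lambda-1}$ combined with the $(\mu_1,\varepsilon_1)$-RDP guarantee for the tail, and probability preservation under the $(\mu_2,\varepsilon_2)$-RDP guarantee (in the $X'\to X$ direction) to lower-bound $p_i'$. The one step you state but do not carry out is the monotonicity of $g$ on $[0,\tilde q]$, which is exactly where the hypothesis on $\tilde q$ (note the paper's transcription appears to have dropped a division---the original condition reads $\tilde q \le e^{(\mu_2-1)\varepsilon_2}\big/\big(\tfrac{\mu_1}{\mu_1-1}\cdot\tfrac{\mu_2}{\mu_2-1}\big)^{\mu_2}$, without which the stated bound exceeds $1$ and is vacuous) is consumed; you correctly identify this as the remaining technical burden, and it is handled in~\cite{papernot2018scalable} by a separate derivative computation.
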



When the top $3$ vote counts, $n_1 > n_2 > n_3$ satisfy $n_1 - n_2, n_2 - n_3 \gg \sigma$ (the gap is sufficiently large), the RDP bound can be well approximated as 
\begin{equation}
\varepsilon^{powerset}_{\mathcal{M}_\sigma}(\lambda) \leq exp(-2\lambda/\sigma^2)/\lambda, \textit{where} \, \lambda = (n_1-n_2)/4,
\label{eq:simple-rdp}
\end{equation}
which is from~\cite[Corollary 11]{papernot2018scalable}. Though $q(\bar{n})$ is the formal gap, we will often refer to $\tilde{q}(\bar{n})=n_1 - n_2$ as the gap too when these assumptions are satisfied.
We plot the cdf of the gaps across votes in CheXpert and Pascal VOC in Figure~\ref{fig:cdf-gaps} of Appendix~\ref{app:additional-experiments}, finding that \binary PATE is indeed in this regime. Assuming \powerset PATE is too,
we analyze under what circumstances it outperforms 
\binary PATE. To do this, we will analyze the respective data-dependent bounds under the expected gap $\mathrm{E}[\tilde{q}(\bar{n})]$.


\subsection{Data-Dependent Binary PATE Analysis}
We aim to specify $\mathrm{E}[\new{\varepsilon_{binary}}]$ for \binary PATE in terms of the probability of the most likely outcome, denoted $p$. By using the same $p$ for \powerset PATE, we can compare how the privacy budgets vary with respect to $p$.

We model
each teacher's independent label prediction as a Bernoulli trial of probability $p_i$, $i \in [k]$. With $t$ independent trials, we can model each gap as a binomial distribution of probability $p_i$. To simplify our exposition we assume $\forall i,p_i=p$ and will later discuss how this impacts our analysis. Using composition, we get an expected privacy loss of
\begin{align*}
   \mathrm{E}[\varepsilon_{binary}] &= k\cdot\sum_{\tilde{q}=\tilde{q}'}^{\tilde{q}=t}\left( P(\tilde{q}=\tilde{q}') \frac{4}{\tilde{q}}exp\left(\frac{-\tilde{q}}{2\cdot\sigma^2}\right) \right)\\
   &= k\cdot\sum_{\tilde{q}=\tilde{q}'}^{\tilde{q}=t}\left( {t\choose \tilde{q}}p^{\tilde{q}}(1-p)^{1-\tilde{q}} \frac{4}{\tilde{q}}exp\left(\frac{-\tilde{q}}{2\cdot\sigma^2}\right) \right)\\
\end{align*}

\vspace{-1em}
Due to the assumptions of~\Eqref{eq:simple-rdp}, we require that low gap events ($\tilde{q}' \to 0$) have sufficiently low probability so that they can be ignored. Figure~\ref{fig:cdf-gaps} of Appendix~\ref{app:additional-experiments} provides justification for this.

\subsection{Data-Dependent \powerset PATE}
We now estimate how $\mathrm{E}[\tilde{q}(\bar{n})]$ changes with changes in $p$ for \powerset PATE---\new{this directly impacts $\mathrm{E}[\varepsilon^{powerset}_{\mathcal{M}_\sigma}]$}. This is more involved because it operates over the entire binary vector. If each label has probability $p_i=p$ of being present, then we can express the probability (denoted $P$, note the upper case) of an outcome (binary vector subset) as the union of these probabilities. Though exact, this is infeasible to calculate and analyze for all $2^k$ subsets when $k$ is large. This is one reason we estimate only $\tilde{q}(\bar{n})$ rather than $q(\bar{n})$--- we need only estimate $n_1$ and $n_2$.

We define the \powerset  mechanism in Definition~\ref{def:powerset-pate} of Appendix~\ref{app:binary-pate}, which is similar to \powerset Voting (Definition~\ref{def:powerset-voting}) except that we use standard PATE notation. Here, our histogram $n$ counts subsets with $n_i(x)$ representing the vote count for the $i-$th subset.

We sort the votes $n_i(x)$ such that $n_i \geq n_j \forall i > j$.
We seek to upper bound our best-case expected privacy loss for \powerset PATE. Using \Eqref{eq:simple-rdp}, we get
\begin{equation*}
    \mathrm{E}[\varepsilon^{powerset}_{\mathcal{M}_\sigma}(\new{\tilde{q}})] = \int \new{\frac{4}{\tilde{q}}} exp(\frac{-\tilde{q}}{2\cdot\sigma^2}) \new{PDF(\tilde{q})} d\tilde{q},
\end{equation*}
 \new{where $PDF(\tilde{q})$ is the probability density function of $\tilde{q}$ and we again require that low gap events $\tilde{q}<1$ have no density.}
 
 $\mathrm{E}[\varepsilon^{powerset}_{\mathcal{M}_\sigma}(\lambda)]\rightarrow0$ as the density of $\tilde{q} \rightarrow \infty$ as expected. The base case gap with $O(1)$ probability can be found by viewing the binary vectors of \powerset PATE as a non-uniform balls and bins problem. We have $2^k$ subsets, i.e., $|\mathcal{P}(\{0, 1\}^k)|=2^k$. Each subset $i \in [2^k]$ is represented by a bin with probability $P_i$ of having a ball land in it (be voted on by a teacher). The $t$ teachers ($\Sigma_i \bar{n_i}=t$) each vote independently for a bin (subset). 

We estimate $\tilde{q}$ by calculating the load of the maximally loaded bin (most voted outcome), denoted as $\tau$, when \powerset PATE \new{has the highest gap, i.e., }where each other bin has at most $1$ ball. Then, $\tilde{q}=\tau-1$; by upper-bounding $\tau$, we upper bound the gap.

To upper-bound $\tau$, we use an indicator random variable $\mathcal{C}_c$ representing the event of $c$ collisions occurring in any bin, \ie $c$ votes for that bin. Using Markov's inequality followed by Stirlings approximation, we get (see also Appendix~\ref{sec:sterling-factorial}):
\begin{align*}
    & Pr[\mathcal{C}_c \geq 1] \leq \mathrm{E}[\mathcal{C}_c] = {t\choose c}\sum_i^{2^k}P_i^c \leq (\frac{t\cdot e}{c})^c\sum_i^{2^k}P_i^c.\\
\end{align*}
As with $\bar{n}$, we sort our probability $[P_1,\cdots,P_{2^k}]$ such that $P_1 \geq P_2 \cdots \geq P_{2^k}$. Because $n_1 \gg n_2$, we have that $\forall i\neq1, \ P_1 \gg P_i$. As the gap increases, $P_1 \to 1$, and $P_i^c \to 0$, we get that $\sum_i^{2^k} P_i^c\leq P_1^c$ and thus $Pr[\mathcal{C}_c \geq 1] \leq(\frac{t\cdot e}{c})^c P_1^c$. In particular, we care about the regime where the max load occurs with high-probability ($Pr[\mathcal{C}_c]\rightarrow 1$):

\begin{align*}
    Pr[\mathcal{C}_c \geq 1] \leq (\frac{t\cdot e\cdot P_1}{c})^c &\rightarrow 1\\
    \exp( \ln (\frac{t\cdot e\cdot P_1}{c})^c) &\rightarrow 1\\
    \implies c\cdot \ln(t\cdot P_1) + c - c\cdot \ln(c) \rightarrow 0 \\
    \implies c\cdot \ln(t\cdot P_1) + c = c\cdot \ln(c). \addtocounter{equation}{1}\tag{\theequation} \label{eq:powerset-pate-privacy-loss}
\end{align*}

Specifying $t$ and $P_1$, we can then directly calculate the max collisions $c$, then $\tilde{q}$, and finally the best-case expected privacy loss. In the next Section~\ref{app:binary-versus-powerset}, we analyze how the max load changes with respect to $P_1$.

\subsection{Privacy Analysis Comparison} \label{app:binary-versus-powerset}
From the assumptions above, we can directly calculate each $P_i$, for \powerset PATE. We do this by recognizing that a single teacher's binary vector can be modeled as a binomial distribution of $k$ (the number of labels) trials and probability $p$. 
To calculate the probability of any one of the $2^k$ possible binary vectors from the \powerset vector, we express each one as a $l-$hot binary vector ($l$ labels present, independent of their coordinate position). Each possible subset satisfying $l-$hot will have equal probability of occurring (because we fixed $p$ for all labels).
Then, the maximum probability binary vector, when $k$ sufficiently large ($k>5$, approximately) and $p\rightarrow0$ or $p\rightarrow1$ is when $l=0$ or $l=k$ as otherwise the combinatorics leads to a probability split amongst too many choices. This implication may appear unnatural at first; however, it well models our output label distribution: when there are a set of label coordinates that share similarly high (or low) probabilities of predicting presence (or absence) of that label, then their most likely binary vector is all $1$'s or $0$'s. 
We will next analyze the three major contributors to changes in the gap.
\chris{see amrita suggestion below, commented out}

\textbf{Impact of random labels.} To model more complex distributions where in general $p_i \neq p$, we can bucket the ranges of $p$ present and analyze these buckets separately. Let us operate under the assumption of coordinate-independence. In this case, we have the maximal variance for the binomial distribution and thus largest gaps for \powerset and \binary PATE. In this case, we expect both methods to perform poorly: however, because the data-independent bound for \powerset PATE is tighter than for \binary PATE, \powerset PATE performs better, as shown in 
Figure~\ref{fig:binary_powerset_negative_votes}.
This gives us our first two observations: (1) \emph{\powerset PATE has a better worst-case privacy loss (data-independent bound)} and (2) \emph{both \powerset and \binary PATE degrade to their data-independent bounds as $p\rightarrow 0.5$}.



\textbf{Impact of Correlation (Coordinate Dependence).} Even when a bucket of labels has $p\approx1$ or $p\approx0$ (but not equal), we can still have a varying distribution of outcomes. In particular, with coordinate-independence, we can directly use our binomial analysis above and find that there is a uniform probability for all outcomes in each $l-$hot vector. In particular, there are factorially many outcomes each with equal probability. This drastically degrades the gap by reducing $P_1$. However, when there is coordinate-dependence, e.g., the best-case when each label's value directly implies the rest, then we have only two outcomes for each $l-$hot vector. This improves the gap by reducing the possible subsets and improving $P_1$. We formalize this as follows: if we have some subset of $d$ vectors that are dependent, then we can reduce $k$ in our above analyses to $(k-d)$. This gives us our third observation: (3) \emph{coordinate dependence has a combinatoric improvement for \powerset PATE and at best a linear improvement for \binary PATE}. 

\textbf{Impact of Higher Noise Multipliers.} Appealing back to the data-dependent $q(\bar{n})$ calculation of Proposition~\ref{prop:gap}, we see that \binary PATE has a gap calculated as $2P-t/2\sigma$ for $t$ teachers. This gives $q^{binary}(\bar{n})=erfc(2P-t/2\sigma)$ However, for a similar gap, which is bounded for both to a $\max(\tilde{q}(\bar{n}))=t$, we see that \powerset PATE will have $q^{powerset}(\bar{n})=\sum_i^{2^k}erfc(\tilde{q}(\bar{n})/2\sigma)$, which is always worse when the $gaps$ are equal. We can see this effect in Figure~\ref{fig:binary_powerset_negative_votes}. Further, because we union bound across the $2^k$ classes, a  higher noise multiplier has a larger impact on \powerset PATE, as shown in Figure~\ref{fig:powerset-binary-pate-compare-metrics-sigma}. These give our fourth and fifth observations: (4) \emph{in the best case, and for any equal gaps, \powerset PATE has a higher privacy loss}, and (5) \emph{higher noise multipliers will impact \powerset PATE more for similar gaps}. 

Overall, we find that there are two regimes where \powerset voting (PATE) outperforms \binary voting (PATE). 
\emph{The first regime is when there are high correlations between the multi-winner candidates}, i.e., there is strong coordinate dependence. We found empirical evidence of this case, as we will describe later, in Figure~\ref{fig:powerset-binary-pate}.
\emph{The second regime is when neither mechanism can leverage stronger data-dependent privacy guarantees}, the data-independent guarantees of \powerset PATE are tighter (see Section~\ref{ssec:independent-analysis}). We find that though \powerset PATE may not always be capable of returning the plurality vote, the utility of the votes remains high (see Figure~\ref{fig:powerset-binary-pate-compare-metrics-sigma}).

\subsection{Choosing the PATE Mechanism}
As we will see from the empirical results in the next section, we recommend choosing $\tau$ PATE in all cases where a reasonable $\tau<\sqrt{k}$ can be chosen based on the average $\ell_2$ norm of the training data. Otherwise, \binary PATE is generally preferred, unless there is a significant correlation between labels detected (we show one example of this analysis in Appendix~\ref{sec5:limitations}).

\begin{figure*}[!t]
\begin{center}
\centering
\begin{tabular}{cc}
\includegraphics[width=0.34\textwidth]{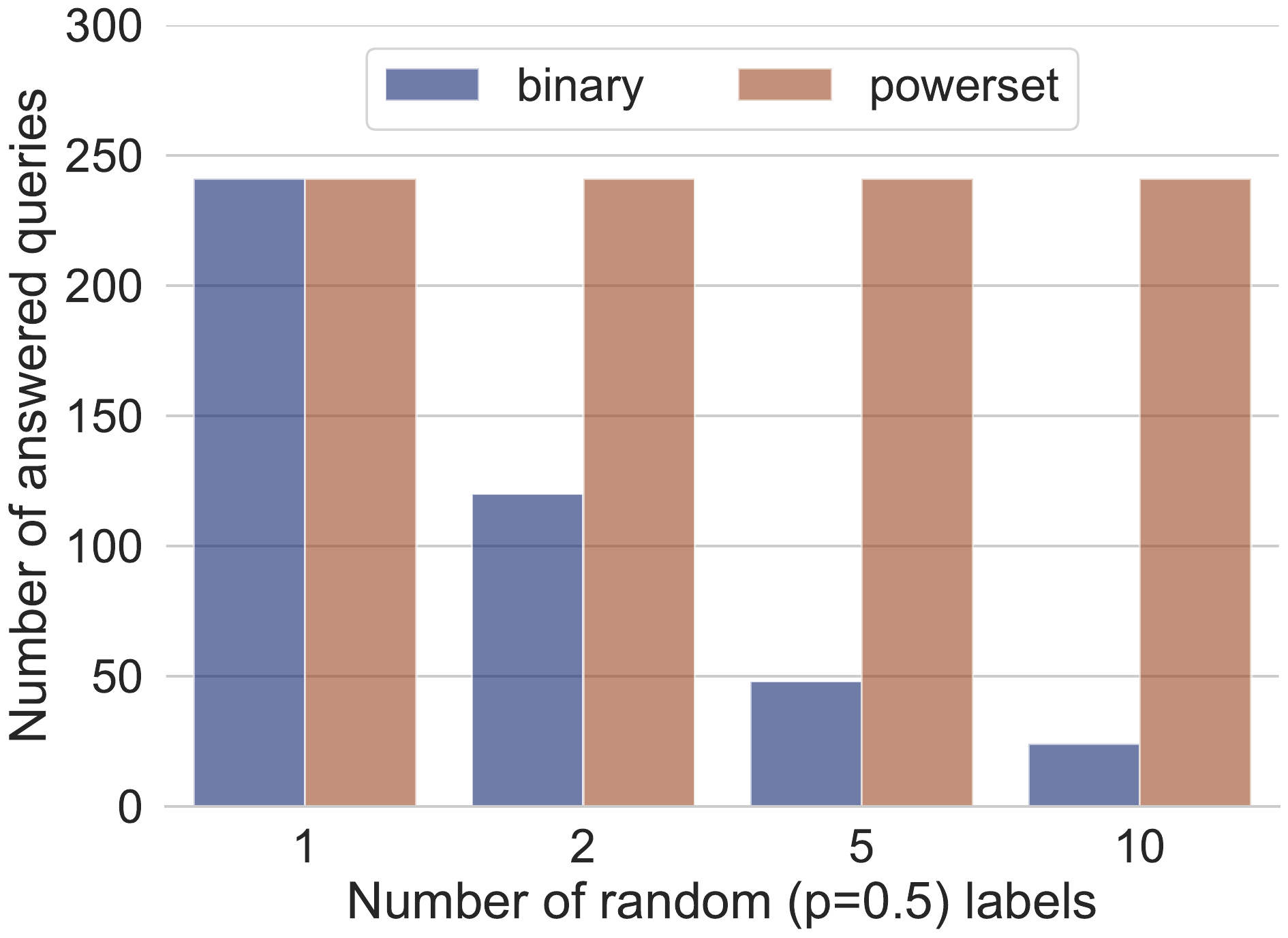}
&\includegraphics[width=0.32\textwidth,trim={1cm 0 0 0},clip]{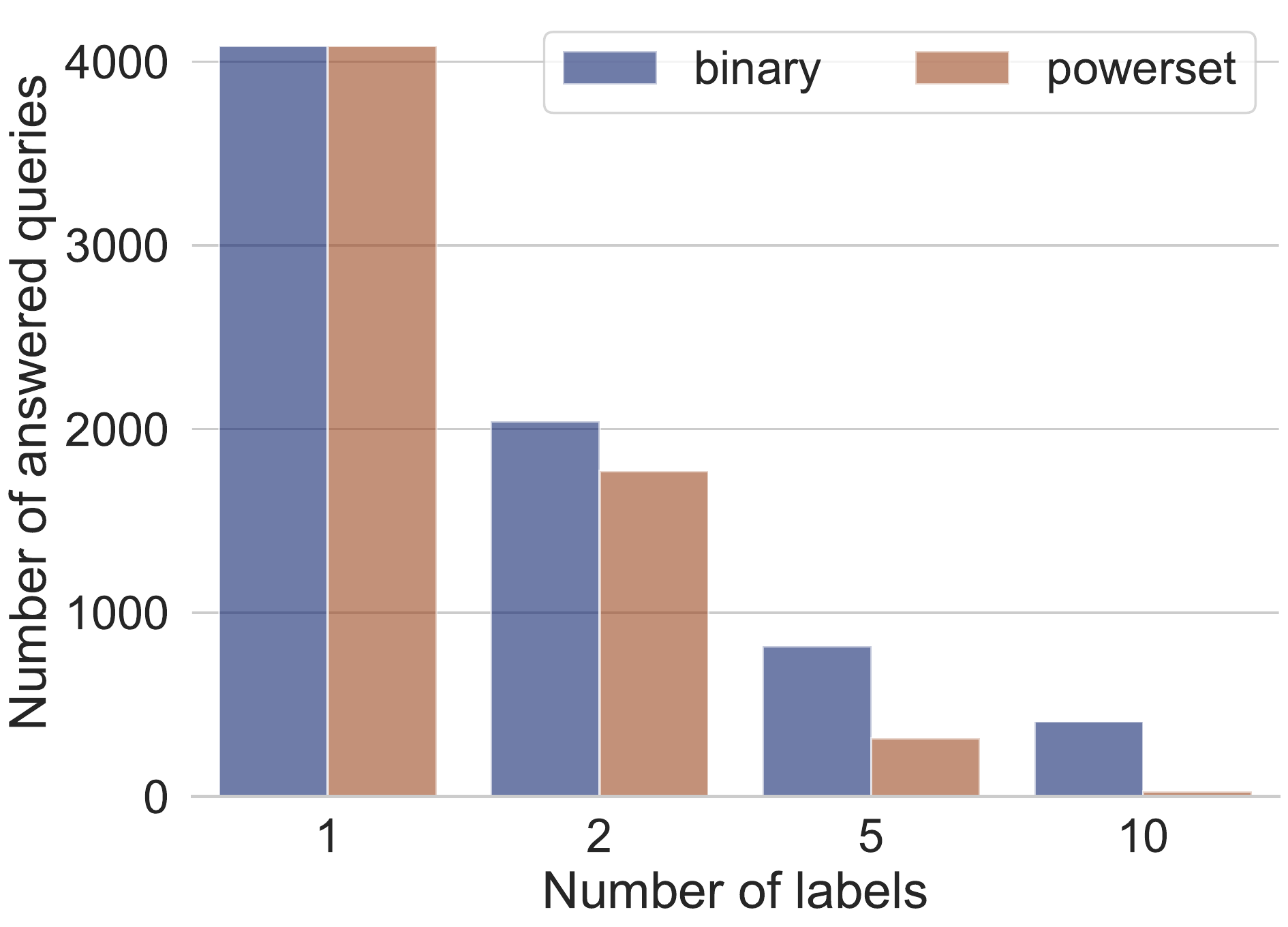}
\end{tabular}
\caption{
\textbf{LEFT: \powerset PATE outperforms \binary PATE as more labels are generated randomly, i.e., p=0.5 with $\varepsilon=20$.} If all votes are random, then both \binary and \powerset PATE fall back on the data independent bound. 
\textbf{RIGHT: \binary PATE outperforms \powerset PATE when they both have a similarly high gap.}. Here, all teachers always output $0$ for all labels. We use $\varepsilon=2.$
BOTH: we use $50$ teachers and privacy noise $\sigma_{GNMax}=7$.
}
\label{fig:binary_powerset_negative_votes}
\end{center}
\vskip -0.0in
\end{figure*}

\section{Empirical Evaluation}\label{sec:experiments}
\label{sec:Experiments4}

We compare our proposed mechanisms for multi-label classification on real-world datasets. Our goal is to answer the following questions: (1) How do the multi-label PATE methods work in practice and which of them should be selected by practitioners? (2) In the centralized learning setting, how does our proposed multi-label PATE compare with the other method of choice - DPSGD?  Finally, (3) how our private multi-label classification performs in the framework of a distributed collaborative learning system, such as CaPC, which preserves privacy and confidentiality? We observe that the $\tau$ PATE is the preferred multi-label method that outperforms other PATE-based methods as well as DPSGD, and can provide benefits when used by collaborating parties.

\subsection{Experimental Setup}
We carry out the evaluation on three medical datasets CheXpert~\cite{chexpert2019}, MIMIC-CXR~\cite{johnson2019mimiccxrjpg}, PadChest~\cite{padchest2020}, and on a vision dataset Pascal VOC 2012~\cite{everingham2010pascal}. 
We use DenseNet-121~\cite{densenet2017} for the medical datasets and ResNet-50~\cite{he2016deep} for the vision dataset. DenseNet has been shown to be the best architecture for X-ray data~\cite{xrayCrossDomain2020}. See Appendix~\ref{app:datasets} for dataset and model descriptions. 
\new{Our main exposition chooses safe $\delta<1/N$ where $N$ is the number of records in the dataset. Only one experiment uses $\delta=1e-4$ to compare directly with~\cite{AdaptiveDPSGD2021}.}
We experiment with
$(\varepsilon=8,\delta=1E-4)$ for 5 labels on CheXpert (see Appendix~\ref{sec:dpsgd-pate-chexpert}), $(\varepsilon=10,\delta=1E-5)$ for predictions on PascalVOC, $(\varepsilon=20,\delta=1E-6)$ for 11 labels on CheXpert or MIMIC, and also $(\varepsilon=20,\delta=1E-6)$ for 15 labels on PadChest (see also Section~\ref{app:diff-epsilons}). 
\new{As an example, we show how the number of answered queries decreases gradually with orders of magnitude lower $\delta$ values for the PascalVOC dataset in~\Cref{tab:pascal-voc-delta-values}.}
\new{We note that though $\tau$ should use DP protection~\cite{papernot2022hyperparameter}, it is commonplace to tune hyperparameters non-privately.}
Our mechanisms add a negligible run-time performance overhead on top of PATE, where the main bottleneck is the training of the teacher models.~\footnote{Our code can be found at this link:
\new{~\href{https://github.com/anonymous-user-commits/private-multi-winner-voting}{https://github.com/anonymous-user-commits/private-multi-winner-voting}}.
}

\textbf{Evaluation Metrics.} We use the following metrics: (1) accuracy (ACC) - a proportion of correct predictions per label, (2) balanced accuracy (BAC) - mean recall and specificity per class~\cite{brodersen2010balanced}, (3) area under the receiver operating characteristic curve (AUC)~\cite{hand2001simple}, and (4) mean average precision (MAP), where average precision per class is the area under the precision-recall curve.
We compute each metric per label using methods from \textit{sklearn.metrics} and then average them across all the labels. 
The definitions per label of the metrics are as follows: 
(1) accuracy per label (ACC): $\frac{TP+TN}{TP+TN+FP+FN}$, (2) Balanced Accuracy per label (BAC) = $\frac{1}{2}(\frac{TP}{TP+FN} + \frac{TN}{TN+FP})$, where $TP$ = True Positive; $FP$ = False Positive; $TN$ = True Negative; $FN$ = False Negative, (3) Area-Under-the-Curve (AUC) = $\int_{0}^{1} t(f) \  df$, where $t(f)$ is the $TP$ rate against the $FP$ rate, (4) mean Average Precision (MAP) = $\frac{1}{n} \sum_{i=1}^{n} P_{i}$, where $P_i$ is the precision for the $i$-th label equal to $\frac{TP}{TP + FP}$ for that label and $n$ is the number of labels per example. 
More details on the evaluation metrics can be found in Appendix~\ref{sec:evalution-metrics} \new{and further information on the experimental setup in Appendix~\ref{app:train-details}.}

\subsection{Queries Answered vs Label Count}
\label{sec:binary-vs-powerset-appendix}

We analyze how our methods scale with a different number of labels in a dataset.
We set the privacy budget $\varepsilon=20$ throughout the whole experiment. The scale of the Gaussian noise $\sigma_{GNMax}$ is adjusted per label count so that we can preserve high values of the metrics (ACC, BAC, AUC, MAP), with maximum performance drop by a few percentage points. 
We find the values of $\sigma_{GNMax}$ separately for the Binary and Powerset methods so that their performance metrics are the same for a given label count.
For example, in Figure~\ref{fig:powerset-binary-pate-compare-metrics-sigma} for 11 labels, we select $\sigma_{GNMax} = 11$ for Binary PATE and $\sigma_{GNMax} = 4$ for Powerset PATE, where the values of the metrics for both methods are ACC=.95, AUC=.66, and MAP = .37 on Pascal VOC. For the CheXpert dataset, we set $\sigma_{GNMax} = 20$ for Binary PATE and $\sigma_{GNMax} = 5$ for Powerset PATE, where the values of the metrics for both methods are ACC=.71, AUC=.68, and MAP = .45. Here, we observe that the private multi-label classification with \binary PATE performs better and preserves higher values of the metrics with more noise added, compared to the \powerset PATE method.

Next, we compare the \binary and \powerset methods across many labels and in terms of how many queries can be answered privately for a given number of labels in the dataset. We run the experiment on the Pascal VOC and CheXpert datasets and present results in~Figure~\ref{fig:powerset-binary-pate}
(as well as in Tables~\ref{tab:powerset-binary-pate-pascal}-\ref{tab:powerset-binary-pate-cxpert} in the Appendix). Our analysis shows that the private multi-label classification with \binary PATE performs better (answers more queries) than \powerset PATE for Pascal VOC. The CheXpert dataset is more noisy in terms of labeling, the pathologies are sparse and difficult to detect; this dataset does not give us a clear preference of the multi-label PATE method in terms of the number of answered queries. 

\begin{figure*}[]
\centering
\begin{tabular}{cc}
\includegraphics[width=\columnwidth,scale=0.9]{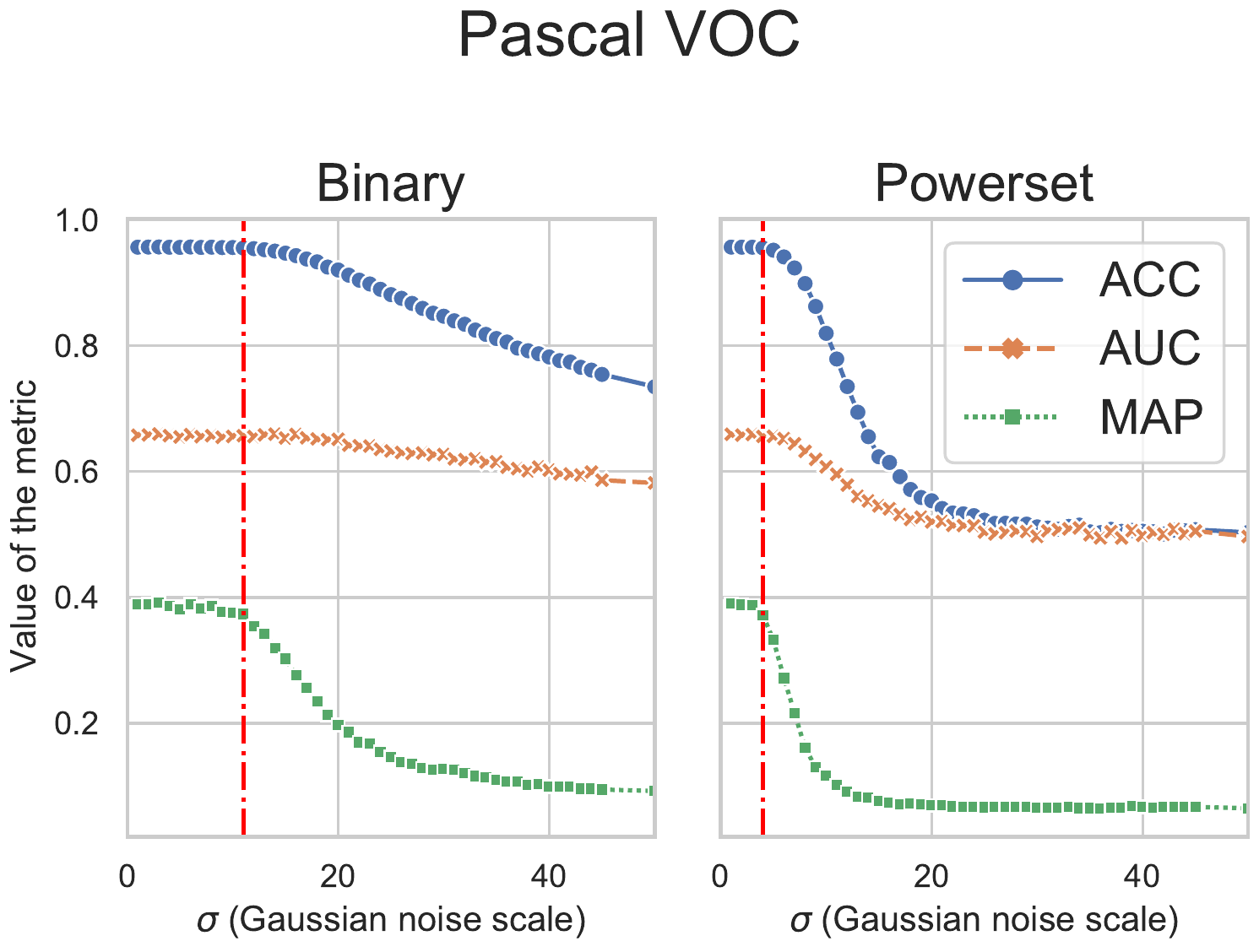} &
\includegraphics[width=1.0\columnwidth,scale=0.9]{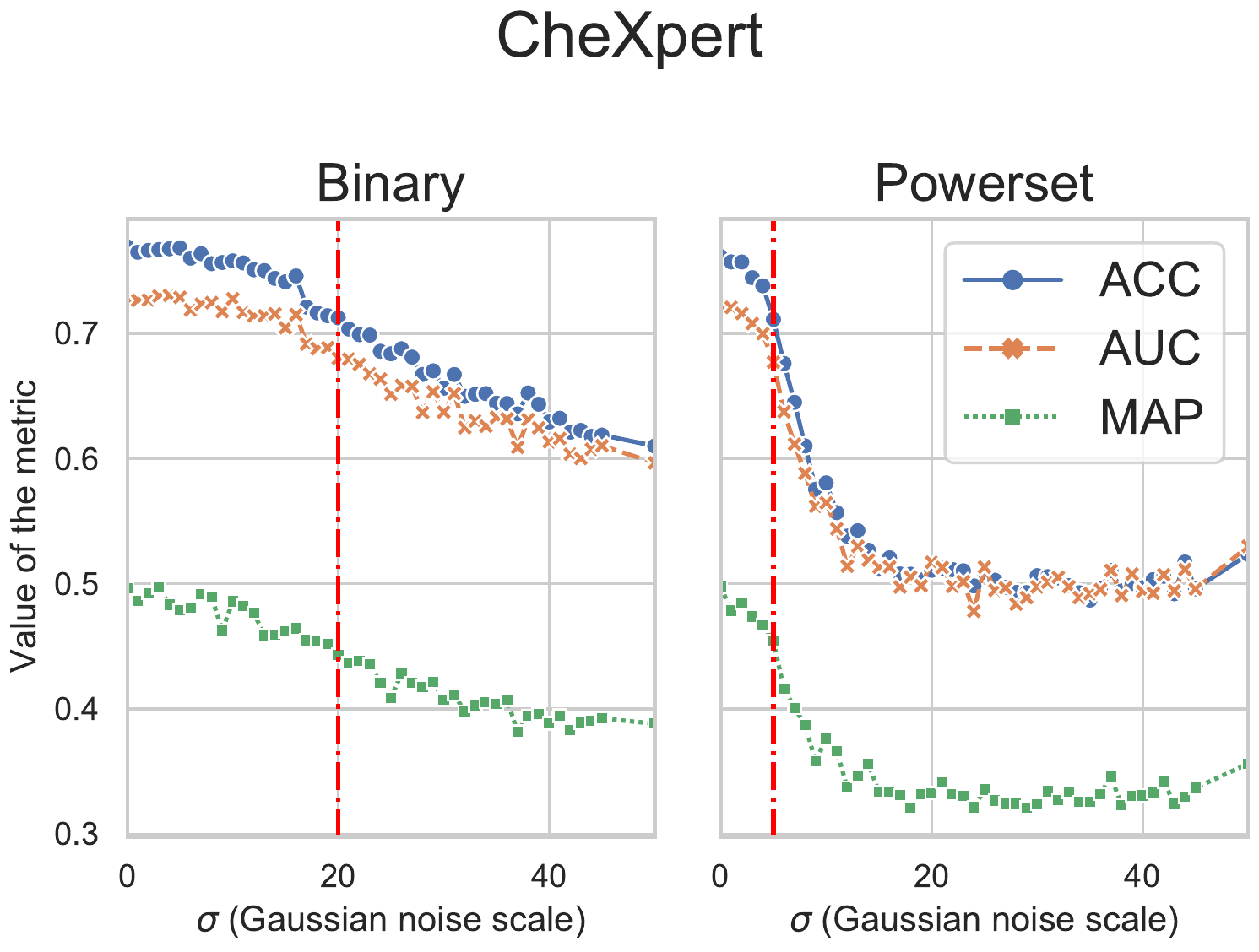} \\
\end{tabular}
\caption{\textbf{Binary vs Powerset PATE performance}. We compare the utility of the private votes returned in terms of ACC, AUC, and MAP, with respect to Gaussian noise scale $\sigma_{GNMax}$. We select the first $11$ labels in Pascal VOC and all 11 labels from CheXpert. We find that \binary PATE can tolerate a much higher noise scale $\sigma_{GNMax}$ compared with \powerset PATE before significant degradation in the votes is observed.}
\label{fig:powerset-binary-pate-compare-metrics-sigma}
\end{figure*}

\begin{figure*}[]
\centering
\begin{tabular}{cc}
\includegraphics[width=0.45\textwidth]{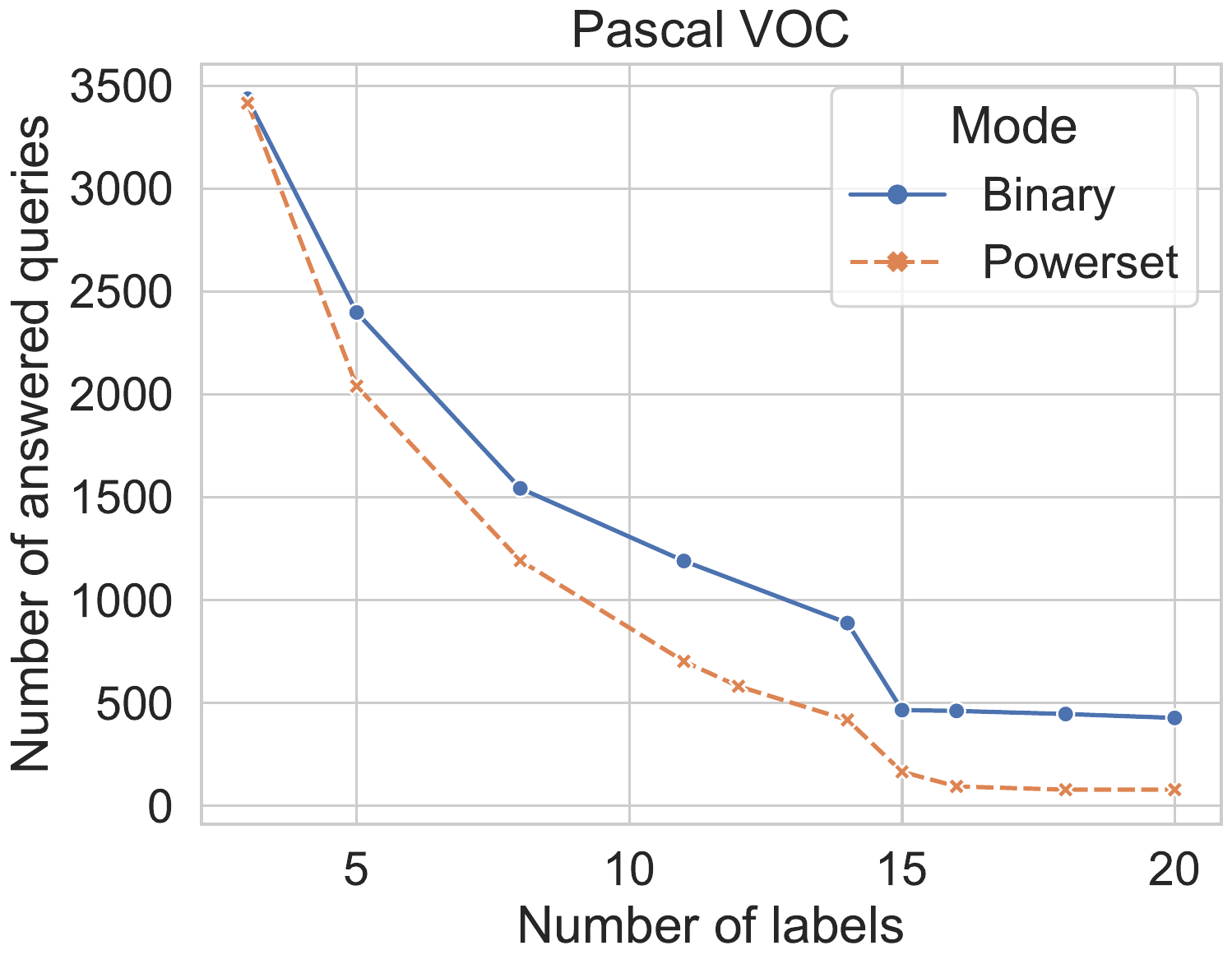} & 
\includegraphics[width=0.43\textwidth, trim={0.8cm 0 0 0},clip]{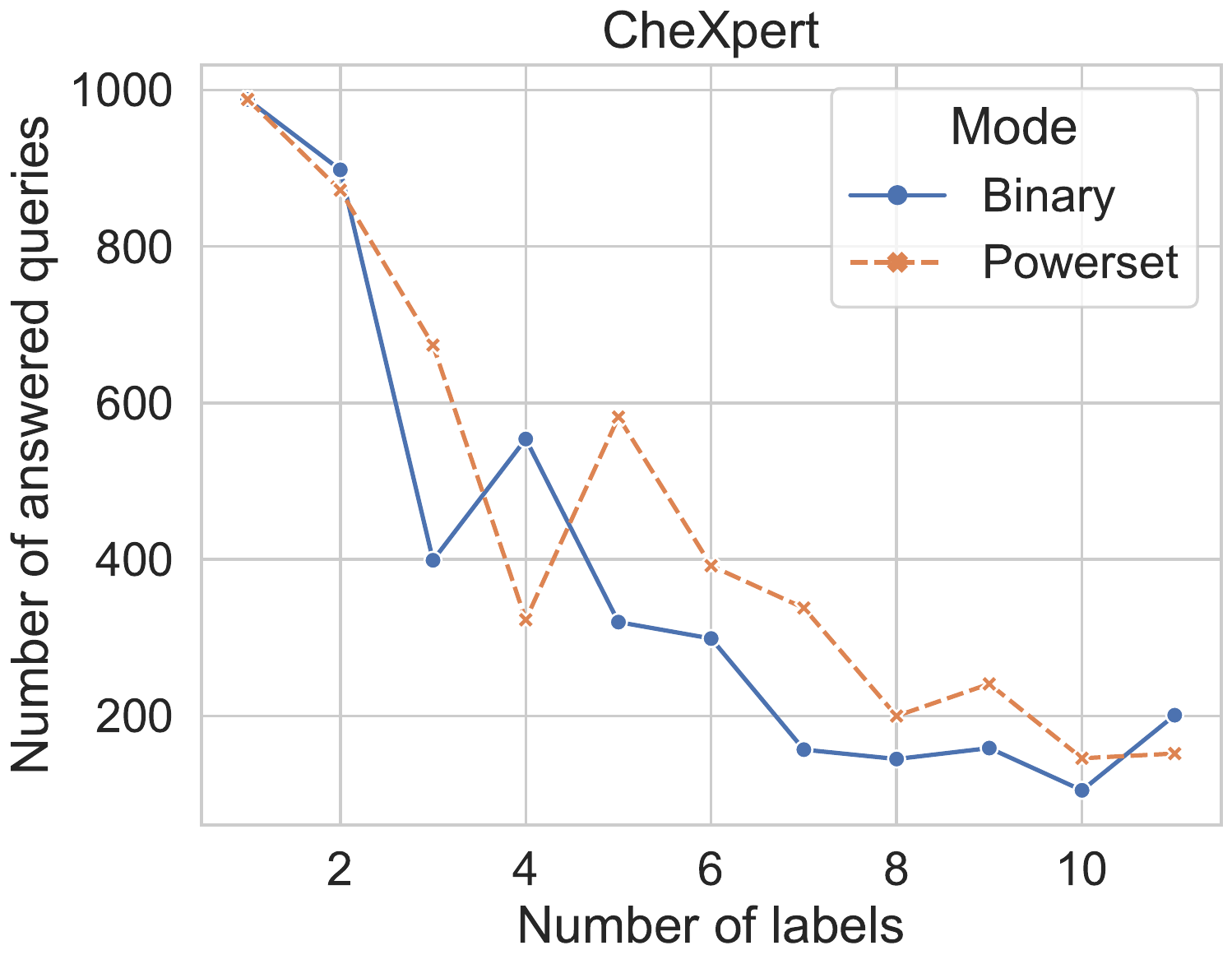}
\\
\end{tabular}
\caption{\textbf{Binary vs Powerset PATE: number of answered queries}. We compare the number of answered queries vs the number of $k$ first labels selected from the Pascal VOC and CheXpert datasets. We keep the privacy budget $\varepsilon=20$. 
}
\label{fig:powerset-binary-pate}
\end{figure*}

\subsection{Query-Utility Tradeoff in PATE}

\begin{figure*}[]
\begin{center}
\centerline{\includegraphics[width=1.0\linewidth]{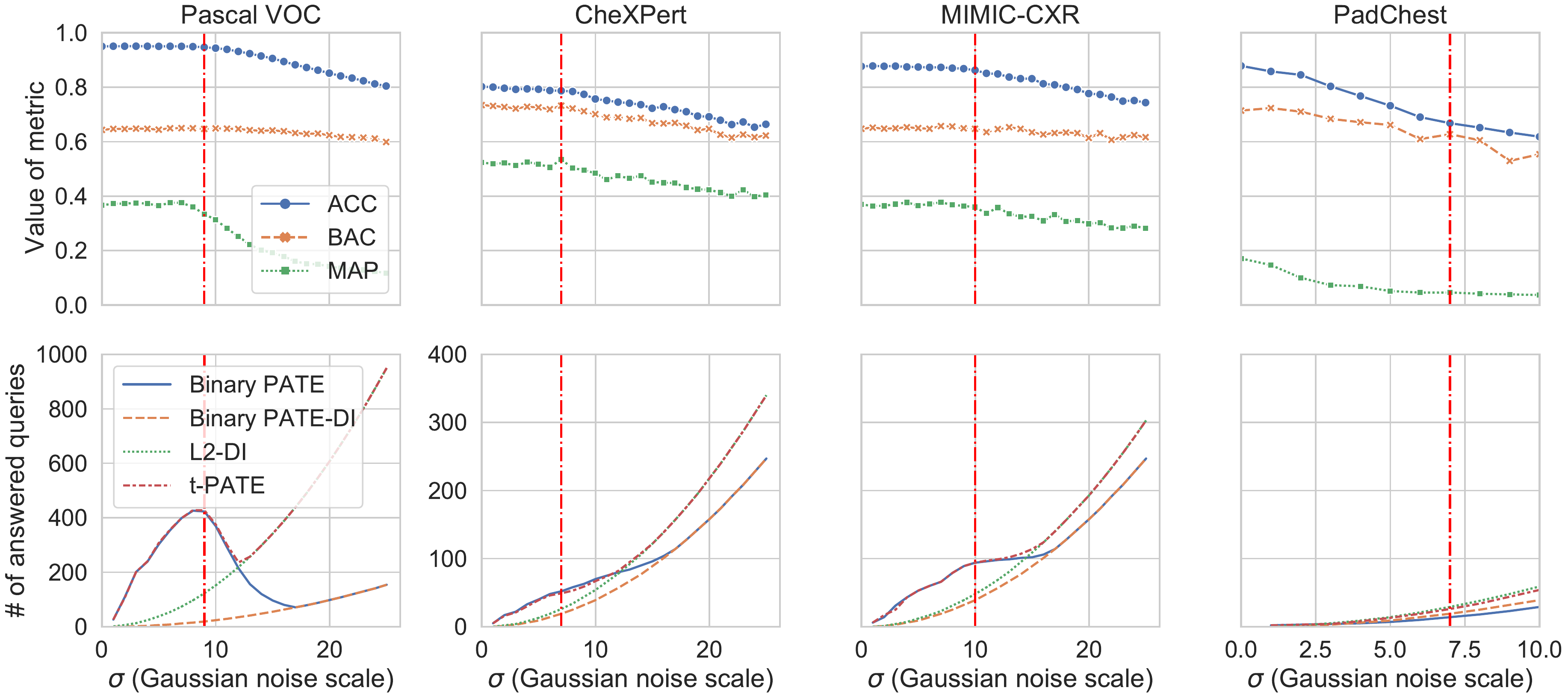}}
\vspace{-10pt}
\caption{\textbf{With sufficient consensus, the best query-utility tradeoff obtained lies in a regime of $\sigma_G$ where the data-dependent bound is used.} In the 1st row, we maximize the $\sigma_G$ of $\tau$ PATE while maintaining sufficiently high values of the performance metrics. The chosen values for $(\sigma_G, \tau)$ are $9,1.8$ for Pascal VOC, $10, 3$ for MIMIC-CXR, and $7,2.8$ for CheXPert and $7,2.7$ PadChest. When there is a lack of consensus (on PadChest), we see that the data-independent $\ell_2$ (L2-DI)  mechanism bound outperforms all others. For a well chosen $\tau$, there is little-to-no impact on the consensus of the data-dependent regime (c.f. \binary PATE and $\tau$ PATE which leverage the data-dependent bound when it reduces privacy loss). Because of this, $\tau$ PATE achieves a competitive query-utility tradeoff. See Figure~\ref{fig:app-tune-sigma} in Appendix~\ref{app:tuning-tau} for tuning of \binary PATE as an $\infty$-Winner Election.
}
\label{fig:tpate}
\end{center}
\vskip -0.2in
\end{figure*}

In comparing how the privacy parameter $\sigma_G$ impacts the query-utility tradeoff of each mechanism, under a fixed $\varepsilon=20$ across all datasets, we find that \emph{under high consensus \binary voting performs best and under lower consensus $\tau$ voting performs best}. In Figure~\ref{fig:tpate}, we compare each multi-winner election mechanism used in $\tau$ PATE and find that each mechanism has a range of $\sigma_G$ where it performs best. In particular, as $\sigma_G \rightarrow 0$, no queries can be answered by any mechanism because doing so would require more than the allotted privacy budget $\varepsilon$. For sufficiently small $\sigma_G$ and with suitable consensus amongst voters, which is the case for Pascal VOC, CheXPert, and MIMIC-CXR, we find the data-dependent analysis for \binary voting outperforms all others while remaining in a regime of high-performance metrics. As $\sigma_G \rightarrow \infty$, the $\tau$ voting mechanism outperforms all others, though in most cases at a decrease in the performance metrics; however, on PadChest, this is the best bound. This can be explained by the fact that on PadChest we can only train $10$ teachers before each individual model's accuracy degrades too much. An important distinction is that $\tau$ PATE can perform worse than \binary PATE when the chosen $\tau$ bound is too small (because ballot clipping can change the vote distribution). For well-chosen values of $\tau$ we observe only marginal decrease in the number of queries answered and the performance metrics (c.f. Figure~\ref{fig:tpate} with Figures~\ref{fig:app-tune-tau2},~\ref{fig:app-tune-tau1}, and~\ref{fig:app-tune-sigma} in Appendix~\ref{app:additional-experiments}). Inspecting the tradeoff of \powerset PATE (see Figure~\ref{fig:powerset-binary-pate-compare-metrics-sigma}), we find that a much lower noise can be tolerated before a steep decline in performance metrics. Because of this, we find that much fewer queries can be answered (see Tables~\ref{tab:powerset-binary-pate-pascal} and~\ref{tab:powerset-binary-pate-cxpert}). Thus, we recommend $\tau$ PATE as the de-facto mechanism.

\subsection{Private Centralized Learning}

We now show that even in the centralized setting, our \multi PATE methods outperform the competitive baselines. We train models on Pascal VOC and CheXpert which include $20$ and $5$ labels for this experiment, respectively. We leverage the entire training set to train a single non-private model and a single private model via DPSGD~\cite{abadi2016deep}. 
For our \multi PATE we instead train 50 teachers each on a separate disjoint partition of the centralized training set (and thus, with $1/50$ number of samples compared with DPSGD and the non-private model). We use these teacher models to privately train a student model using semi-supervised learning with MixMatch~\cite{MixMatch} (where modifications are made to adapt MixMatch to the \multi setting). 
Though the DPSGD algorithm does not assume a public unlabeled dataset, we compare it with our methods because DPSGD is the only baseline for multi-label prediction.
DPSGD for multi-label was studied in prior work~\cite{AdaptiveDPSGD2021}, and cannot directly leverage public unlabeled data. 
On the other hand, our PATE-based approaches leverage a public pool of unlabeled samples that have noisy labels provided by the ensemble of teachers. The added noise protects the privacy of the centralized training data. We then train the student on the newly labeled samples. Our \binary PATE does not use $\tau$ clipping or the confident GNMax improvement so as to fairly compare with \powerset PATE. \binary PATE has the benefit of using non-private learning on public data whereas DPSGD must add noise in training which impedes model learning.
Finally, DPSGD incurs a high computational cost (which \multi PATE does not) due to the expensive per-example gradient computations~\cite{subramani2020enabling}.

\textbf{Pascal VOC.} The student model, non-private baseline, and DPSGD baseline were all pre-trained on ResNet-50 models.
Observing Table~\ref{tab:compare-dpsgd-pate}, we see that our \binary PATE algorithm outperforms all other privacy-preserving techniques by a significant margin. 
The non-private model achieves strong performance across all metrics where the model trained using DPSGD incurs significant degradation across all metrics. 
\begin{table}
  \caption{\textbf{DPSGD vs PATE on Pascal VOC} for all 20 labels. Comparison between standard non-private model, DPSGD, Powerset and Binary multi-label in terms of utility using metrics: Accuracy (ACC), Balanced Accuracy (BAC), Area-Under-the-Curve (AUC), and Mean Average Precision (MAP).}
  \label{tab:compare-dpsgd-pate}
  \begin{sc}
  \begin{center}
  \vspace{-0.5pt}
  \scriptsize
  \begin{tabular}{ccccc}
    \toprule
    \textbf{Method} & \textbf{ACC} & \textbf{BAC} & \textbf{AUC} & \textbf{MAP} \\
    \hline
    \textit{Non-private}     & .97 & .85 & .97 & .85 \\
    \cdashlinelr{1-5}
    \textbf{DPSGD}           & .92 & .50 & .68 & .40 \\
    \textbf{Powerset PATE}   & .94 & .58 & .70 & .29 \\ 
    \textbf{Binary PATE}     & .94 & .62 & .85 & .57 \\
    \bottomrule
  \end{tabular}
  \end{center}
  \end{sc}
\end{table}

Though \binary PATE outperforms DPSGD and \powerset PATE, it falls short of the non-private model by a wide margin, indicating much room for improvement in \multi privacy-preserving techniques---in particular, in extreme \multi settings, which we motivate and expand on in  Appendix~\ref{sec5:limitations}. We observe that \powerset PATE answers much fewer queries (leading to less training data for the student model) than \binary PATE, at only $78$ compared to $427$. Though the student model only trains on $427$ samples compared to $5717$ for DPSGD.  

\textbf{CheXpert.}
\begin{table}[t]
\caption{\textbf{DPSGD vs PATE on Chexpert} for the first 5 labels. We compute the Area-Under-the-Curve (AUC) metric per label. We denote Adaptive DPSGD as \textbf{Adaptive}.}
\label{tab:dpsgd-pate-chexpert}
\vskip -0.3in
\begin{center}
\begin{sc}
\scriptsize
\begin{tabular}{ccccccc}
\toprule
Method & AT & CA & CO & ED & EF & Average \\
\hline
\textit{Non-private}             & \textit{0.84}                             & \textit{0.80}                              & \textit{0.87}                               & \textit{0.90}                       & \textit{0.91}                                  & \textit{0.87}                         \\
\cdashlinelr{1-7}
\textbf{DPSGD}                   & 0.56                                      & 0.53                                       & 0.66                                        & 0.56                                & 0.62                                           & 0.58                                  \\
\textbf{Adaptive}          & 0.75                                      & 0.73                                       & \textbf{0.84}                               & \textbf{0.79}                       & 0.79                                           & 0.78                                  \\
\textbf{Binary PATE}    & \textbf{0.78}                             & \textbf{0.75}                              & \textbf{0.84}                               & 0.76                                & \textbf{0.81}                                  & \textbf{0.79}                         \\
\bottomrule
\end{tabular}
\end{sc}
\end{center}
\vskip -0.0in
\end{table}
We follow the experimental setup provided for Adaptive DPSGD from~\cite{AdaptiveDPSGD2021}. We use the DenseNet-121 model pre-trained on ImageNet and fine-tune only the last fully connected layer while keeping all the other (convolutional) layers fixed. Across all of the experiments, we use $\varepsilon=8$ as the privacy budget. We use the whole CheXpert test set, and here report results for $\delta=1E-4$. 
We set $\delta=1E-4$ only in this case to compare fairly with prior work~\cite{AdaptiveDPSGD2021}. In all other cases, we use the standard $\delta=1/(\mathrm{number\ of\ records})$, which is $\delta=1E-6$.
To evaluate the performance of the models, we use the CheXpert test set (from the valid.csv file in CheXpert-v1.0-small). As shown in the Table~\ref{tab:dpsgd-pate-chexpert}, similarly to the results for Pascal VOC, we also observe a gap in performance (about 8\%) between the non-private model and Binary PATE. For the direct comparison between private methods, in this setting, the Binary PATE also outperforms the state-of-the-art Adaptive DPSGD on four out of five labels, and in the average AUC across all the first 5 labels.

\subsection{Varying Privacy Budget $\epsilon$}
\label{app:diff-epsilons}

\begin{table}[h]
\caption{\textbf{Pascal VOC with 20 labels: Performance of \binary PATE for different values of the privacy budget $\varepsilon$} w.r.t. number of answered queries, ACC, BAC, AUC, and MAP as measured on the test set with the specified $\sigma_{\text{GNMax}}$. PB ($\varepsilon$) is the privacy budget. We use 50 teacher models. We set $\sigma_{\text{GNMax}} = 7$.}
\label{tab:binary-pate-vary-epsilon}
\vskip -0.0in
\begin{center}
\begin{sc}
\scriptsize
\begin{tabular}{ccccccc}\toprule
PB ($\varepsilon$) & \shortstack{Queries \\ answered} & ACC & BAC & AUC & MAP \\
\hline
1 & \textbf{0} & - & - & - & - \\
2 & \textbf{6} & .86 & .62 & .62 & .44 \\
3 & \textbf{13} & .93 & .67 & .67 & .53 \\
4 & \textbf{22} & .93 & .64 & .64 & .44 \\
5 & \textbf{31} & .95 & .63 & .63 & .39 \\
6 & \textbf{40} & .95 & .67 & .67 & .45 \\
7 & \textbf{64} & .95 & .64 & .64 & .35 \\
8 & \textbf{81} & .95 & .66 & .66 & .40 \\
9 & \textbf{101} & .95 & .60 & .60 & .28 \\
10 & \textbf{113} & .96 & .63 & .63 & .30 \\
11 & \textbf{135} & .96 & .64 & .64 & .33 \\
12 & \textbf{165} & .96 & .65 & .65 & .35 \\
13 & \textbf{199} & .96 & .63 & .63 & .32 \\
14 & \textbf{217} & .96 & .64 & .64 & .35 \\
15 & \textbf{239} & .96 & .63 & .63 & .32 \\
16 & \textbf{272} & .96 & .63 & .63 & .31 \\
17 & \textbf{306} & .96 & .63 & .63 & .30 \\
18 & \textbf{332} & .96 & .63 & .63 & .31 \\
19 & \textbf{362} & .96 & .63 & .63 & .30 \\
20 & \textbf{403} & .96 & .63 & .63 & .30 \\
\bottomrule
\end{tabular}
\end{sc}
\end{center}
\vskip -0.1in
\end{table}

Because multi-label classification uses $k>1$ labels, our work significantly outperforms the naive expectation of using $k$ times higher privacy budget from single-label $(k=1)$ classification. We note that since we deal with multi-label scenarios and complicated vision and medical datasets, rather than the typical single-label classification tasks, it is more difficult to attain tighter DP guarantee.
Our choice of $\varepsilon<=20$ falls within the range that is generally considered in prior work~\cite{papernot2018scalable,capc2021iclr} and has been found to be robust to privacy attacks~\cite{choquette-choo21a-label-only,nasr2021adversary}. \new{We explore the privacy utility trade-off.} The value of $\varepsilon$ can be decreased with fewer queries answered by teachers in PATE, which is shown in Table~\ref{tab:binary-pate-vary-epsilon}, where we vary the privacy budget from a tight guarantee ($\varepsilon=1$) to looser guarantees (up to $\varepsilon=20$).

\subsection{Multi-Label \capc}
\label{sec:retrain}

\begin{figure*}[t]
\begin{center}
\centerline{\includegraphics[width=1.0\linewidth,trim={0cm 0 0 0},clip]{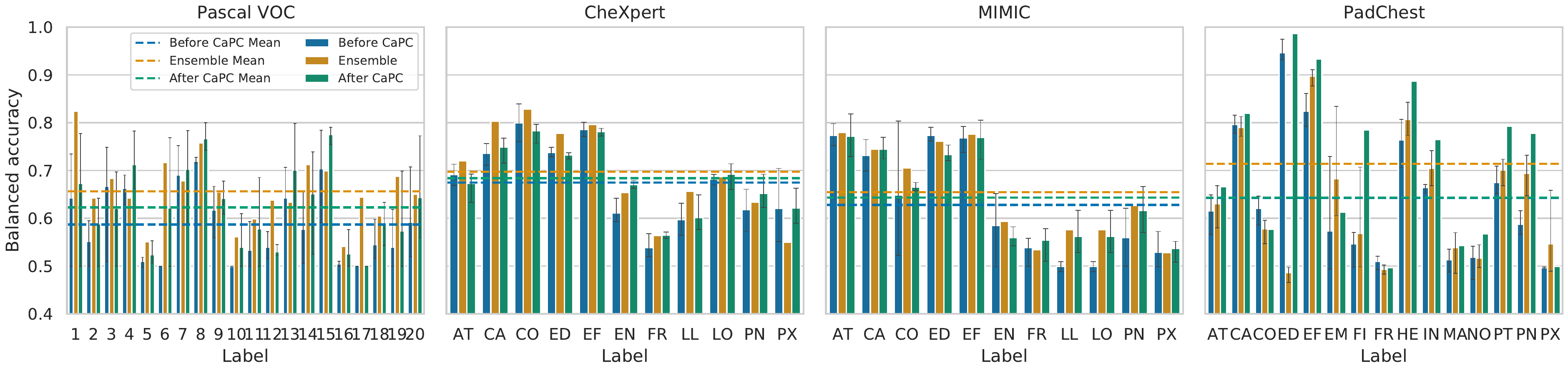}}
\vspace{-0pt}
\caption{\textbf{Using \capc to improve model performance.} \emph{Dashed lines represent mean balanced accuracy (BAC).}\label{fig:retrain} We retrain a given model using additional data labelled by all other models from the same ensemble.
We observe a mean increase of BAC by 2.0 percentage points on CheXpert. 
}
\end{center}
\vskip -0.02in
\end{figure*}

\begin{table*}[t]
\caption{\textbf{Model improvements through retraining with \multi CaPC.}}
\label{tab:data-stats-final}
\vskip -0.02in
\begin{center}
\begin{sc}
\small
\begin{tabular}{cclccccc}\toprule Dataset & \shortstack{\# of \\ Models} &  State & PB ($\varepsilon$) &          ACC &                  BAC &          AUC &          mAP \\
\midrule
\multirow{4}{*}{Pascal VOC} & 1 &  Initial &                 - & .97 &  .85 & .97 & .85 \\
 & 50 &  Before CaPC &  - & .93$\pm$.02 &          .59$\pm$.01 &  .88$\pm$.01 &  .54$\pm$.01 \\
& 50 &   After CaPC &                10 &  \textbf{.94$\pm$.01} &          .\textbf{62$\pm$.01} &  .88$\pm$.01 &  .54$\pm$.01 \\
& 50 &   After CaPC &                20 &  \textbf{.94$\pm$.01} &          \textbf{.64$\pm$.01} & \textbf{.89$\pm$.01} &  \textbf{.55$\pm$.01} \\
\hline
\multirow{3}{*}{CheXpert} & 1 &      Initial &                 - &          .79 &                  .78 &          .86 &          .72 \\   
& 50 &  Before CaPC &                 - &  .77$\pm$.06 &          .66$\pm$.02 &  .75$\pm$.02 &  .58$\pm$.02 \\   
& 50 &   After CaPC &                20 &  .76$\pm$.07 &  \textbf{.69$\pm$.01} &  \textbf{.77$\pm$.01} &  \textbf{.59$\pm$.01} \\
\hline
\multirow{3}{*}{MIMIC} & 1 &      Initial &                 - &         .90 &                  .74 &          .84 &          .51 \\   
& 50 &  Before CaPC &                 - &  .84$\pm$.07 &          .63$\pm$.03 &  .78$\pm$.03 &  .43$\pm$.02 \\   
& 50 &   After CaPC &                20 &  \textbf{.85$\pm$.05} &          \textbf{.64$\pm$.01} &  \textbf{.79$\pm$.01} &  \textbf{.45$\pm$.03 }\\
\hline
\multirow{3}{*}{PadChest} & 1 &      Initial &                 - &          .86 &                  .79 &          .90 &          .37 \\   
& 10 &  Before CaPC &                 - &  .90$\pm$.01 &          .64$\pm$.01 &  .79$\pm$.01 &  .16$\pm$.01 \\   
& 10 &   After CaPC &                20 &  .88$\pm$.01 &          .64$\pm$.01 &  .75$\pm$.01 &  .14$\pm$.01 \\
\bottomrule
\end{tabular}
\end{sc}
\end{center}
\vskip -0.2in
\end{table*}

By replacing the \single PATE in~\cite{capc2021iclr} with our \multi PATE mechanisms, we enable \multi learning in a multi-site setting: the framework of \capc allows for distributed collaboration across models located at different sites. We scale the evaluation of \multi \capc learning to real-world datasets and models by providing a decentralized (independent) evaluation of each answering party,
enabling large models. Our \multi \capc experiments replicate the setup of~\cite{capc2021iclr} using the source code provided.

We train 1 model per participant on separate distinct portions of the training set and then use \multi \capc to improve the performance of $3$ participants. Details are in Appendix~\ref{app:train-details}. Observing Table~\ref{tab:data-stats-final}, we see that \multi \capc consistently improves the BAC, with the greatest improvement of a considerable $5$ percentage point increase on Pascal VOC and improvement across all other metrics. These improvements are echoed in the larger and more privacy-sensitive CheXpert and MIMIC-CXR datasets; however, we observe some performance degradation on PadChest, likely because of the degraded vote utility compared to the original training data. 
As one of the main applications for \capc is the healthcare domain, the on-average performance improvements and associated privacy guarantees demonstrate the utility of \multi \capc in a realistic use case. 

Inspecting Figure~\ref{fig:retrain}, we see that \multi \capc leads to significant improvements in low-sample, low-performance labels (e.g., labels EN and FR on CheXpert). Thus, poorer performing models and classes can gain from the noisy aggregation of more performant teacher models through \multi \capc. This has potential ramifications for fairness because our experiments consistently demonstrate that private \multi \capc can improve model performance on its poorer performing subpopulations---where~\cite{suriyakumar2021chasing} show that differentially private can hurt performance on these subpopulations.
We reiterate that these subpopulations are common in many settings such as healthcare due to imbalanced labels, e.g., rarer diseases (see Figure~\ref{fig:datasets_class_distribution} of Appendix~\ref{app:datasets}).
\vspace{1em}
\section{Conclusions}
Single-label classification requires us to return one categorical output. Instead, multi-label classification is more complicated since we return $k>1$ class-labels for each input. Naively, one could apply single-label classification repeatedly for each of the $k$ labels. However, there are correlations between these $k$ labels. We thus show that this is suboptimal. For example, there are fewer than $\tau=3$ positive labels per query in our medical datasets. Intuitively, a teacher in PATE should only vote for up to $\tau$ positive labels in this setting.

We address the need for privacy in the \multi setting with three new \multi voting mechanisms. We show and prove that, while simple, our Binary voting cannot be outperformed without strong candidate correlations. When these correlations exist, we prove new data-independent bounds for our $\tau$ voting mechanism and theoretically analyze when \powerset voting performs better. 
We also compare all possible norms used for $\tau$-clipping and analytically demonstrate that the $\ell_2$ norm is optimal. 
These fundamental insights allow us to scale privacy-preserving ML to multi-label tasks.
Using these three new mechanisms, we create \multi PATE which outperforms DPSGD in the centralized private learning setting. We further enable \multi learning in multi-site scenarios by creating \multi \capc. We test our methods on standard vision and medical datasets. Our results show new state-of-the-art for private learning in \multi settings and demonstrate a need for further exploration to lessen the gap between private methods and non-private baselines.

\begin{acks}
We would like to acknowledge our sponsors, who support our research with financial and in-kind contributions: CIFAR through the Canada CIFAR AI Chair program, DARPA through the GARD program, Intel, Meta, NFRF through an Exploration grant, and NSERC through the Discovery Grant and COHESA Strategic Alliance. Resources used in preparing this research were provided, in part, by the Province of Ontario, the Government of Canada through CIFAR, and companies sponsoring the Vector Institute. We would like to thank members of the CleverHans Lab for their feedback.
\end{acks}

\newpage
\bibliographystyle{ACM-Reference-Format}
\bibliography{main}

\appendix
\section{Theoretical Mechanism Analysis}\label{app:binary-pate}


\begin{lemma} \label{lemma:coordindep}
If $f(X):\new{\mathcal{R}^d} \rightarrow \mathcal{R}^k$ is coordinate-independent, there exists a pair of databases $(X,X')$ with $||X-X'||_1 = 1$ (a change in only one row) which achieves the worst-case sensitivity for each of the coordinates $f_i$.
\end{lemma}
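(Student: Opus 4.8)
The plan is to exploit the one structural feature that coordinate-independence (Definition~\ref{def:coordindep}) provides: the column blocks $P_1,\dots,P_k$ on which the outputs $f_1,\dots,f_k$ respectively depend are pairwise disjoint. Consequently, the single-row perturbation that witnesses the worst-case sensitivity of $f_i$ can be confined to the columns in $P_i$, and since these column sets do not overlap, $k$ such perturbations can be carried out within one and the same row without interfering with one another. So the argument is a gluing argument: take a worst-case witness pair for each coordinate, and splice their $P_i$-restrictions together into a single adjacent pair $(X,X')$.

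Concretely, first I would fix, for each $i\in[k]$, an adjacent pair $(Y^{(i)},Z^{(i)})$ with $\|Y^{(i)}-Z^{(i)}\|_1 = 1$ attaining $|f_i(Y^{(i)})-f_i(Z^{(i)})| = \Delta_1 f_i$ (the maximum in Definition~\ref{def:sensitivity} is attained in our setting, where each $f_i$ ranges over a bounded set of vote counts). Because $f_i$ depends only on the columns $P_i$, I may further assume that $Y^{(i)}$ and $Z^{(i)}$ differ only within the columns $P_i$ of their one differing row: overwriting the entries outside $P_i$ of that row in $Z^{(i)}$ by the corresponding entries of $Y^{(i)}$ changes neither the value of $f_i$ nor the Hamming distance. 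Finally, since records in a dataset are exchangeable and the aggregation is invariant under permuting rows (it is a per-label vote count), I may relabel rows so that every witness pair differs in one common row, say row $1$.

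Now I would assemble $(X,X')$ block by block. Let $X$ be the dataset whose restriction to the columns $P_i$ equals $Y^{(i)}$ for every $i$ (and whose entries in any columns belonging to no $P_i$ are fixed arbitrarily); let $X'$ agree with $X$ everywhere except that, in row $1$ and within the columns $P_i$, it takes the row-$1$ values of $Z^{(i)}$, for every $i$. By construction $X$ and $X'$ differ only in row $1$, so $\|X-X'\|_1 \le 1$; and for each $i$ we have $X|_{P_i} = Y^{(i)}$ and $X'|_{P_i} = Z^{(i)}$, whence $f_i(X) = f_i(Y^{(i)})$, $f_i(X') = f_i(Z^{(i)})$, and therefore $|f_i(X)-f_i(X')| = \Delta_1 f_i$ for all $i$ simultaneously. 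This is precisely the claim; as a byproduct it yields $\Delta_1 f = \sum_{i=1}^k \Delta_1 f_i$, which is exactly what Proposition~\ref{prop:binary-pate} needs.

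The only step that needs real care — and thus the main obstacle — is aligning all the witness pairs to a common changed row. For the vote-aggregation functions at hand this is immediate from row-exchangeability, but in full generality it requires the mild hypothesis that each $f_i$ is invariant under row permutations (equivalently, that the worst-case sensitivity is attained by a change in a designated row); I would state this explicitly. Everything else is bookkeeping: verifying that the Hamming distance stays $\le 1$ after gluing, and verifying that disjointness of the $P_i$ makes the restrictions $X|_{P_i}$ mutually decoupled, so that each $f_i$ is evaluated on exactly its intended witness.
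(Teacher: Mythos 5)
Your proposal is correct and follows essentially the same route as the paper's proof: both arguments exploit the disjointness of the column blocks $P_i$ to confine each coordinate's worst-case perturbation to its own block and then horizontally stack (glue) these per-coordinate witnesses into a single adjacent pair $(X,X')$. Your explicit attention to aligning all witness pairs to a common changed row is a detail the paper's proof leaves implicit, but it does not change the substance of the argument.
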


\begin{proof}


The proof follows by extension of the case where \new{$d$} = $k$ and $f_{i*}(X)=f_{i*}(X_{i*})$, i.e., each output coordinate is determined by a unique input coordinate. Thus, we can independently maximize the sensitivity for each $f_{i*}$ and horizontally stack these to obtain a single $X$ and $X'$ that maximizes the sensitivity of $f$.

\begin{align*}
    \Delta_1 f_{i*} &= \underset{||X-X'||_1 = 1}{\underset{(X,X')}{\max}} ||f_{i*}(X) - f_{i*}(X')||_1\\
    &= ||f_{i*} ([X_{i | i \in P_{i*}}, X_{i | i \notin P_{i*}}]) - f_{i*} ([X'_{i | i \in P_{i*}}, X'_{i | i \notin P_{i*}}])||_1\\
    &=  ||f_{i*} ([X_{i | i \in P_{i*}}, X_{i | i \notin P_{i*}}]) - f_{i*} ([X'_{i | i \in P_{i*}}, X_{i | i \notin P_{i*}}]) ||_1
\end{align*}

Therefore we can choose the members in $P_{i*}(X)$ arbitrarily and independently to achieve the worst case sensitivity for $f_{i*}$ while the members not in $P_{i*}(X)$ can take any value because they do not affect the output. Using the fact that the $P_{i}$'s are disjoint, we can continue this over all values of $i*$ from 1 to $k$ to get the desired result. 

\end{proof}

\textit{Proof of Proposition~\ref{prop:binary-pate}} From Lemma~\ref{lemma:coordindep}, because $f=[f_1,\hdots,f_k]$, then the worst-case sensitivity is $\Delta f = (\sum_i^k (\Delta f_i)^p)^{1/p}$ by expansion of sensitivity from Definition~\ref{def:sensitivity}. Similarly, we can construct an equivalent query by $k$ applications of Binary voting which gives a sensitivity of $\Delta f = \sum_i^k (\Delta f_i)$. These two are equivalent for $p=1$, i.e., the $\ell_1$ norm. Binary voting is suboptimal for norms of $p>1$ as a result of H\"older's inequality.\qedsymbol

\begin{prop} \label{prop:coorddepend}
For a multi-label function $f(X): \new{\mathcal{R}^d} \rightarrow \mathcal{R}^k$ to have a lower sensitivity than the sum of the worst-case sensitivities for each coordinate, i.e., 
$\Delta_1 f < \Sigma_{i=1}^k\Delta_1 fi$, $f$ must be coordinate-dependent. 
\end{prop}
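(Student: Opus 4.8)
\textit{Proof of Proposition~\ref{prop:coorddepend} (plan).}
The plan is to prove the contrapositive: I will show that if $f$ is coordinate-independent, then $\Delta_1 f = \sum_{i=1}^k \Delta_1 f_i$, so in particular the strict inequality $\Delta_1 f < \sum_{i=1}^k \Delta_1 f_i$ cannot hold. Negating this statement gives exactly the proposition, since if $\Delta_1 f < \sum_{i=1}^k \Delta_1 f_i$ then $f$ is not coordinate-independent, i.e., it is coordinate-dependent. This reduces the claim to the two one-sided bounds below, both of which are essentially already available from the preceding material.

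First I would record the easy direction, namely the upper bound $\Delta_1 f \le \sum_{i=1}^k \Delta_1 f_i$, which holds for \emph{every} multi-label function regardless of coordinate structure. This follows directly from Definition~\ref{def:sensitivity}: for any adjacent pair $(X,X')$ with $\|X-X'\|_1 = 1$ we have $\|f(X)-f(X')\|_1 = \sum_{i=1}^k |f_i(X)-f_i(X')| \le \sum_{i=1}^k \Delta_1 f_i$ by the definition of each $\Delta_1 f_i$ as a maximum, and taking the maximum of the left-hand side over all adjacent pairs preserves the inequality. Next, under the coordinate-independence hypothesis, I would invoke Lemma~\ref{lemma:coordindep} to obtain a single pair $(X^\star, X'^\star)$ with $\|X^\star - X'^\star\|_1 = 1$ that simultaneously attains the worst-case sensitivity $\Delta_1 f_i$ for every coordinate $i \in [k]$; evaluating the $\ell_1$ sensitivity at that one pair gives $\Delta_1 f \ge \|f(X^\star)-f(X'^\star)\|_1 = \sum_{i=1}^k |f_i(X^\star)-f_i(X'^\star)| = \sum_{i=1}^k \Delta_1 f_i$. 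Combining the two bounds yields the desired equality, which contradicts $\Delta_1 f < \sum_{i=1}^k \Delta_1 f_i$ and completes the contrapositive.

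The only place where anything nontrivial happens is the lower-bound step, and that work has already been done: the existence of a common worst-case pair for coordinates with disjoint supports is precisely the content of Lemma~\ref{lemma:coordindep} (construct $X^\star, X'^\star$ by horizontally stacking the per-coordinate worst cases, using disjointness of the $P_i$ so that the stacking does not interfere across coordinates). Consequently, this proposition is just the contrapositive of Proposition~\ref{prop:binary-pate} specialized to $p=1$, and the main obstacle is expositional rather than mathematical: stating the reduction cleanly and being careful that the equality (not merely inequality) holds in the coordinate-independent case, so that the strict inequality genuinely forces coordinate-dependence. \qedsymbol
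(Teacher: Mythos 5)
Your proposal is correct and follows essentially the same route as the paper, whose proof of Proposition~\ref{prop:coorddepend} is the one-line observation that the claim ``follows by contradiction since the function cannot be coordinate-independent''---i.e., exactly the contrapositive of Lemma~\ref{lemma:coordindep}/Proposition~\ref{prop:binary-pate} at $p=1$ that you spell out. Your version merely makes explicit the two one-sided bounds (the always-true upper bound and the lower bound from the common worst-case pair) that the paper leaves implicit.
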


\begin{proof}
Follows by contradiction since the function cannot be coordinate-independent.
\end{proof}

\begin{definition}[Mechanism for \powerset PATE]\label{def:powerset-pate}
Denote the \powerset operator as $\mathcal{P(\cdot)}$. For a sample $x$ and $2^k$ subsets (classes), let $f_j(x)\in \{0,1\}^k$ denote the $j-$th teacher model binary vector prediction. Let $n_i(x)$ be the vote count for the $i-$th subset (class), \ie $n_i(x) \triangleq |\{j: f_j(x) = P(\{0,1\}^k)_i\}|$. We define the \powerset PATE mechanism as

\begin{equation*}
    \mathcal{M}_\sigma (x) \triangleq \underset{i}{\argmax}\left\{n_i(x) + \mathcal{N}(0, \sigma^2)  \right\}.
\end{equation*}
\end{definition}

\section{Towards Private Extreme Multi-Label Classification}
\label{sec5:limitations}

We face extreme multi-label classifications~\cite{shen2020extreme} in many real-world applications such as semantic segmentation~\cite{zhou2017scene}, hash-tag suggestions for user images~\cite{denton2015user}, product categorisation~\cite{agrawal2013multi} and webpage annotation~\cite{partalas2015lshtc} where both input size and label size are extremely large. In this section, we investigate the privacy-accuracy tradeoffs of private semantic segmentation (a common and underlying example in extreme multi-label settings) that links each image pixel to its corresponding object class (an integer value) with a reasonable accuracy of above $60\%$ but an expensive privacy cost of $\varepsilon \approx 3,000$, for a relative "small" image of size $200\times200$, or $\approx40,000$ pixels. We conclude this section by proposing future directions to alleviate the privacy-accuracy tradeoffs in extreme multi-label settings.

We consider MIT ADE20K semantic segmentation dataset~\cite{zhou2017scene} that contains 150 objects including 35 stuff objects (e.g. sky, building) and 115 discrete objects (e.g. person, car). The label size for each image pixel is fixed (=150). However, the number of predicted labels for each image is the number of pixels, which varies across the dataset. To perform private semantic segmentation, we use PATE to label each image pixel. We split the training set of MIT ADE20K dataset into equally sized partitions for 20 teachers and train a Pyramid Pooling ResNet50-Dilated architecture of the Cascade Segmentation Module. The test accuracy of the ensemble of teachers (using $2000$ of the test images) with respect to the PATE noise standard deviation $\sigma_G$ between 0 and 5 varies from $67\%$ to $47\%$. We observe that the level of noise must be quite small, $\sigma< 3$, or there is a steep drop in accuracy of more than $10$ percentage points. The privacy cost is too high to provide meaningful guarantees $\varepsilon$, due to the small $\sigma$ and large number of pixels required to be labeled.
\begin{figure}[h]
  \begin{center}
    \includegraphics[width=0.23\textwidth]{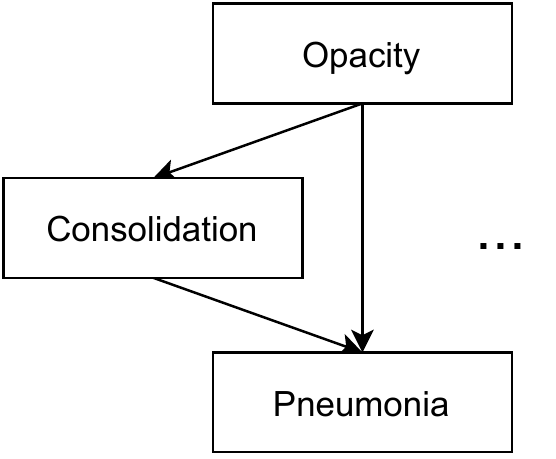}
  \end{center}
  \caption{\textbf{Example hierarchical structure of labels} in Chest radiography setting.}
  \label{fig:treeMulti}
\end{figure}

We believe that privacy analysis in extreme multi-label settings can be tightened by exploiting the semantics of inputs. For example in the semantic segmentation task, we can reduce the privacy costs by taking advantage of the dependency between pixels so that instead of releasing an answer per pixel, we can release only a single label per semantic region (a grouping of pixels). 
Exploring label dependence, rather than assuming label independence, may also enable tighter privacy loss analysis and improve accuracy, as our analysis of Proposition~\ref{prop:binary-pate} suggests. Label dependence is prevalent in many tasks, e.g., in healthcare labels are naturally organised into tree-like hierarchies such that domain experts (e.g. doctors) perform observations and diagnoses conditioned upon their parent node~\cite{van2012relationship}. 
Figure~\ref{fig:treeMulti} shows an example of the label structure where the root label node corresponds to the most generic disease of Opacity, while the leaf label node represents the most specific disease of Pneumonia~\cite{pham2021interpreting}. Pneumonia implies the presence of both Consolidation and Opacity diseases. Thus, there exist many possible methods to optimize the answering of queries. It may be possible to tighten the privacy loss due to the implications (or, correlations) between labels; or, to query labels in a specific order such that the all dependent nodes (Consolidation and Opacity) can be inferred by the agreed presence of parent nodes (Pneumonia) by the teacher ensemble.



In addition to exploiting the knowledge of input  and label domains, our analysis of the optimal settings for Binary PATE shows that privacy mechanisms can be tailored to the multi-label classifications. For example, k-fold adaptive bounds~\cite{kairouz2015composition} that draw tighter ($\ll$ sublinear) privacy bounds for homogenous privacy settings can be extended to heterogeneous $\varepsilon$ per label and per query settings of multi-label classification. However, it is unclear if and under what scenarios we can achieve a tighter bound. For instance, it is possible to take the maximum $\varepsilon$ across all queries, but if there is a large gap $k$-fold adaptive composition may yield looser bounds. These settings of coordinate dependence, high label correlations, and heterogeneous k-fold adaptive composition are interesting for future work.

We design an experiment where the baseline method obtains answers to all the labels while the new proposed method exploits the semantics and queries labels selectively.

\new{First, we generate positive dependency matrices for labels in each dataset and present results in Tables~\ref{tab:pascal_dependency_matrix},\ref{tab:mimic_dependency_matrix},\ref{tab:cxpert_dependency_matrix}, and~\ref{tab:padchest_dependency_matrix}.}
We find that the positive dependencies (e.g., if disease A is present then disease B is present as well) constitute a small fraction of the whole dataset. This is because there are many more negative than positive examples in the CheXpert dataset, which is caused by a class imbalance, a common problem in medical datasets. For instance, we find that if both Pneumonia and Pneumothorax are present then Lung Opacity occurs in 83.3\% of the cases. However, both Pneumonia and Pneumothorax are present in only 0.06\% of samples of the dataset. Thus, we consider negative instead of positive dependencies. For example, if Atelectasis is absent then Consolidation is absent as well in 98.7\% of the cases. After ignoring samples for which at least one of Atelectasis or Consolidation have missing values, the percentage of samples where both labels are negative is 83\%. We obtain the negative dependencies using the training set and generate the dependency matrix in~\Cref{matrix:dependency-chexpert}.

\begin{table*}[t]
\begin{center}
\caption{\textbf{Negative Dependency Matrix} for the first 5 labels form the CheXpert dataset.}
\label{matrix:dependency-chexpert}
\begin{tabular}{|l|l|l|l|l|l|}
\hline
\multicolumn{1}{|c|}{\textbf{}} & \multicolumn{1}{c|}{\textbf{Atelectasis}} & \multicolumn{1}{c|}{\textbf{Cardiomegaly}} & \multicolumn{1}{c|}{\textbf{Consolidation}} & \multicolumn{1}{c|}{\textbf{Edema}} & \multicolumn{1}{c|}{\textbf{Effusion}} \\ \hline
\textbf{Atelectasis}                    & \textit{}                                 & \textit{0.975}                             & \textit{0.987}                              & \textit{0.976}                      & \textit{0.983}                         \\ \hline
\textbf{Cardiomegaly}                   & 0.736                                     &                                            & 0.836                                       & 0.784                               & 0.869                                  \\ \hline
\textbf{Consolidation}                  & 0.527                                     & 0.591                                      & \textbf{}                                   & \textbf{0.631}                      & 0.790                                  \\ \hline
\textbf{Edema}                          & \textbf{0.625}                            & \textbf{0.665}                             & \textbf{0.758}                              &                                     & \textbf{0.822}                         \\ \hline
\textbf{Effusion}                       & 0.485                                     & 0.567                                      & 0.731                                       & 0.633                               &                                        \\ \hline
\end{tabular}
\end{center}
\end{table*}

\begin{table*}[t!]
\centering
\caption{Positive dependency (co-prevalence) matrix for labels in the Pascal-VOC dataset.}
\label{tab:pascal_dependency_matrix}
\adjustbox{max width=\textwidth}{\begin{tabular}{|l|l|l|l|l|l|l|l|l|l|l|l|l|l|l|l|l|l|l|l|l|}
\hline
 & \multicolumn{1}{|c|}{\textbf{aeroplane}} & \multicolumn{1}{c|}{\textbf{bicycle}} & \multicolumn{1}{c|}{\textbf{bird}} & \multicolumn{1}{c|}{\textbf{boat}} & \multicolumn{1}{c|}{\textbf{bottle}} & \multicolumn{1}{c|}{\textbf{bus}} & \multicolumn{1}{c|}{\textbf{car}} & \multicolumn{1}{c|}{\textbf{cat}} & \multicolumn{1}{c|}{\textbf{chair}} & \multicolumn{1}{c|}{\textbf{cow}} & \multicolumn{1}{c|}{\textbf{diningtable}} & \multicolumn{1}{c|}{\textbf{dog}} & \multicolumn{1}{c|}{\textbf{horse}} & \multicolumn{1}{c|}{\textbf{motorbike}} & \multicolumn{1}{c|}{\textbf{person}} & \multicolumn{1}{c|}{\textbf{pottedplant}} & \multicolumn{1}{c|}{\textbf{sheep}} & \multicolumn{1}{c|}{\textbf{sofa}} & \multicolumn{1}{c|}{\textbf{train}} & \multicolumn{1}{c|}{\textbf{tvmonitor}} \\ \hline
\textbf{aeroplane} & 1.000 & 0.000 & 0.001 & 0.007 & 0.000 & 0.004 & 0.061 & 0.000 & 0.001 & 0.000 & 0.000 & 0.000 & 0.000 & 0.001 & 0.133 & 0.003 & 0.000 & 0.000 & 0.000 & 0.000 \\ \hline
\textbf{bicycle} & 0.000 & 1.000 & 0.004 & 0.005 & 0.060 & 0.030 & 0.163 & 0.005 & 0.039 & 0.004 & 0.009 & 0.005 & 0.002 & 0.035 & 0.564 & 0.042 & 0.000 & 0.007 & 0.004 & 0.009 \\ \hline
\textbf{bird} & 0.001 & 0.003 & 1.000 & 0.012 & 0.004 & 0.001 & 0.005 & 0.003 & 0.008 & 0.004 & 0.001 & 0.006 & 0.004 & 0.000 & 0.052 & 0.008 & 0.006 & 0.000 & 0.000 & 0.001 \\ \hline
\textbf{boat} & 0.010 & 0.006 & 0.017 & 1.000 & 0.008 & 0.008 & 0.039 & 0.000 & 0.019 & 0.006 & 0.006 & 0.008 & 0.000 & 0.010 & 0.378 & 0.012 & 0.000 & 0.004 & 0.002 & 0.004 \\ \hline
\textbf{bottle} & 0.000 & 0.044 & 0.004 & 0.005 & 1.000 & 0.001 & 0.029 & 0.022 & 0.236 & 0.000 & 0.254 & 0.038 & 0.003 & 0.007 & 0.592 & 0.065 & 0.005 & 0.070 & 0.000 & 0.100 \\ \hline
\textbf{bus} & 0.007 & 0.040 & 0.002 & 0.009 & 0.002 & 1.000 & 0.384 & 0.000 & 0.000 & 0.000 & 0.000 & 0.002 & 0.002 & 0.026 & 0.477 & 0.009 & 0.002 & 0.000 & 0.007 & 0.000 \\ \hline
\textbf{car} & 0.033 & 0.076 & 0.003 & 0.016 & 0.018 & 0.134 & 1.000 & 0.005 & 0.020 & 0.007 & 0.006 & 0.030 & 0.018 & 0.089 & 0.445 & 0.020 & 0.004 & 0.002 & 0.018 & 0.005 \\ \hline
\textbf{cat} & 0.000 & 0.003 & 0.002 & 0.000 & 0.016 & 0.000 & 0.006 & 1.000 & 0.066 & 0.001 & 0.016 & 0.028 & 0.000 & 0.000 & 0.076 & 0.027 & 0.001 & 0.060 & 0.000 & 0.020 \\ \hline
\textbf{chair} & 0.001 & 0.017 & 0.005 & 0.008 & 0.139 & 0.000 & 0.019 & 0.055 & 1.000 & 0.000 & 0.335 & 0.065 & 0.002 & 0.005 & 0.428 & 0.159 & 0.000 & 0.156 & 0.002 & 0.152 \\ \hline
\textbf{cow} & 0.000 & 0.006 & 0.010 & 0.010 & 0.000 & 0.000 & 0.026 & 0.003 & 0.000 & 1.000 & 0.000 & 0.013 & 0.006 & 0.003 & 0.159 & 0.000 & 0.003 & 0.000 & 0.000 & 0.000 \\ \hline
\textbf{diningtable} & 0.000 & 0.008 & 0.002 & 0.005 & 0.304 & 0.000 & 0.011 & 0.027 & 0.679 & 0.000 & 1.000 & 0.017 & 0.000 & 0.000 & 0.551 & 0.159 & 0.000 & 0.090 & 0.000 & 0.051 \\ \hline
\textbf{dog} & 0.000 & 0.002 & 0.004 & 0.003 & 0.022 & 0.001 & 0.029 & 0.023 & 0.065 & 0.003 & 0.008 & 1.000 & 0.003 & 0.002 & 0.240 & 0.024 & 0.008 & 0.076 & 0.000 & 0.013 \\ \hline
\textbf{horse} & 0.000 & 0.002 & 0.006 & 0.000 & 0.004 & 0.002 & 0.046 & 0.000 & 0.006 & 0.004 & 0.000 & 0.008 & 1.000 & 0.002 & 0.472 & 0.004 & 0.004 & 0.000 & 0.002 & 0.002 \\ \hline
\textbf{motorbike} & 0.002 & 0.037 & 0.000 & 0.009 & 0.009 & 0.021 & 0.203 & 0.000 & 0.011 & 0.002 & 0.000 & 0.006 & 0.002 & 1.000 & 0.575 & 0.032 & 0.006 & 0.002 & 0.000 & 0.000 \\ \hline
\textbf{person} & 0.021 & 0.074 & 0.009 & 0.045 & 0.104 & 0.047 & 0.125 & 0.019 & 0.127 & 0.011 & 0.081 & 0.071 & 0.052 & 0.070 & 1.000 & 0.041 & 0.013 & 0.069 & 0.036 & 0.050 \\ \hline
\textbf{pottedplant} & 0.004 & 0.042 & 0.011 & 0.011 & 0.088 & 0.007 & 0.042 & 0.051 & 0.363 & 0.000 & 0.180 & 0.055 & 0.004 & 0.030 & 0.319 & 1.000 & 0.000 & 0.164 & 0.014 & 0.106 \\ \hline
\textbf{sheep} & 0.000 & 0.000 & 0.015 & 0.000 & 0.012 & 0.003 & 0.015 & 0.003 & 0.000 & 0.003 & 0.000 & 0.031 & 0.006 & 0.009 & 0.169 & 0.000 & 1.000 & 0.000 & 0.000 & 0.000 \\ \hline
\textbf{sofa} & 0.000 & 0.006 & 0.000 & 0.003 & 0.078 & 0.000 & 0.003 & 0.094 & 0.291 & 0.000 & 0.083 & 0.142 & 0.000 & 0.001 & 0.433 & 0.134 & 0.000 & 1.000 & 0.001 & 0.125 \\ \hline
\textbf{train} & 0.000 & 0.004 & 0.000 & 0.002 & 0.000 & 0.005 & 0.040 & 0.000 & 0.004 & 0.000 & 0.000 & 0.000 & 0.002 & 0.000 & 0.287 & 0.015 & 0.000 & 0.002 & 1.000 & 0.000 \\ \hline
\textbf{tvmonitor} & 0.000 & 0.008 & 0.002 & 0.003 & 0.129 & 0.000 & 0.010 & 0.037 & 0.331 & 0.000 & 0.055 & 0.029 & 0.002 & 0.000 & 0.365 & 0.101 & 0.000 & 0.146 & 0.000 & 1.000 \\ \hline
\end{tabular}}
\end{table*}

\begin{table*}[t!]
\centering
\caption{Positive dependency (co-prevalence) matrix for labels in the MIMIC-CXR dataset.}
\label{tab:mimic_dependency_matrix}
\adjustbox{max width=\textwidth}{\begin{tabular}{|l|l|l|l|l|l|l|l|l|l|l|l|l|l|}
\hline
 & \multicolumn{1}{|c|}{\textbf{Enlarged Cardiomediastinum}} & \multicolumn{1}{c|}{\textbf{Cardiomegaly}} & \multicolumn{1}{c|}{\textbf{Lung Opacity}} & \multicolumn{1}{c|}{\textbf{Lung Lesion}} & \multicolumn{1}{c|}{\textbf{Edema}} & \multicolumn{1}{c|}{\textbf{Consolidation}} & \multicolumn{1}{c|}{\textbf{Pneumonia}} & \multicolumn{1}{c|}{\textbf{Atelectasis}} & \multicolumn{1}{c|}{\textbf{Pneumothorax}} & \multicolumn{1}{c|}{\textbf{Pleural Effusion}} & \multicolumn{1}{c|}{\textbf{Pleural Other}} & \multicolumn{1}{c|}{\textbf{Fracture}} & \multicolumn{1}{c|}{\textbf{Support Devices}} \\ \hline
\textbf{Enlarged Cardiomediastinum} & 1.000 & 0.254 & 0.353 & 0.056 & 0.173 & 0.078 & 0.076 & 0.349 & 0.093 & 0.391 & 0.017 & 0.032 & 0.527 \\ \hline
\textbf{Cardiomegaly} & 0.041 & 1.000 & 0.263 & 0.021 & 0.246 & 0.059 & 0.079 & 0.316 & 0.044 & 0.400 & 0.012 & 0.022 & 0.477 \\ \hline
\textbf{Lung Opacity} & 0.050 & 0.231 & 1.000 & 0.059 & 0.157 & 0.057 & 0.166 & 0.278 & 0.047 & 0.343 & 0.015 & 0.019 & 0.381 \\ \hline
\textbf{Lung Lesion} & 0.063 & 0.145 & 0.466 & 1.000 & 0.073 & 0.082 & 0.115 & 0.194 & 0.049 & 0.284 & 0.026 & 0.017 & 0.228 \\ \hline
\textbf{Edema} & 0.046 & 0.410 & 0.297 & 0.017 & 1.000 & 0.092 & 0.116 & 0.257 & 0.029 & 0.531 & 0.008 & 0.012 & 0.440 \\ \hline
\textbf{Consolidation} & 0.052 & 0.245 & 0.271 & 0.049 & 0.231 & 1.000 & 0.224 & 0.230 & 0.057 & 0.532 & 0.010 & 0.016 & 0.525 \\ \hline
\textbf{Pneumonia} & 0.033 & 0.212 & 0.505 & 0.044 & 0.186 & 0.143 & 1.000 & 0.222 & 0.018 & 0.307 & 0.012 & 0.012 & 0.285 \\ \hline
\textbf{Atelectasis} & 0.055 & 0.311 & 0.311 & 0.027 & 0.152 & 0.054 & 0.082 & 1.000 & 0.068 & 0.501 & 0.007 & 0.024 & 0.471 \\ \hline
\textbf{Pneumothorax} & 0.063 & 0.186 & 0.224 & 0.030 & 0.073 & 0.057 & 0.028 & 0.291 & 1.000 & 0.332 & 0.011 & 0.047 & 0.602 \\ \hline
\textbf{Pleural Effusion} & 0.052 & 0.333 & 0.325 & 0.034 & 0.266 & 0.106 & 0.096 & 0.423 & 0.066 & 1.000 & 0.010 & 0.021 & 0.481 \\ \hline
\textbf{Pleural Other} & 0.061 & 0.267 & 0.402 & 0.084 & 0.108 & 0.057 & 0.102 & 0.159 & 0.062 & 0.281 & 1.000 & 0.077 & 0.321 \\ \hline
\textbf{Fracture} & 0.050 & 0.211 & 0.213 & 0.024 & 0.073 & 0.038 & 0.044 & 0.239 & 0.109 & 0.243 & 0.033 & 1.000 & 0.304 \\ \hline
\textbf{Support Devices} & 0.066 & 0.372 & 0.338 & 0.025 & 0.206 & 0.098 & 0.083 & 0.373 & 0.112 & 0.450 & 0.011 & 0.024 & 1.000 \\ \hline
\end{tabular}}
\end{table*}

\begin{table*}[t!]
\centering
\caption{Positive dependency (co-prevalence) matrix for labels in the CheXpert dataset.}
\label{tab:cxpert_dependency_matrix}
\adjustbox{max width=\textwidth}{\begin{tabular}{|l|l|l|l|l|l|l|l|l|l|l|l|}
\hline
 & \multicolumn{1}{|c|}{\textbf{Atelectasis}} & \multicolumn{1}{c|}{\textbf{Cardiomegaly}} & \multicolumn{1}{c|}{\textbf{Consolidation}} & \multicolumn{1}{c|}{\textbf{Edema}} & \multicolumn{1}{c|}{\textbf{Enlarged Cardiomediastinum}} & \multicolumn{1}{c|}{\textbf{Fracture}} & \multicolumn{1}{c|}{\textbf{Lung Lesion}} & \multicolumn{1}{c|}{\textbf{Lung Opacity}} & \multicolumn{1}{c|}{\textbf{Pneumonia}} & \multicolumn{1}{c|}{\textbf{Pneumothorax}} & \multicolumn{1}{c|}{\textbf{Pleural Effusion}} \\ \hline
\textbf{Atelectasis} & 1.000 & 0.107 & 0.055 & 0.056 & 0.048 & 0.053 & 0.056 & 0.414 & 0.033 & 0.084 & 0.471 \\ \hline
\textbf{Cardiomegaly} & 0.118 & 1.000 & 0.049 & 0.192 & 0.101 & 0.039 & 0.050 & 0.354 & 0.030 & 0.023 & 0.354 \\ \hline
\textbf{Consolidation} & 0.118 & 0.095 & 1.000 & 0.047 & 0.042 & 0.024 & 0.100 & 0.384 & 0.128 & 0.036 & 0.426 \\ \hline
\textbf{Edema} & 0.105 & 0.327 & 0.041 & 1.000 & 0.042 & 0.019 & 0.042 & 0.431 & 0.045 & 0.024 & 0.483 \\ \hline
\textbf{Enlarged Cardiomediastinum} & 0.107 & 0.205 & 0.044 & 0.050 & 1.000 & 0.045 & 0.086 & 0.343 & 0.017 & 0.049 & 0.258 \\ \hline
\textbf{Fracture} & 0.119 & 0.079 & 0.026 & 0.023 & 0.046 & 1.000 & 0.068 & 0.284 & 0.014 & 0.054 & 0.193 \\ \hline
\textbf{Lung Lesion} & 0.085 & 0.068 & 0.071 & 0.033 & 0.058 & 0.045 & 1.000 & 0.533 & 0.063 & 0.054 & 0.323 \\ \hline
\textbf{Lung Opacity} & 0.136 & 0.106 & 0.059 & 0.076 & 0.051 & 0.041 & 0.116 & 1.000 & 0.072 & 0.062 & 0.384 \\ \hline
\textbf{Pneumonia} & 0.088 & 0.073 & 0.160 & 0.064 & 0.020 & 0.017 & 0.112 & 0.585 & 1.000 & 0.017 & 0.235 \\ \hline
\textbf{Pneumothorax} & 0.149 & 0.037 & 0.030 & 0.023 & 0.039 & 0.042 & 0.064 & 0.336 & 0.011 & 1.000 & 0.381 \\ \hline
\textbf{Pleural Effusion} & 0.186 & 0.127 & 0.079 & 0.102 & 0.046 & 0.034 & 0.085 & 0.463 & 0.035 & 0.085 & 1.000 \\ \hline
\end{tabular}}
\end{table*}

\begin{table*}[t!]
\centering
\caption{Positive dependency (co-prevalence) matrix for labels in the PadChest dataset.}
\label{tab:padchest_dependency_matrix}
\adjustbox{max width=\textwidth}{\begin{tabular}{|l|l|l|l|l|l|l|l|l|l|l|l|l|l|l|l|}
\hline
 & \multicolumn{1}{|c|}{\textbf{Atelectasis}} & \multicolumn{1}{c|}{\textbf{Cardiomegaly}} & \multicolumn{1}{c|}{\textbf{Consolidation}} & \multicolumn{1}{c|}{\textbf{Edema}} & \multicolumn{1}{c|}{\textbf{Effusion}} & \multicolumn{1}{c|}{\textbf{Emphysema}} & \multicolumn{1}{c|}{\textbf{Fibrosis}} & \multicolumn{1}{c|}{\textbf{Fracture}} & \multicolumn{1}{c|}{\textbf{Hernia}} & \multicolumn{1}{c|}{\textbf{Infiltration}} & \multicolumn{1}{c|}{\textbf{Mass}} & \multicolumn{1}{c|}{\textbf{Nodule}} & \multicolumn{1}{c|}{\textbf{Pleural Thickening}} & \multicolumn{1}{c|}{\textbf{Pneumonia}} & \multicolumn{1}{c|}{\textbf{Pneumothorax}} \\ \hline
\textbf{Atelectasis} & 1.000 & 0.130 & 0.018 & 0.002 & 0.138 & 0.013 & 0.005 & 0.041 & 0.023 & 0.171 & 0.019 & 0.045 & 0.041 & 0.074 & 0.005 \\ \hline
\textbf{Cardiomegaly} & 0.062 & 1.000 & 0.010 & 0.011 & 0.077 & 0.002 & 0.013 & 0.035 & 0.031 & 0.129 & 0.007 & 0.028 & 0.036 & 0.029 & 0.001 \\ \hline
\textbf{Consolidation} & 0.071 & 0.081 & 1.000 & 0.005 & 0.227 & 0.013 & 0.009 & 0.032 & 0.009 & 1.000 & 0.018 & 0.065 & 0.018 & 0.452 & 0.004 \\ \hline
\textbf{Edema} & 0.036 & 0.462 & 0.025 & 1.000 & 0.416 & 0.000 & 0.010 & 0.015 & 0.010 & 0.736 & 0.015 & 0.030 & 0.041 & 0.137 & 0.005 \\ \hline
\textbf{Effusion} & 0.165 & 0.194 & 0.068 & 0.025 & 1.000 & 0.013 & 0.007 & 0.053 & 0.008 & 0.301 & 0.028 & 0.064 & 0.043 & 0.095 & 0.007 \\ \hline
\textbf{Emphysema} & 0.055 & 0.015 & 0.014 & 0.000 & 0.049 & 1.000 & 0.015 & 0.048 & 0.007 & 0.145 & 0.018 & 0.077 & 0.102 & 0.054 & 0.015 \\ \hline
\textbf{Fibrosis} & 0.030 & 0.159 & 0.013 & 0.003 & 0.037 & 0.021 & 1.000 & 0.037 & 0.039 & 0.569 & 0.009 & 0.043 & 0.085 & 0.036 & 0.003 \\ \hline
\textbf{Fracture} & 0.063 & 0.113 & 0.012 & 0.001 & 0.068 & 0.017 & 0.010 & 1.000 & 0.030 & 0.084 & 0.011 & 0.058 & 0.058 & 0.026 & 0.007 \\ \hline
\textbf{Hernia} & 0.062 & 0.174 & 0.006 & 0.001 & 0.018 & 0.004 & 0.018 & 0.053 & 1.000 & 0.073 & 0.063 & 0.043 & 0.049 & 0.014 & 0.001 \\ \hline
\textbf{Infiltration} & 0.088 & 0.140 & 0.129 & 0.019 & 0.130 & 0.017 & 0.049 & 0.028 & 0.014 & 1.000 & 0.012 & 0.067 & 0.043 & 0.292 & 0.004 \\ \hline
\textbf{Mass} & 0.093 & 0.071 & 0.022 & 0.004 & 0.115 & 0.020 & 0.007 & 0.035 & 0.115 & 0.117 & 1.000 & 0.125 & 0.030 & 0.071 & 0.015 \\ \hline
\textbf{Nodule} & 0.049 & 0.064 & 0.018 & 0.002 & 0.058 & 0.019 & 0.008 & 0.041 & 0.017 & 0.142 & 0.028 & 1.000 & 0.063 & 0.070 & 0.005 \\ \hline
\textbf{Pleural Thickening} & 0.051 & 0.095 & 0.006 & 0.003 & 0.045 & 0.029 & 0.018 & 0.048 & 0.023 & 0.104 & 0.008 & 0.072 & 1.000 & 0.035 & 0.003 \\ \hline
\textbf{Pneumonia} & 0.086 & 0.071 & 0.133 & 0.008 & 0.093 & 0.014 & 0.007 & 0.020 & 0.006 & 0.665 & 0.017 & 0.075 & 0.033 & 1.000 & 0.000 \\ \hline
\textbf{Pneumothorax} & 0.081 & 0.027 & 0.018 & 0.004 & 0.108 & 0.063 & 0.009 & 0.085 & 0.004 & 0.130 & 0.054 & 0.076 & 0.036 & 0.000 & 1.000 \\ \hline
\end{tabular}}
\end{table*}

We compare multi-label PATE executed for each label vs using the semantics and querying the first label (Atelectasis) only, followed by (1) skipping the remaining labels and setting them as negative if the first label is negative, or (2) querying the other labels if the first label is positive. As expected, leveraging the semantics increases the number of answered queries from 35 to 127 for the same privacy budget $\varepsilon=8$ of at the cost of lower performance (less accurate answers to the queries). However, increasing the number of answered queries by adding more privacy noise ($\sigma=67.5$) causes the answered queries to be less accurate than by exploiting the label dependencies. We show a detailed comparison in Table~\ref{tab:extreme-multilabel}.

\begin{table*}
\caption{\textbf{Exploit label dependencies for the multi-label classification}.}
\label{tab:extreme-multilabel}
\vskip -0.3in
\begin{center}
\begin{small}
\begin{sc}
\begin{tabular}{cccccc}
\toprule
\multicolumn{1}{c}{\textbf{}}          & \multicolumn{1}{c}{\textbf{\# of queries answered}} & \multicolumn{1}{c}{\textbf{ACC}} & \multicolumn{1}{c}{\textbf{BAC}} & \multicolumn{1}{c}{\textbf{AUC}} & \multicolumn{1}{c}{\textbf{MAP}} \\
\toprule
\textbf{Answer all labels}             & \textit{35}                                         & \textit{0.84}                    & \textit{0.82}                    & \textit{0.82}                    & \textit{0.64}                    \\
\textbf{Increse privacy noise}         & 127                                                 & 0.63                             & 0.61                             & 0.61                             & 0.44                             \\
\textbf{Exploit negative dependencies} & 127   & 0.68                             & 0.72                    & 0.72                    & 0.48      \\                   
\bottomrule
\end{tabular}
\end{sc}
\end{small}
\end{center}
\vskip -0.2in
\end{table*}

Note that in the above example we consider the first five labels from the CheXpert dataset. We use the same setup as for the comparison between DPSGD and multi-label PATE~\ref{sec:dpsgd-pate-chexpert}. The metrics are computed on the same 127 queries (to obtain 127 answered queries for the \textit{Answer all labels} we increase its privacy budget from 8 to 26.5).

\section{Datasets and Model Architectures}
\label{app:datasets}


\begin{figure*}[ht]
\begin{center}
\includegraphics[width=0.8\textwidth]{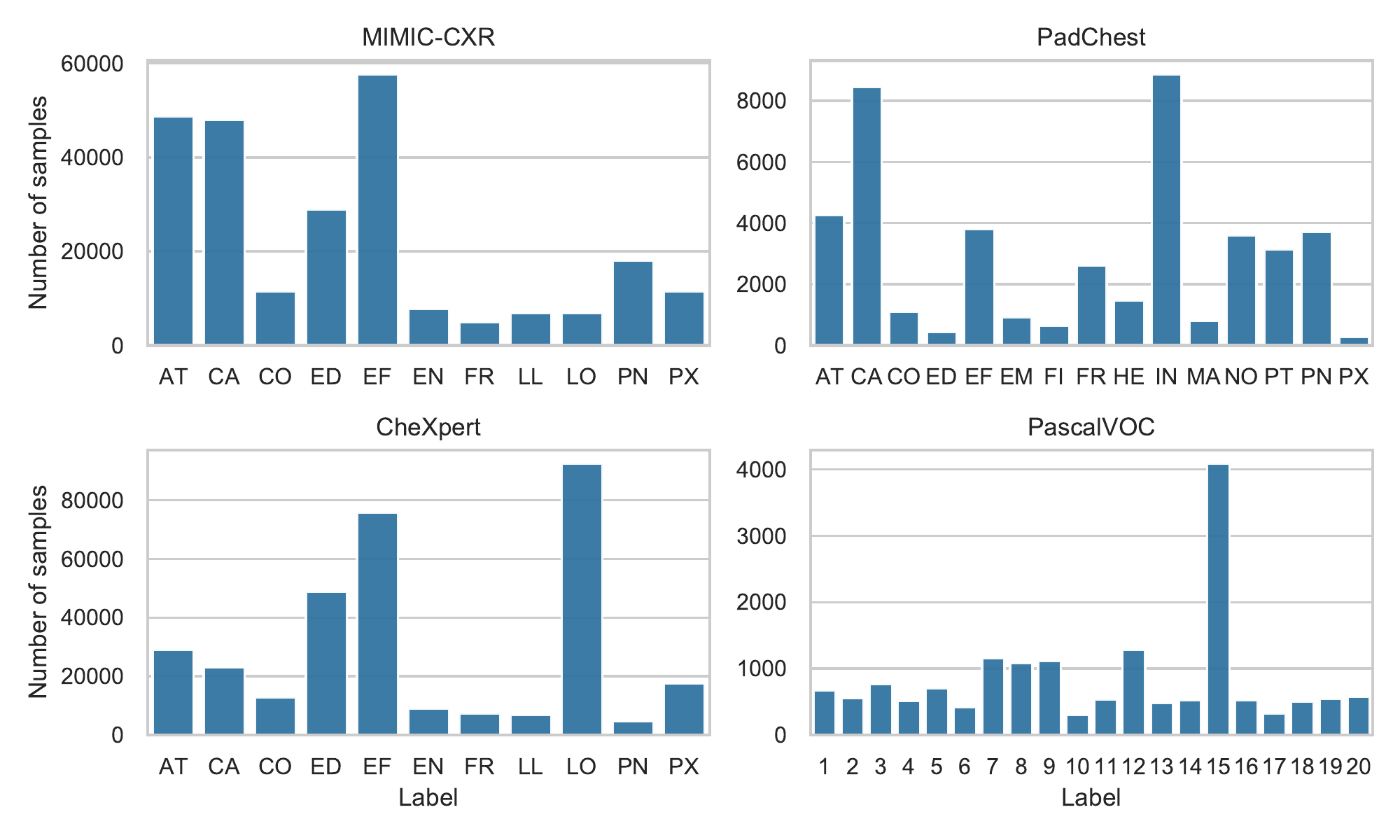}
\caption{The label distribution for each multi-label dataset (Pascal VOC, CheXpert, MIMIC-CXR, and PadChest). 
}
\label{fig:datasets_class_distribution}
\end{center}
\end{figure*}

We experiment on four \multi datasets. First, we use the common computer vision \multi dataset, Pascal VOC 2012. Three other of these datasets are privacy sensitive large-scale medical datasets that are commonly used in ML for healthcare: CheXpert~\cite{chexpert2019},  MIMIC-CXR~\cite{johnson2019mimiccxrjpg} and PadChest~\cite{padchest2020}.
These medical datasets present a realistic and large-scale application for \multi \capc. 

\textbf{Pascal VOC 2012} contains $11,540$ images that are split into $5,717$ images for training and $5,823$ images for validation~\cite{everingham2010pascal}. There are 20 classes of object labels (with their index in parentheses) -- aeroplane (1), bicycle (2), bird (3), boat (4), bottle (5), bus (6), car (7), cat (8), chair (9), cow (10), dining table (11), dog (12), horse (13), motorbike (14), person (15), potted plant (16), sheep (17), sofa (18), train (19), and tv monitor (20). We use a ResNet-50 model~\cite{he2016deep} that was pre-trained on ImageNet~\cite{deng2009imagenet}.

\textbf{Medical datasets} contain chest radiographs (X-ray) images. CheXpert~\cite{chexpert2019} has $224,316$ radiographs. MIMIC-CXR-JPG~\cite{johnson2019mimiccxrjpg} contains $377,110$, and PadChest~\cite{padchest2020} has $160,868$. The goal in each dataset is to predict presence of pathologies. However, there are differences between pathology labels across these three datasets. 
X-ray images of CheXpert are annotated with 11 pathologies--
Atelectasis, Cardiomegaly, Consolidation, Edema, Effusion, Enlarged Cardiomediastinum, Fracture, Lung Lesion, Lung Opacity, Pneumonia and Pneumothorax.
MIMIC-CXR-JPG includes 11 pathologies--
Enlarged Cardiomediastinum, Cardiomegaly, Lung Opacity, Lung Lesion,
Edema, Consolidation, Pneumonia, Atelectasis, Pneumothorax, Pleural Effusion, Pleural Other, Fracture, Support Devices.
PadChest includes 15 pathologies-- 
Atelectasis, Cardiomegaly, Consolidation, Edema, Effusion, Emphysema, Fibrosis, Fracture, Hernia, Infiltration, Mass, Nodule, Pleural\_Thickening, Pneumonia and Pneumothorax.

The pathology and its code (in parenthesis) is as follows: Atelectasis (AT), Cardiomegaly (CA), Consolidation (CO), Edema (ED), Effusion (EF), Emphysema (EM), Enlarged Cardiomediastinum (EN), Fibrosis (FI), Fracture (FR), Hernia (HE), Infiltration (IN), Lung Lesion (LL), Lung Opacity (LO), Mass (MA), Nodule (MO), Pleural\_Thickening (PT), Pneumonia (PN), Pneumothorax (PX).

All datasets obtain labels from associated reports. Both CheXpert and MIMIC-CXR-JPG use the CheXpert labelling system, which is a rule based approach.
PadChest obtains reports annotated by trained radiologists, then trains an attention-based recurrent neural network to predict on these annotations, and labels the remaining data. On all three datasets, we train a DenseNet-121~\cite{densenet2017} model and filter only frontal images (AP and PA). The main difference between our setup and the one from~\cite{xrayCrossDomain2020} is that we use both frontal views \textit{AP} and \textit{PA}, while the cited work uses only one of the frontal views, thus either \textit{AP} and \textit{PA}. 

The label distribution for each multi-label dataset (Pascal VOC, CheXpert, MIMIC-CXR, and PadChest) is presented in Figure~\ref{fig:datasets_class_distribution}. The label distribution for medical datasets is more unbalanced than for Pascal VOC, due to prevalence or rarity of certain diseases.

\section{Evaluation Metrics}
\label{sec:evalution-metrics}

The \textit{accuracy metric (ACC)} used refers to the average accuracy over all labels, i.e., the accuracy is measured for each label individually over the samples in the test set and then averaged to give an overall accuracy for the classifier. This corresponds to micro averaging in~\cite{ConsistentMultiLabel}. To put the above definition in words, the per label accuracy is the proportion of correct predictions (both true positives and true negatives) among the total number of cases examined. We use the function \textit{sklearn.metrics.accuracy\_score} from~\cite{scikit-learn} in the code.

The \textit{balanced accuracy (BAC)} is the macro-average of recall and true-negative-rate scores per class. In the binary case, balanced accuracy is equal to the arithmetic mean of sensitivity (true-positive-rate, recall) and specificity (true-negative-rate). We use the standard method \textit{sklearn.metrics.balanced\_accuracy\_score} from~\cite{scikit-learn} to compute the balanced accuracy per label and then average the scores across all labels. If the data set is balanced or the classifier performs equally well on either class, the balanced accuracy metric reduces to the conventional accuracy (i.e., the number of correct predictions divided by the total number of predictions).

The \textit{Area-Under-the-Curve (AUC)} refers to the area under the receiver operating characteristic (ROC) curve. The ROC curve plots the TP rate against the FP rate. The AUC can be interpreted as the probability that the model ranks a random positive example more highly than a random negative example. We use the method \textit{sklearn.metrics.roc\_auc\_score} from~\cite{scikit-learn} to compute the AUC per label which we then average over all of the labels. The AUC metric is useful in that it does not depend on the classification threshold used. 

The \textit{Mean Average Precision metric (MAP)} is the mean of the average precision of the model over all classes. The average precision for each class is found as the area under the precision-recall curve. The full code showing the functions for MAP and AP can be found in our code at \textit{/datasets/deprecated/coco/helper\_functions/helper\_functions.py} which was originally written for~\cite{ben2020asymmetric}.

For a comprehensive evaluation of different algorithms, it is a common practice to test their performance on various measures and a better algorithm is the one that performs well on most of the measures~\cite{multi-label-hamming}.

In the code, 
we followed the evaluation procedure from~\cite{ben2020asymmetric} and also computed these metrics: true-positives, false-positives, false-negatives, true-negatives, average per-Class precision (CP), recall (CR), F1 (CF1), and the average Overall precision (OP), recall (OR) and F1 (OF1).

\section{Experimental Details}
\label{app:train-details}
Our experiments were performed on machines with Intel\textregistered Xeon\textregistered Silver 4210 processor, 128 GB of RAM, and four NVIDIA GeForce RTX 2080 graphics cards, running Ubuntu 18.04.

We train ResNet-$50$~\cite{he2016deep} models on the Pascal VOC 2012~\cite{everingham2010pascal} dataset by minimizing multi-label soft margin loss function in $1000$ epochs. As the optimiser, we use SGD with learning rate and weight decay of $0.001$ and $1e-4$, respectively. 
We split the $5,717$ images of the training set to $50$ private training sets in order to train $50$ Pascal VOC teacher models. 

We train the medical datasets CheXpert, MIMIC-CXR, and PadChest using the standard DenseNet-$121$ architecture trained for $100$ epochs, with a weight decay of $1e-5$ and the Adam optimizer (learning rate=$0.001$) with the adam\_amsgrad paremeter set. We reduce the learning rate when it plateaus, batch size is $64$. The loss type is BCE (Binary Cross Entropy) with logits, and probability threshold is adjusted per label (where the standard value is $0.5$). For $50$ CheXpert models, each model is trained on $3750$ private train samples. PadChest uses $9223$ private samples per each of $10$ teacher models.

Regarding the metrics, balanced accuracy (BAC), area under the curve (AUC), and mean average precision are common metrics used to asses performance of the multi-label classification~\cite{ben2020asymmetric}. Note that because we are in a sparse-hot multi-label setting (where a sample has many 0's - negative labels) the accuracy is not a preferred metric because a model can achieve high accuracy by returning all 0's; we include the accuracy metric for completeness.

\textbf{For our \capc experiments}, we sample uniformly, without replacement, data points from the respective training data distribution to create disjoint partitions $D_i$ of equal size for each party $i$. We use $50$ total parties for Pascal VOC, CheXpert, and MIMIC-CXR, as well as $10$ for PadChest; this choice depended on the sizes of the datasets and the performance of the models on the resulting partitioned data (here, we ensured no single model dropped below $60\%$ BAC). 
We use $Q=3$ querying parties and sample (without replacement) at least $1000$ data points from the test distribution to form the unlabeled set for each querying party. We leave at least $1000$ held-out data points for evaluating models. 
We first train party $i$'s model on their private data $D_i$, then simulate \multi \capc learning by having each querying party complete the protocol with all other parties as answering parties. Using the new labelled data provided by \multi \capc, we retrain each querying party's model on their original data $D_i$ plus the new labelled data. We average metrics over at least 3 runs with each using a different random seed.

\section{Additional Experiments}\label{app:additional-experiments}

\begin{table*}[t]
\caption{\textbf{Pascal VOC: Performance of Binary PATE vs Powerset PATE} w.r.t. number of answered queries, ACC, BAC, AUC, and MAP as measured on the test set with the specified $\sigma_{\text{GNMax}}$. PB ($\varepsilon$) is the privacy budget. We use 50 teacher models. When we limit number of labels per dataset, we select the first $k$ labels.}
\label{tab:powerset-binary-pate-pascal}
\vskip -0.3in
\begin{center}
\begin{small}
\begin{sc}
\begin{tabular}{cccccccccc}\toprule Dataset & Mode & \shortstack{\# of \\ Labels} &  \shortstack{Queries \\ answered} & PB ($\varepsilon$) & $\sigma_{\text{GNMax}}$ & ACC &                  BAC &          AUC &          MAP \\
\hline
Pascal VOC & Binary & 1 & \textbf{5464} & 20 & 2 & .98 & .84 & .84 & .75 \\
Pascal VOC & Powerset & 1 & \textbf{5464} & 20 & 2 & .98 & .84 & .84 & .75 \\
Pascal VOC & Binary & 2 & \textbf{5442} & 20 & 5 & .97 & .74 & .74 & .54 \\
Pascal VOC & Powerset & 2 & \textbf{5464} & 20 & 5 & .97 & .74 & .74 & .55 \\
Pascal VOC & Binary & 3 & \textbf{3437} & 20 & 7 & .97 & .72 & .72 & .51 \\
Pascal VOC & Powerset & 3 & \textbf{3416} & 20 & 7 & .97 & .72 & .72 & .51 \\
Pascal VOC & Binary & 5 & \textbf{2398} & 20 & 8 & .96 & .66 & .66 & .39 \\
Pascal VOC & Powerset & 5 & \textbf{2040} & 20 & 8 & .96 & .65 & .65 & .33 \\
Pascal VOC & Binary & 8 & \textbf{1543} & 20 & 7 & .96 & .69 & .69 & .46 \\
Pascal VOC & Powerset & 8 & \textbf{1192} & 20 & 7 & .95 & .68 & .68 & .37 \\
Pascal VOC & Binary & 11 & \textbf{1190} & 20 & 11 & .95 & .66 & .66 & .37 \\
Pascal VOC & Powerset & 11 & \textbf{702} & 20 & 4 & .95 & .66 & .66 & .37 \\
Pascal VOC & Binary & 14 & \textbf{888} & 20 & 12 & .95 & .65 & .65 & .36 \\
Pascal VOC & Powerset & 14 & \textbf{417} & 20 & 3 & .95 & .65 & .65 & .37 \\
Pascal VOC & Binary & 15 & \textbf{465} & 20 & 8 & .94 & .66 & .66 & .36 \\
Pascal VOC & Powerset & 15 & \textbf{165} & 20 & 3 & .94 & .66 & .66 & .36 \\
Pascal VOC & Binary & 16 & \textbf{461} & 20 & 7 & .95 & .65 & .65 & .38 \\
Pascal VOC & Powerset & 16 & \textbf{94} & 20 & 2 & .95 & .65 & .65 & .38 \\
Pascal VOC & Binary & 18 & \textbf{446} & 20 & 2 & .95 & .65 & .65 & .36 \\
Pascal VOC & Powerset & 18 & \textbf{94} & 20 & 2 & .95 & .65 & .65 & .36 \\
Pascal VOC & Binary & 20 & \textbf{427} & 20 & 7 & .95 & .65 & .65 & .37 \\
Pascal VOC & Powerset & 20 & \textbf{78} & 20 & 2 & .95 & .65 & .65 & .33 \\
\bottomrule
\end{tabular}
\end{sc}
\end{small}
\end{center}
\vskip -0.0in
\end{table*}

\begin{table*}[t]
\caption{\textbf{CheXpert: Performance of Binary PATE vs Powerset PATE} w.r.t. number of answered queries, ACC, BAC, AUC, and MAP (as measured on the test set with the specified $\sigma_{\text{GNMax}}$. PB ($\varepsilon$) is the privacy budget. We use 50 teacher models. When we limit number of labels per dataset, we select the first $k$ labels.}
\label{tab:powerset-binary-pate-cxpert}
\vskip -0.0in
\begin{center}
\begin{small}
\begin{sc}
\begin{tabular}{cccccccccc}\toprule Dataset & Mode & \shortstack{\# of \\ Labels} &  \shortstack{Queries \\ answered} & PB ($\varepsilon$) & $\sigma_{\text{GNMax}}$ & ACC &                  BAC &          AUC &          MAP \\
\hline
CheXpert & Binary & 1 & \textbf{988} & 20 & 8 & .71 & .71 & .71 & .63 \\
CheXpert & Powerset & 1 & \textbf{988} & 20 & 8 & .71 & .71 & .71 & .63 \\

CheXpert & Binary & 2 & \textbf{898} & 20 & 18 & .73 & .72 & .72 & .62 \\
CheXpert & Powerset & 2 & \textbf{872} & 20 & 12 & .72 & .72 & .72 & .60 \\

CheXpert & Binary & 3 & \textbf{399} & 20 & 13 & .75 & .77 & .77 & .55 \\
CheXpert & Powerset & 3 & \textbf{674} & 20 & 10 & .74 & .76 & .76 & .55 \\

CheXpert & Binary & 4 & \textbf{554} & 20 & 17 & .74 & .75 & .75 & .60 \\
CheXpert & Powerset & 4 & \textbf{323} & 20 & 5 & .76 & .78 & .78 & .60 \\

CheXpert & Binary & 5 & \textbf{320} & 20 & 17 & .74 & .76 & .76 & .59 \\
CheXpert & Binary & 5 & \textbf{932} & 20 & 29 & .70 & .70 & .70 & .54 \\
CheXpert & Powerset & 5 & \textbf{582} & 20 & 10 & .74 & .76 & .76 & .54 \\

CheXpert & Binary & 6 & \textbf{299} & 20 & 18 & .74 & .75 & .75 & .56 \\
CheXpert & Powerset & 6 & \textbf{392} & 20 & 7 & .74 & .75 & .75 & .55 \\

CheXpert & Binary & 7 & \textbf{157} & 20 & 12 & .73 & .74 & .74 & .51 \\
CheXpert & Powerset & 7 & \textbf{338} & 20 & 7 & .73 & .72 & .72 & .51 \\

CheXpert & Binary & 8 & \textbf{145} & 20 & 13 & .73 & .73 & .73 & .51 \\
CheXpert & Powerset & 8 & \textbf{200} & 20 & 5 & .73 & .71 & .71 & .50 \\

CheXpert & Binary & 9 & \textbf{159} & 20 & 16 & .71 & .70 & .70 & .50 \\
CheXpert & Powerset & 9 & \textbf{241} & 20 & 6 & .71 & .70 & .70 & .50 \\

CheXpert & Binary & 10 & \textbf{105} & 20 & 11 & .74 & .72 & .72 & .50 \\
CheXpert & Powerset & 10 & \textbf{146} & 20 & 4 & .73 & .71 & .71 & .50 \\

CheXpert & Binary & 11 & \textbf{201} & 20 & 5 & .71 & .68 & .68 & .45 \\
CheXpert & Powerset & 11 & \textbf{152} & 20 & 20 & .71 & .68 & .68 & .44 \\
\bottomrule
\end{tabular}
\end{sc}
\end{small}
\end{center}
\vskip -0.0in
\end{table*}

\subsection{Powerset with $\tau$-voting}
\label{sec:powerset-tau-appendix}

The integration of $\tau$-voting into the \powerset methods does not lower the sensitivity of the mechanism because changing one of the teachers potentially decreases the vote count from one class and increase it in another one. Thus, the sensitivity remains 2. 

Intuitively, up to $\tau$ positive labels could be selected based on the confidence of the positive value for a label, where we would choose the top $\tau$ labels with the highest confidence of being positive. However, given that most deep multi-label models use independent predictive heads (i.e., a separate sigmoid activation per output label), naively comparing these values may not lead to the best performance. Thus, no clear notion of what the desired subset is in the case that $>\tau$ candidates are present.

For the Pascal VOC dataset, we set the same threshold of $0.5$ probability per label, so the prediction heads for each label are aligned. For the CheXpert dataset, different probability thresholds are set per label, so the problem of deciding which of the labels should remain positive (if there are more positive labels than the max $\tau$ of positive labels) is difficult to resolve. This would likely require some form of domain knowledge.

For the original train set of the Pascal VOC dataset (with 5823 samples), the average number of positive labels per example is 1.52, with 20 total labels. More detailed statistics on the train set are presented in Table~\ref{tab:pascal-voc-positive-labels-per-data-point}.

When the $\tau$-voting is applied with the fixed privacy budget, we are able to answer more queries, while the performance metrics (e.g., accuracy) remain comparable. 
When the initial number of labels is set to $k=5$, for $\tau=2$ the number of classes is 16 instead of 32 ($\tau=5$, which is equivalent to no clipping) and we are able to answer 5\% more queries, for $\tau=1$, we have only 6 classes and almost 14\% more answered queries.
For $\tau=5$ there is only $1$ more class than for $\tau=4$ and we observe a very small difference in the number of answered queries (namely 8). The number of answered queries using the Binary method is still higher than with Powerset even when $\tau=1$ (the number of classes is $C=k+1$).


\begin{table}[h]
\caption{Statistics about the positive labels per data point in the Pascal VOC dataset.}
\label{tab:pascal-voc-positive-labels-per-data-point}
\begin{center}
    \begin{small}
    \begin{sc}
    \begin{tabular}{|c|c|}
    \hline
    \textbf{\# of positive labels} & \textbf{\# of train samples} \\
    \hline
    6 & 1 \\
    5 & 17 \\
    4 & 105 \\
    3 & 484 \\
    2 & 1677 \\
    1 & 3539 \\
    0 & 0 \\
    \toprule
    \end{tabular}
    \end{sc}
    \end{small}
\end{center}
\end{table}

\begin{table}[h]
\caption{\textbf{Parameter $\delta$ vs the number of answered queries.} We show how the number of answered queries decreases gradually with orders of magnitude lower $\delta$ values for the PascalVOC dataset with $\varepsilon=16,\sigma_{GNMax}=9$.}
\label{tab:pascal-voc-delta-values}
\begin{center}
    \begin{small}
    \begin{sc}
    \begin{tabular}{|c|c|}
    \hline
    \textbf{$\delta$ value for $(\varepsilon,\delta)$-DP} & \textbf{\# of answered queries} \\
    \hline
    $10^{-5}$ & 170 \\
    $10^{-6}$ & 158 \\
    $10^{-7}$ & 151 \\
    $10^{-8}$ & 139 \\
    \toprule
    \end{tabular}
    \end{sc}
    \end{small}
\end{center}
\end{table}

We present performance of Powerset PATE with $\tau$-clipping w.r.t. number of answered queries in Table~\ref{tab:powerset-tau-pate-pascal}.


\begin{table*}[t]
\caption{\textbf{Pascal VOC: Performance of Powerset PATE with $\tau$-clipping} w.r.t. number of answered queries with the specified $\sigma_{\text{GNMax}}$. The ACC, BAC, AUC, and MAP are measured on the answered queries from the test set. PB ($\varepsilon$) is the privacy budget. We use 50 teacher models. When we limit the number of labels per dataset, we select the first $k$ labels. The $\tau$ denotes a maximum number of positive labels that a teacher is allowed to return per data sample.}
\label{tab:powerset-tau-pate-pascal}
\vskip -0.3in
\begin{center}
\begin{small}
\begin{sc}
\begin{tabular}{ccccccccc}
\toprule 
\shortstack{\# of \\ Labels} &  $\tau$ & \shortstack{Queries \\ answered} & PB ($\varepsilon$) & $\sigma_{\text{GNMax}}$ & ACC &                  BAC &          AUC &          MAP \\
\hline
1 & 1 & \textbf{5464} & 20 & 2 & .98 & .84 & .84 & .75 \\
2 & 1 & \textbf{5464} & 20 & 5 & .97 & .75 & .75 & .55 \\
2 & 2 & \textbf{5464} & 20 & 5 & .97 & .74 & .74 & .55 \\
3 & 1 & \textbf{3437} & 20 & 7 & .97 & .71 & .71 & .48 \\
3 & 2 & \textbf{3421} & 20 & 7 & .97 & .71 & .71 & .49 \\
3 & 3 & \textbf{3416} & 20 & 7 & .97 & .72 & .72 & .51 \\
4 & 1 & \textbf{2547} & 20 & 7 & .97 & .68 & .68 & .43 \\
4 & 2 & \textbf{2527} & 20 & 7 & .97 & .69 & .69 & .43 \\
4 & 3 & \textbf{2485} & 20 & 7 & .97 & .69 & .69 & .43 \\
4 & 4 & \textbf{2485} & 20 & 7 & .97 & .69 & .69 & .43 \\
\hline
5 & 1 & \textbf{2321} & 20 & 8 & .96 & .65 & .65 & .36 \\
5 & 2 & \textbf{2151} & 20 & 8 & .96 & .65 & .65 & .34 \\
5 & 3 & \textbf{2077} & 20 & 8 & .96 & .65 & .65 & .34 \\
5 & 4 & \textbf{2048} & 20 & 8 & .96 & .65 & .65 & .33 \\
5 & 5 & \textbf{2040} & 20 & 8 & .96 & .65 & .65 & .33 \\
\hline
8 & 8 & \textbf{1192} & 20 & 7 & .95 & .68 & .68 & .37 \\
10 & 1 & \textbf{947} & 20 & 5 & .97 & .65 & .65 & .36 \\
10 & 2 & \textbf{896} & 20 & 5 & .96 & .65 & .65 & .36 \\
10 & 3 & \textbf{895} & 20 & 5 & .96 & .65 & .65 & .33 \\
10 & 4 & \textbf{885} & 20 & 5 & .96 & .65 & .65 & .33 \\
10 & 5 & \textbf{877} & 20 & 5 & .96 & .65 & .65 & .33 \\
10 & 6 & \textbf{861} & 20 & 5 & .96 & .64 & .64 & .30 \\
10 & 7 & \textbf{849} & 20 & 5 & .96 & .64 & .64 & .29 \\
10 & 8 & \textbf{848} & 20 & 5 & .96 & .66 & .66 & .34 \\
10 & 9 & \textbf{848} & 20 & 5 & .96 & .64 & .64 & .32 \\
10 & 10 & \textbf{848} & 20 & 5 & .96 & .64 & .64 & .30 \\
\hline
11 & 11 & \textbf{702} & 20 & 4 & .95 & .66 & .66 & .37 \\
14 & 14 & \textbf{417} & 20 & 3 & .95 & .65 & .65 & .37 \\
15 & 15 & \textbf{165} & 20 & 3 & .94 & .66 & .66 & .36 \\
16 & 16 & \textbf{94} & 20 & 2 & .95 & .65 & .65 & .38 \\
18 & 18 & \textbf{94} & 20 & 2 & .95 & .65 & .65 & .36 \\
\hline
20 & 1 & \textbf{111} & 20 & 2 & .95 & .63 & .63 & .33 \\
20 & 2 & \textbf{99} & 20 & 2 & .96 & .67 & .67 & .41 \\
20 & 3 & \textbf{81} & 20 & 2 & .96 & .61 & .61 & .32 \\
20 & 4 & \textbf{78} & 20 & 2 & .96 & .63 & .63 & .33 \\
20 & 5 & \textbf{78} & 20 & 2 & .96 & .63 & .63 & .34 \\
20 & 10 & \textbf{78} & 20 & 2 & .95 & .65 & .65 & .34 \\
20 & 20 & \textbf{78} & 20 & 2 & .95 & .65 & .65 & .33 \\
\bottomrule
\end{tabular}
\end{sc}
\end{small}
\end{center}
\vskip -0.2in
\end{table*}

\FloatBarrier
\subsection{DPSGD vs PATE on CheXpert: setup}
\label{sec:dpsgd-pate-chexpert}

The Adaptive DPSGD method from~\cite{AdaptiveDPSGD2021} does not publish the code, thus the following is our best effort and the results for Adaptive DPSGD come from Figure 5 in~\cite{AdaptiveDPSGD2021}.
We use the following parameters to obtain our results for the Binary multi-label PATE:
\begin{verbatim}
sigma gnmax = 7.0
sigma threshold = 0
threshold = 0
method = multilabel
batch size = 20
learning rate = 0.001
epochs = 100
weight decay = 0
X-ray views = ['AP', 'PA']
Multilabel_prob_threshold = [
0.53, 0.5, 0.18, 0.56, 0.56]
\end{verbatim}


\subsubsection{CDF of Gaps}
We show the Cumulative Distribution Function (CDF) 
for the gaps in Figure~\ref{fig:cdf-gaps}. We observe that the ensemble of teachers is confident about the answers for most labels for the Binary PATE and there are very small gaps for most queries when using Powerset PATE, which shows much lower confidence of teachers in choosing the same (super) classes, which are created from aggregated label predictions (the values of the labels are collected in a binary vector that constitutes a class).

\begin{figure}[!t]
\begin{center}
\centerline{
\begin{tabular}{cc}
\includegraphics[width=0.5\columnwidth]{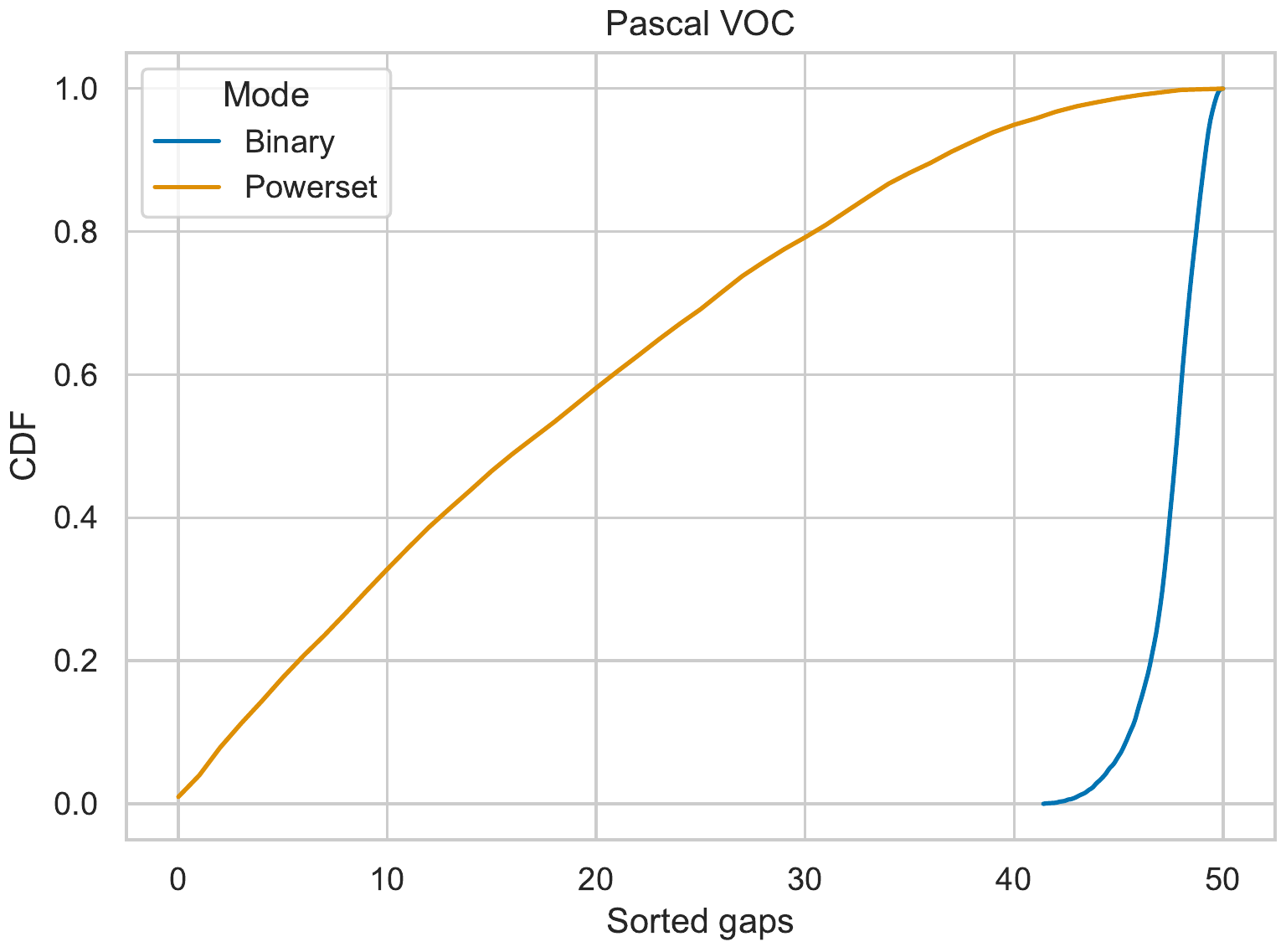} &
\includegraphics[width=0.5\columnwidth]{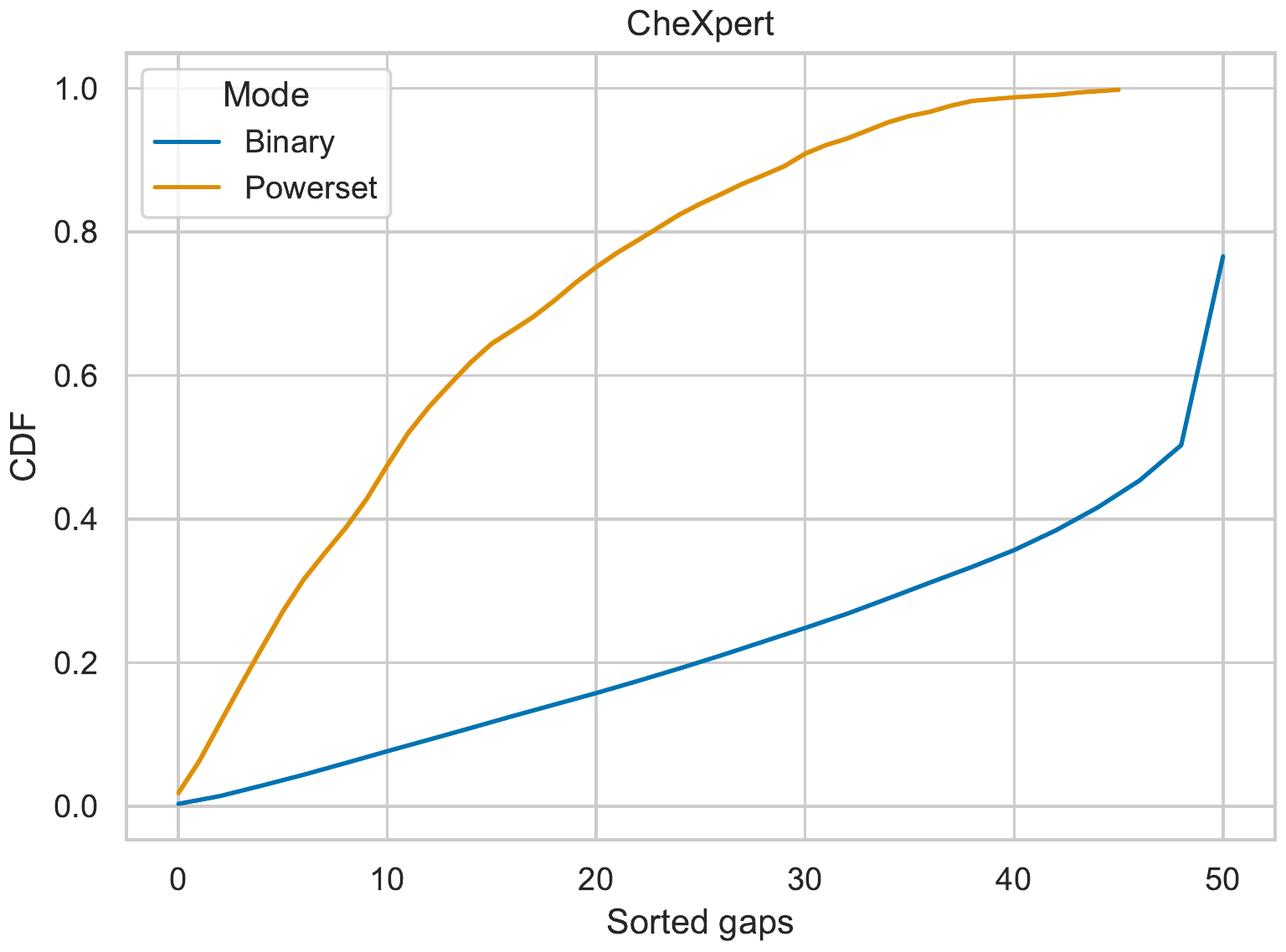} \\
\end{tabular}

}
\caption{\textbf{Binary vs Powerset PATE: CDF of gaps} (differences between vote counts). We use 50 teacher models trained on the Pascal VOC and CheXpert datasets. These are raw gaps without adding any noise to the vote histograms. Most of the gaps are relatively large ($>40$) for the Binary PATE per label, which shows that teachers are confident about the answers to queries. The average gap for the Powerset method is relatively low (only $18$ for Pascal VOC  and $13$ for CheXpert).}    
\label{fig:cdf-gaps}
\end{center}
\vskip 0.1in
\end{figure}

\subsubsection{Gaps and Performance Metrics}
With more labels for the Powerset method, we have more possible classes and the votes become more spread out. This can be measured by the gap, which is the difference between the maximum number of votes per class and the runner-up (the number of votes for the next class with the highest number of votes). In Figures~\ref{fig:powerset-binary-gaps-metrics-num-labels-pascal} and~\ref{fig:powerset-binary-gaps-metrics-num-labels-cxpert}, we plot the average gap (on the y-axis) for the test set across all histograms for a given number of labels (presented on the x-axis). We compare the Powerset PATE (denoted as Powerset) vs the Binary PATE per label (denoted as Binary). 
The average gap between votes is comparable for different number of labels of Binary PATE since this method considers each label separately and there are always only two classes. The average gap between votes decreases very fast for Powerset because of the exponential growth of number of classes with more labels considered. Additionally, the performance metrics: acc (accuracy), bac (balanced accuracy), area under the curve (auc), and mean average precision (map), are also higher for the Binary than Powerset method in case of CheXpert dataset and comparable for Pascal VOC.

\begin{figure*}[!t]
\begin{center}
\centerline{
\includegraphics[width=1.0\linewidth]{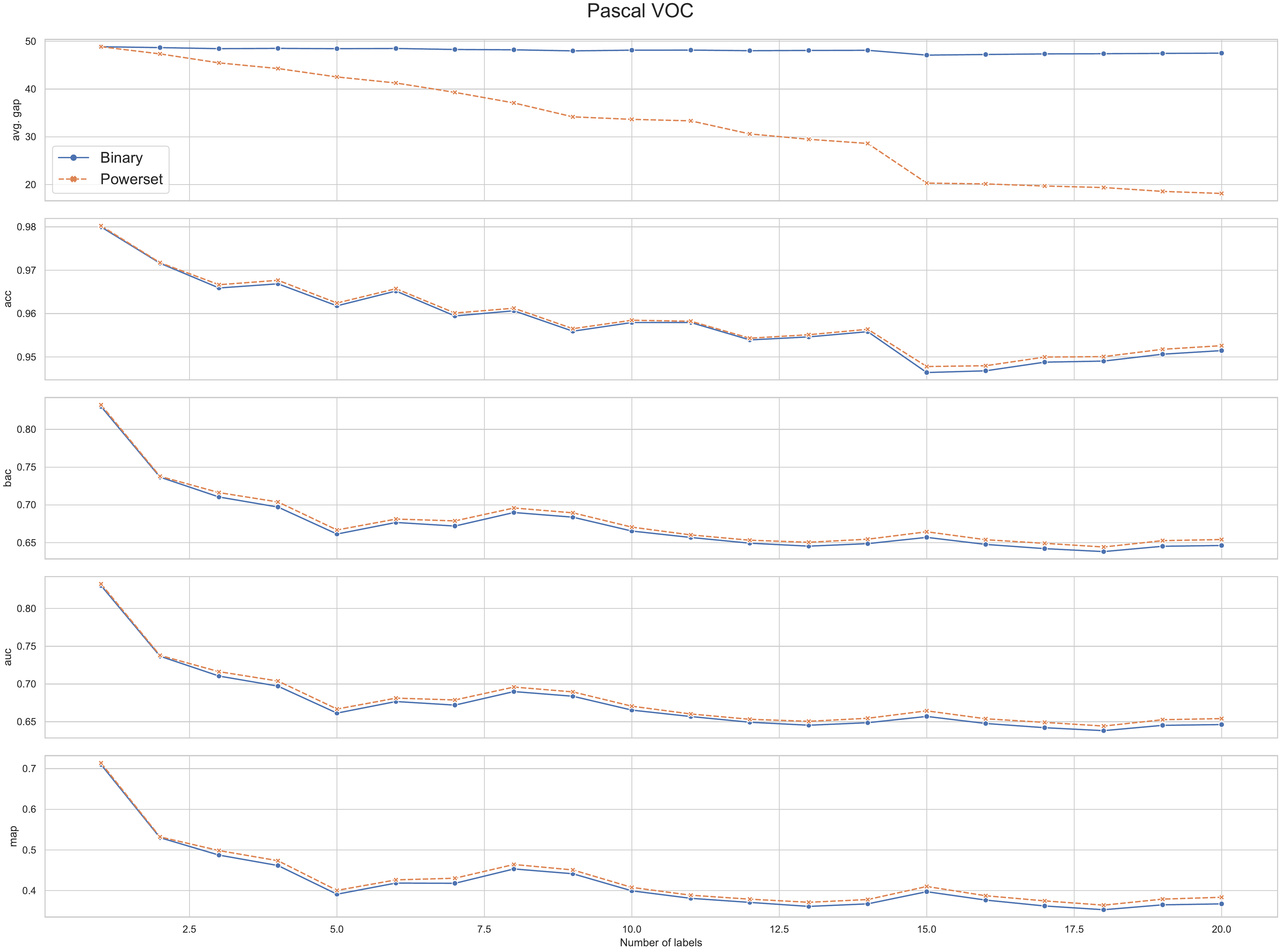}
}
\caption{\textbf{Binary vs Powerset PATE: gaps and metrics for Pascal VOC}. We use 50 teacher models. These are raw gaps without adding any noise to the vote histograms. 
}
\label{fig:powerset-binary-gaps-metrics-num-labels-pascal}
\end{center}
\vskip 0.1in
\end{figure*}

\begin{figure*}[!t]
\begin{center}
\centerline{
\includegraphics[width=1.0\linewidth]{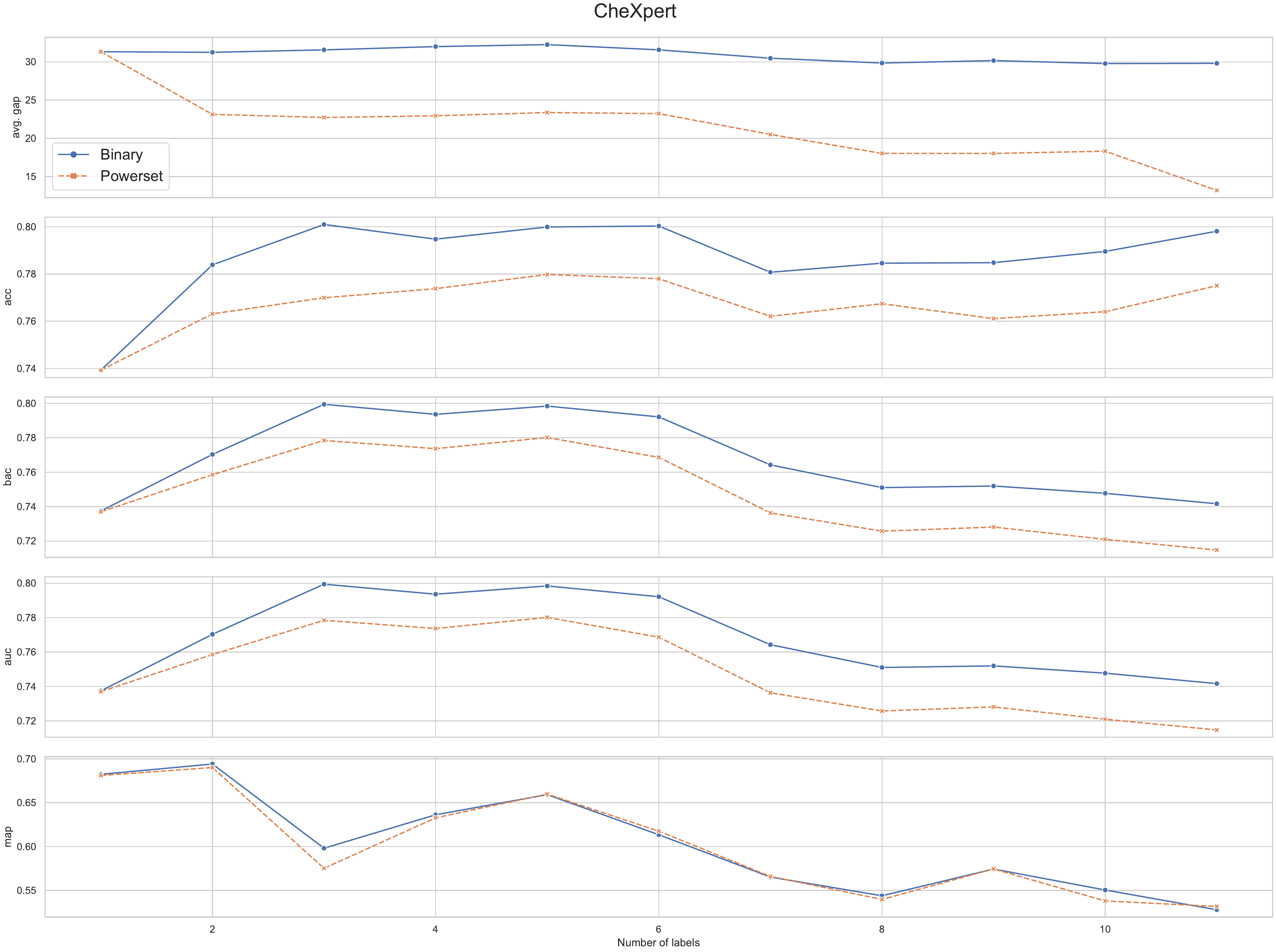}
}
\caption{\textbf{Binary vs Powerset PATE: gaps and metrics for CheXpert}. We use 50 teacher models. These are raw gaps without adding any noise to the vote histograms.}
\label{fig:powerset-binary-gaps-metrics-num-labels-cxpert}
\end{center}
\vskip 0.1in
\end{figure*}

\subsection{General Tuning of Hyper-Parameters}

We set the hyper-parameters by considering the utility and privacy of our proposed method. Based on Lemma~\ref{lemma:rdp-gaussian-multilabel}, we minimize the privacy budget by maximizing the $\sigma$ parameter of the Gaussian noise and minimizing the $\tau$ parameters. In Figure 1, we tune the value of the $\sigma$ parameter of the Gaussian noise based on the utility metrics: accuracy (ACC), balanced accuracy (BAC), mean average precision (MAP). We set $\sigma$ as 9, 10, and 7 for Pascal VOC, MIMIC, and CheXpert datasets, respectively. This allows us to answer many queries with high performance in terms of ACC, BAC, and MAP.

Similarly, we tune the $\tau$ values for $\ell_2$ norm in Figure~\ref{fig:app-tune-tau2} and for $\ell_1$ norm in Figure~\ref{fig:app-tune-tau1}.

Regarding the values of parameters $\sigma_G$, $T$, and $\sigma_T$, we follow the original work on PATE~\cite{papernot2017semi} and perform a grid search over a range of plausible values for each of the hyper-parameters.

The detailed analysis of tuning the hyper-parameters are for: $\tau$-s in Section~\ref{app:tuning-tau}, $\sigma_G$ in Section~\ref{app:tuning-sigma}, PATE’s thresholding ($T$, and $\sigma_T$) in Section~\ref{sec:tune-pate}, and probability thresholds in Section~\ref{sec:prob-threshold}.

\subsection{Tuning $\tau$-clipping}\label{app:tuning-tau}
We determine the minimum value of $\tau$ for clipping in $\ell_2$ and $\ell_1$ norms for each dataset so that the performance as measured by accuracy, BAC, AUC, and MAP, is preserved. We present the analysis in Figures~\ref{fig:app-tune-tau2} and~\ref{fig:app-tune-tau1}.

\begin{figure}[h]
\begin{center}
\begin{tabular}{cc}
\includegraphics[width=0.48\columnwidth]{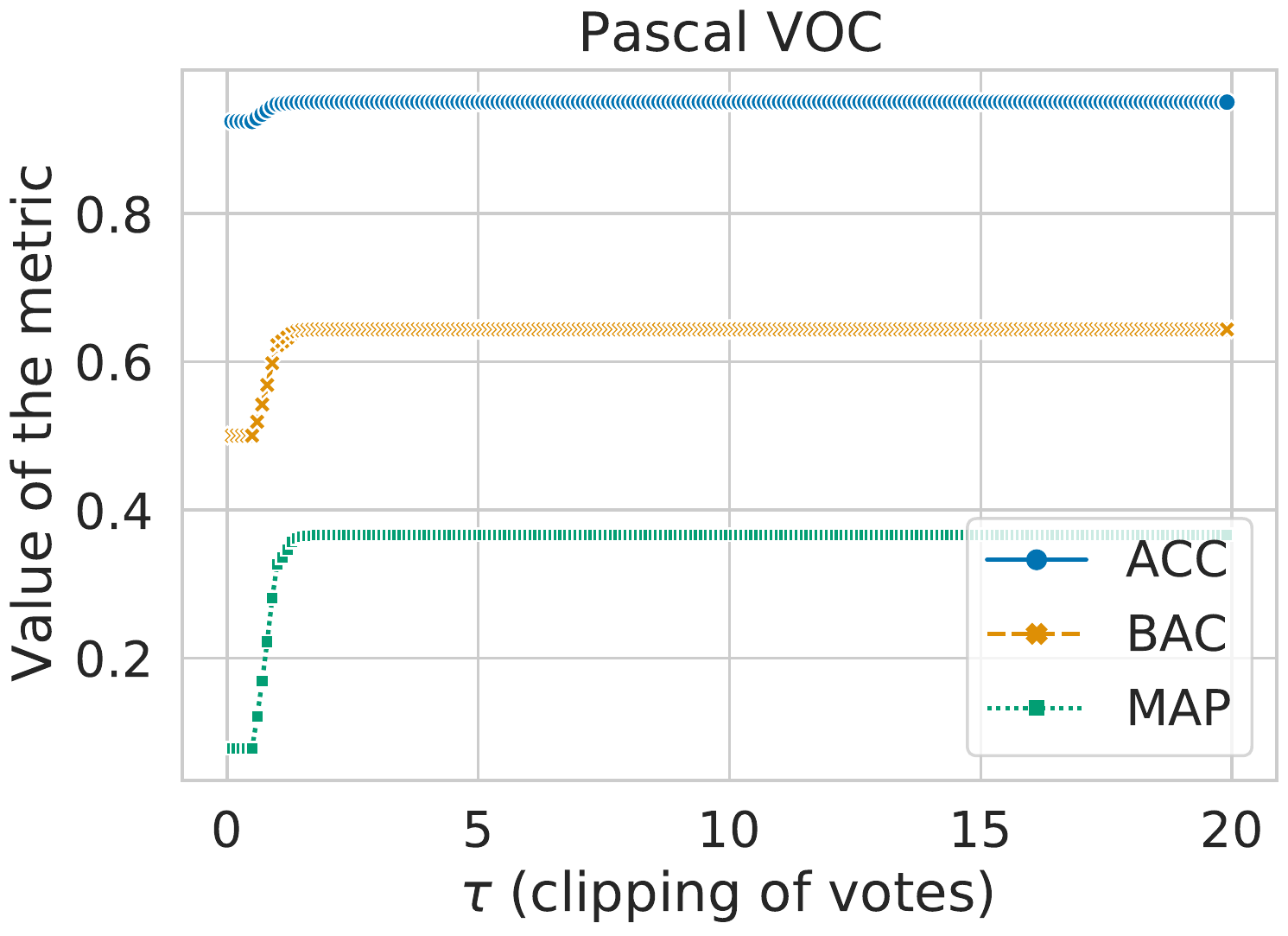} & \includegraphics[width=0.48\columnwidth]{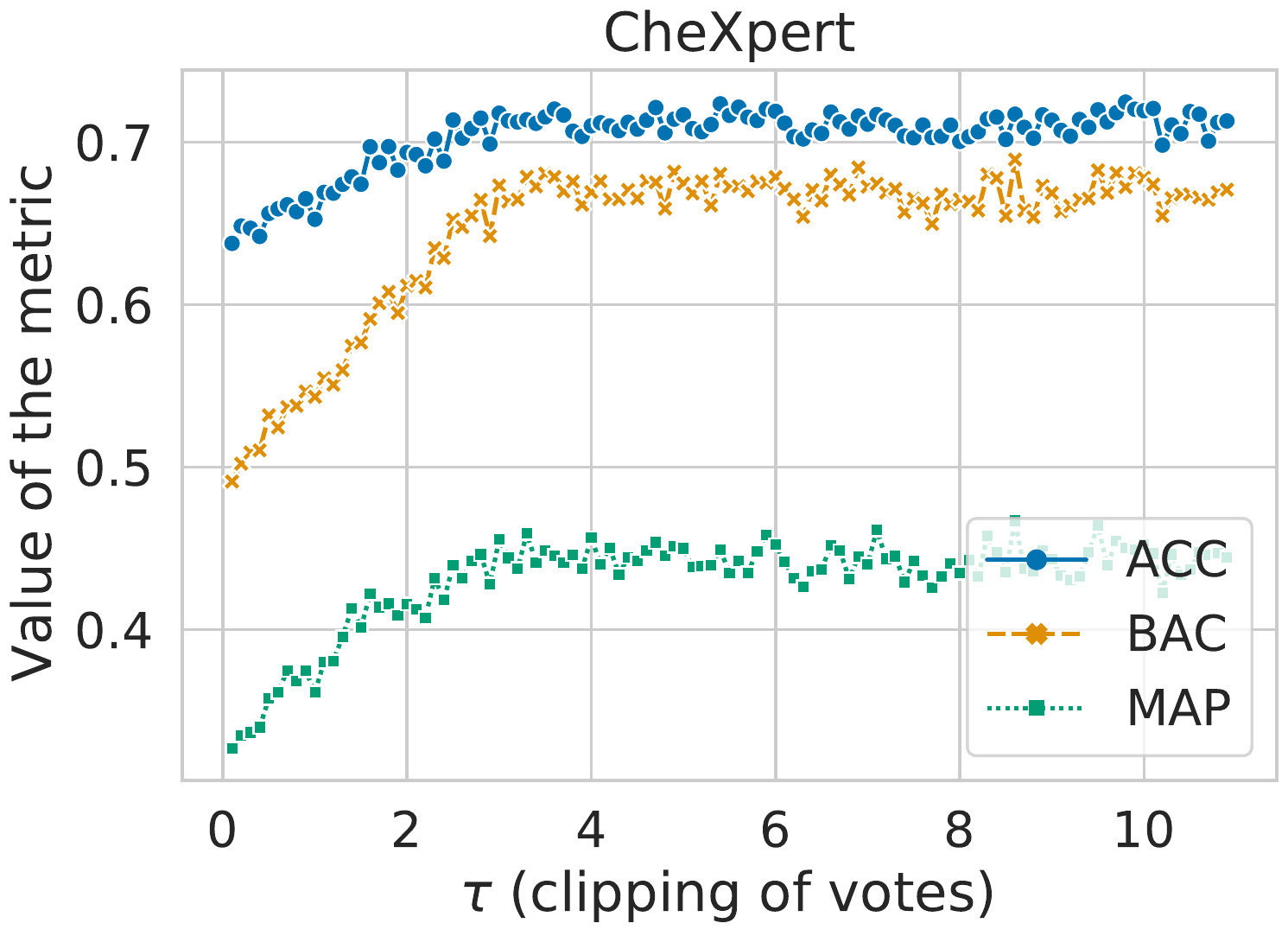} \\
  \includegraphics[width=0.48\columnwidth]{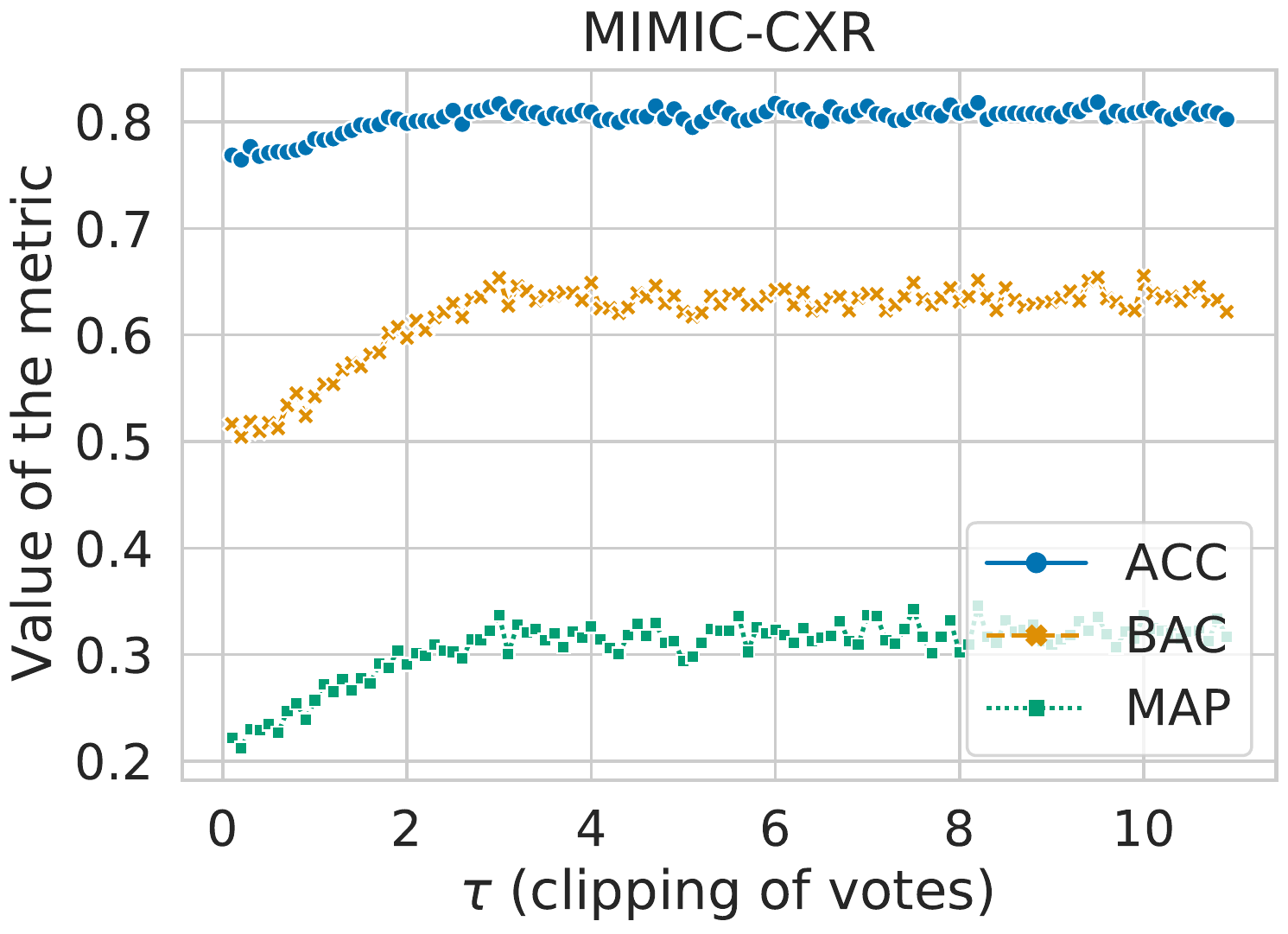}& \includegraphics[width=0.48\columnwidth]{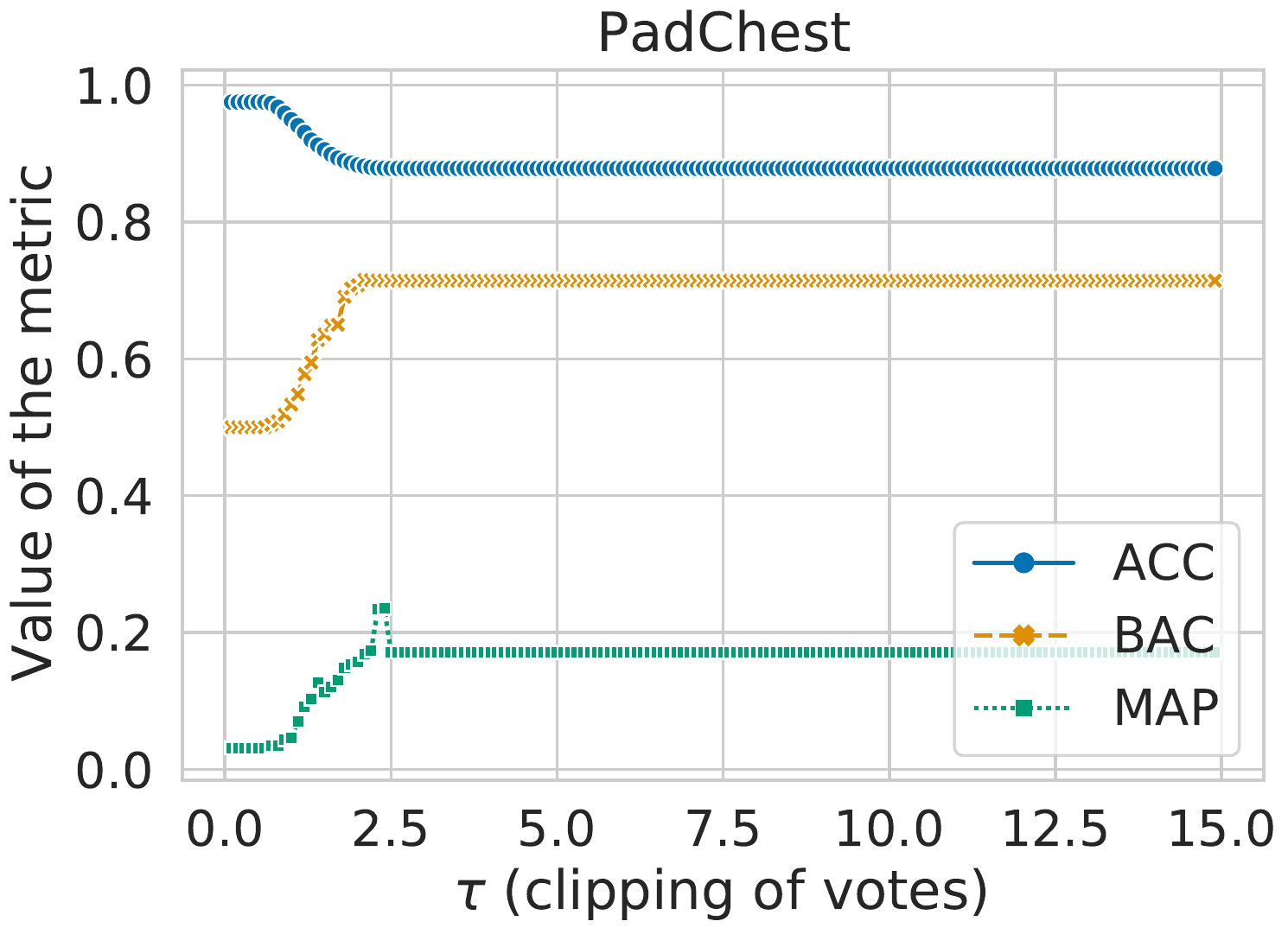} 
\end{tabular}
\caption{\textbf{Value of the metric vs $\tau$-clipping of votes in $\ell_2$ norm.} For a given $\tau$, we plot accuracy (ACC), balanced accuracy (BAC), and mean average precision (MAP).\label{fig:app-tune-tau2}
}
\end{center}
\end{figure}

\begin{figure}[!t]
\begin{center}
\begin{tabular}{cc}
\includegraphics[width=0.48\columnwidth]{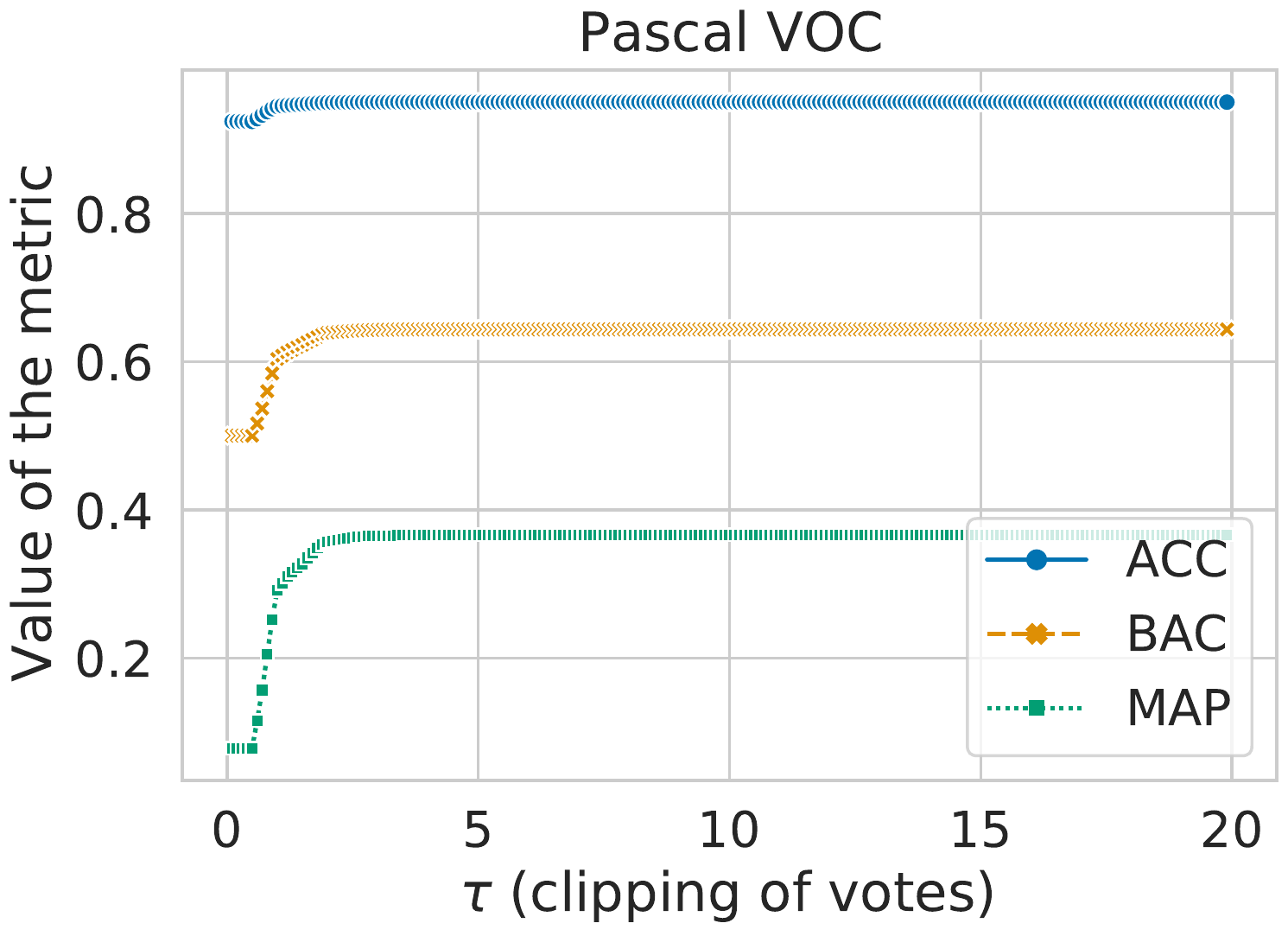} & \includegraphics[width=0.48\columnwidth]{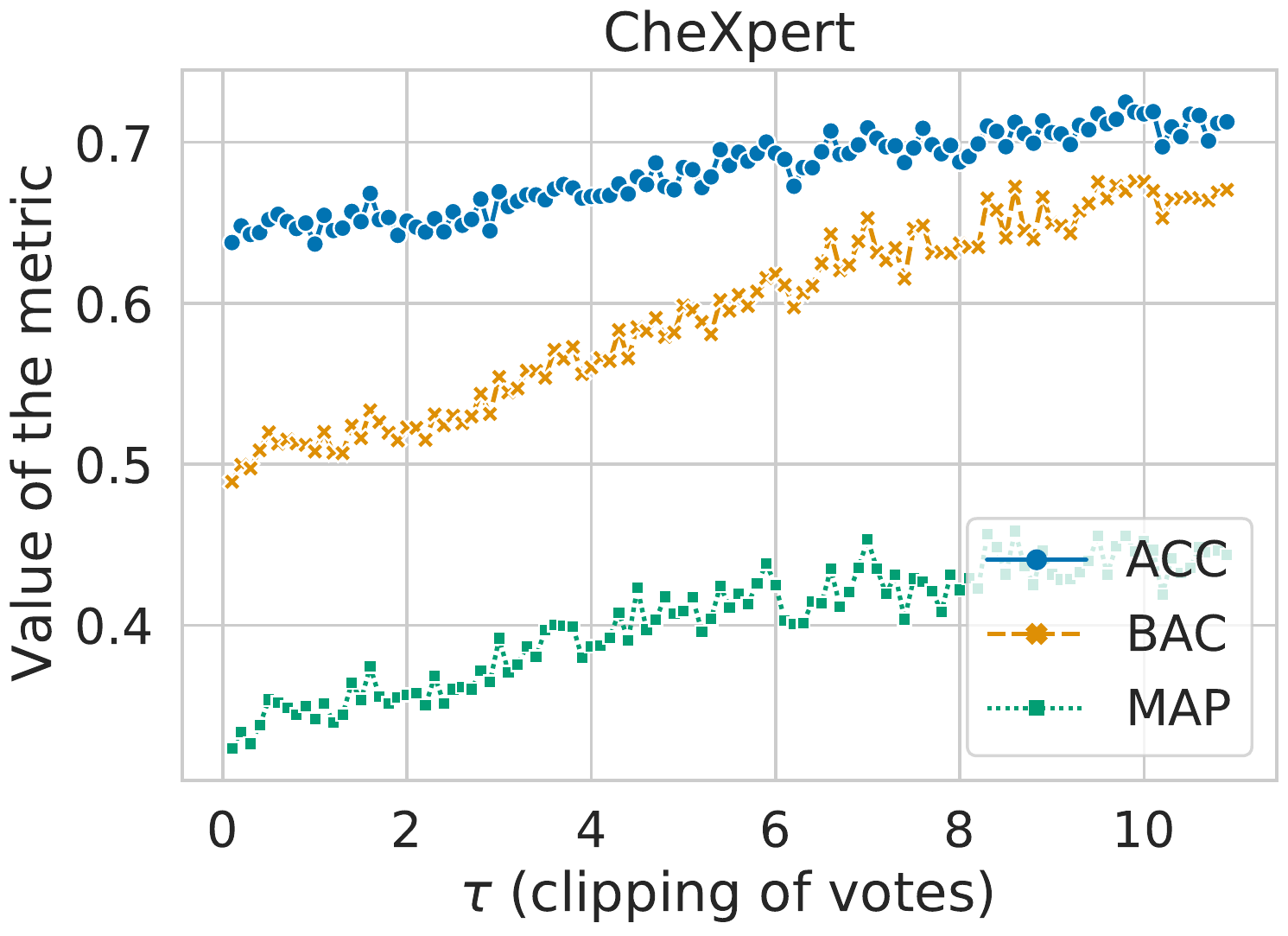} \\
\includegraphics[width=0.48\columnwidth]{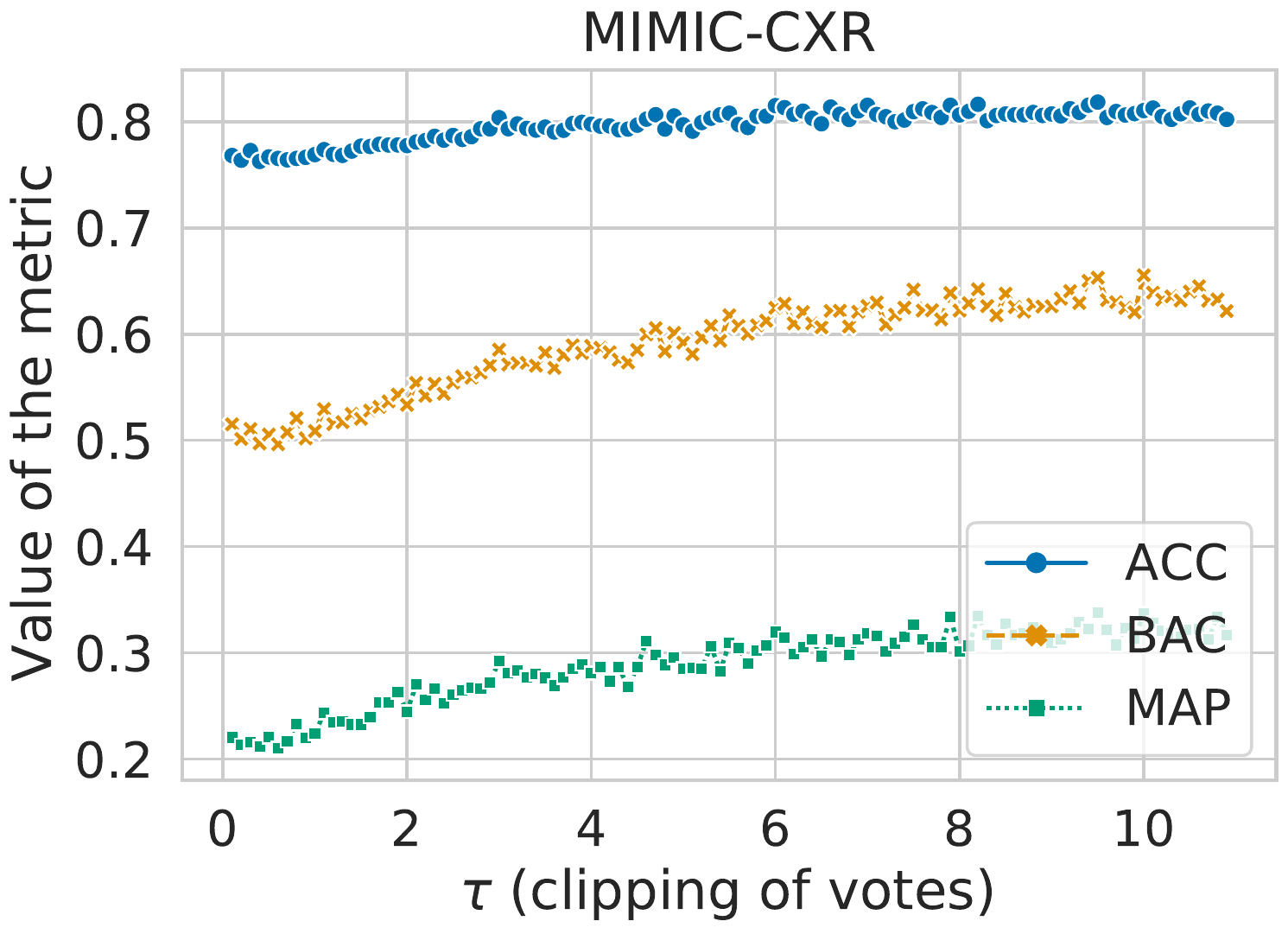} &
  \includegraphics[width=0.48\columnwidth]{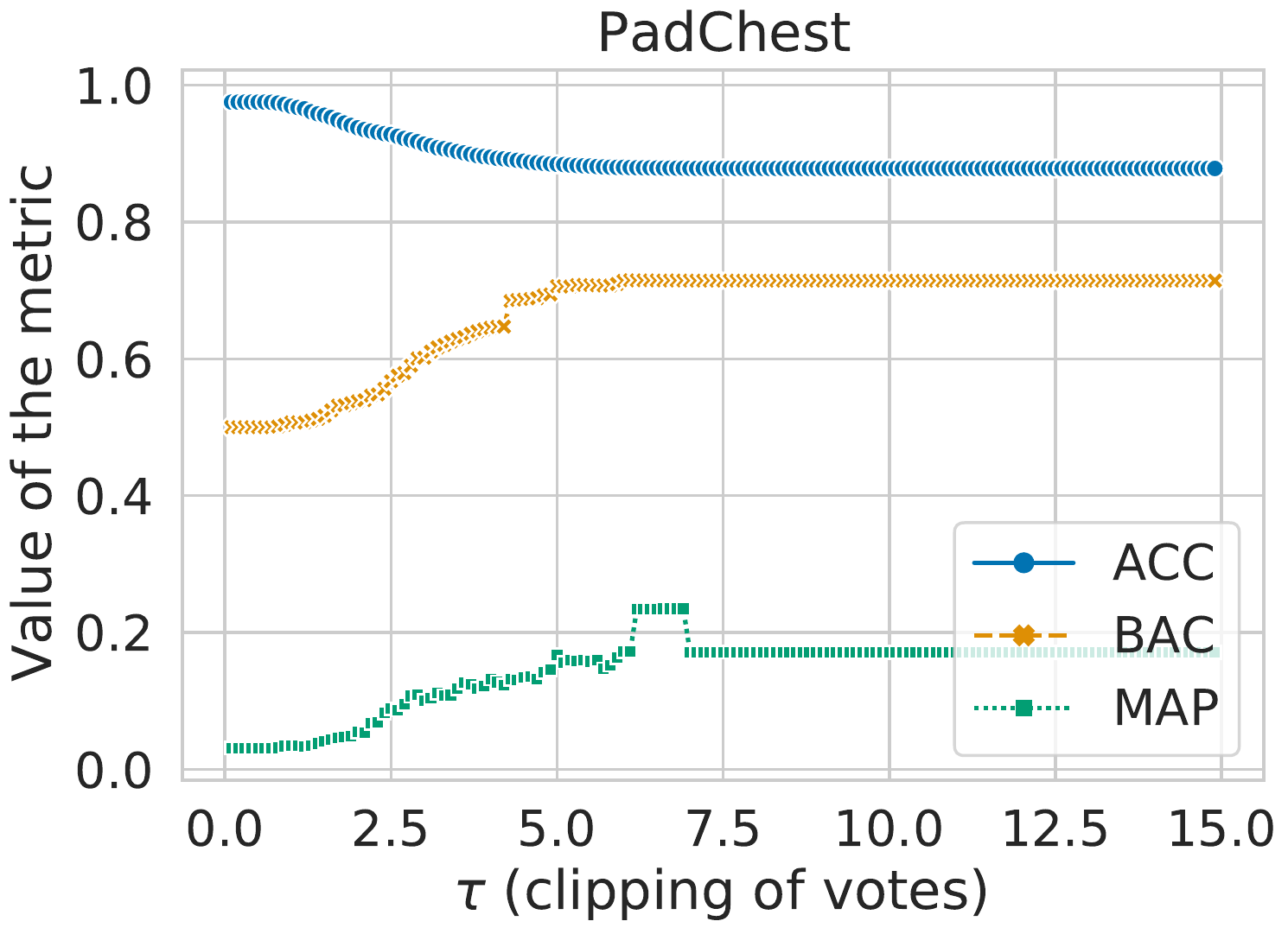}
\end{tabular}
\caption{\textbf{Value of the metric vs $\tau$-clipping of votes in $\ell_1$ norm.} For a given $\tau$, we plot accuracy (ACC), balanced accuracy (BAC), and mean average precision (MAP).\label{fig:app-tune-tau1}
}
\end{center}
\end{figure}

\subsection{Tuning $\sigma_{GNMax}$}
\label{app:tuning-sigma}
We determine how much of the privacy noise, expressed as the scale of Gaussian noise $\sigma$, can be added so that the performance as measured by accuracy, BAC, AUC, and MAP, is preserved. The experiment is run using the binary PATE per label and presented in Figure~\ref{fig:app-tune-sigma}.

\begin{figure*}[!t]
\begin{center}
\includegraphics[width=1.0\linewidth]{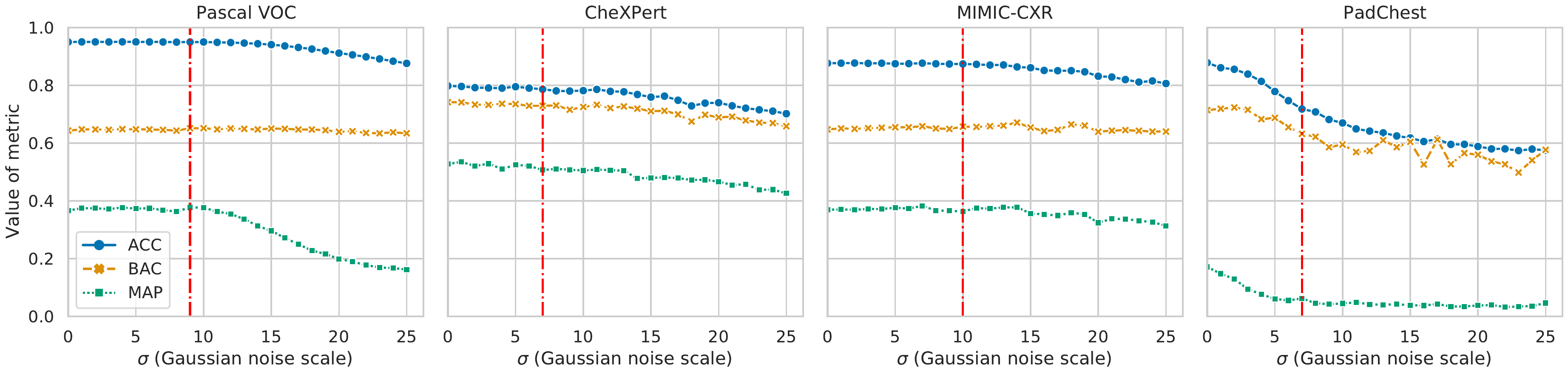}
\caption{\textbf{Value of the metric versus noise standard deviation $\sigma$.} For a given $\sigma$, we plot accuracy (ACC), balanced accuracy (BAC), and mean average precision (MAP).\label{fig:app-tune-sigma}
}
\end{center}
\end{figure*}

\subsection{Tuning PATE}
\label{sec:tune-pate}

There are three parameters to tune: the differential privacy Gaussian noise standard deviation $\sigma_{G}$, the count threshold $T$, and the thresholding Gaussian noise standard deviation $\sigma_{T}$. We first disable the thresholding ($\sigma_{T}$ and $T$) and tune $\sigma_{G}$ to achieve a high BAC of the noisy ensemble while maximizing the number of answered queries. 
In Figure~\ref{fig:ap-chexpert-ensemble-self-pate}
we tune the $\sigma_{G}$ parameter from PATE for different configurations of retraining. The goal is to maintain a high accuracy while selecting $\sigma_{G}$ with high value so that as many queries as possible are answered.
After tuning $\sigma_{G}$,
we grid-search $\sigma_{T}$ and $T$. Thresholding is useful if we want to add a relatively small amount of Gaussian noise ($\sigma_{G}$) in the noisy max. For example, for $\sigma_{G} = 7$ and $50$ teacher models trained on CheXpert, the maximum number of labels answered without thresholding is $737$ ($67$ queries) with a BAC above $0.69$. With thresholding, e.g., at $T=50$ and $\sigma_T=30$, we can answer an average of $756$ labels, up to $834$.
%
A well tuned threshold can also help reduce $\sigma_G$, which is the major influencing factor on the final BA of the noisy ensemble. Note that tuning the threshold benefits from a confident ensemble: here, we find the ensemble is often confident with a positive-negative vote difference of $40$ of max $49$. Also note that the BAC can be higher on easier queries.

\subsection{Compare Different Methods}
In Figure~\ref{fig:app-compare-methods-directly}, we directly compare the Binary PATE (denoted as PATE), \tpate, and clipping in $\ell_2$ and $\ell_1$ norms. The error regions are for the three querying parties.

\begin{figure}[!t]
\begin{center}
\includegraphics[width=1.0\linewidth]{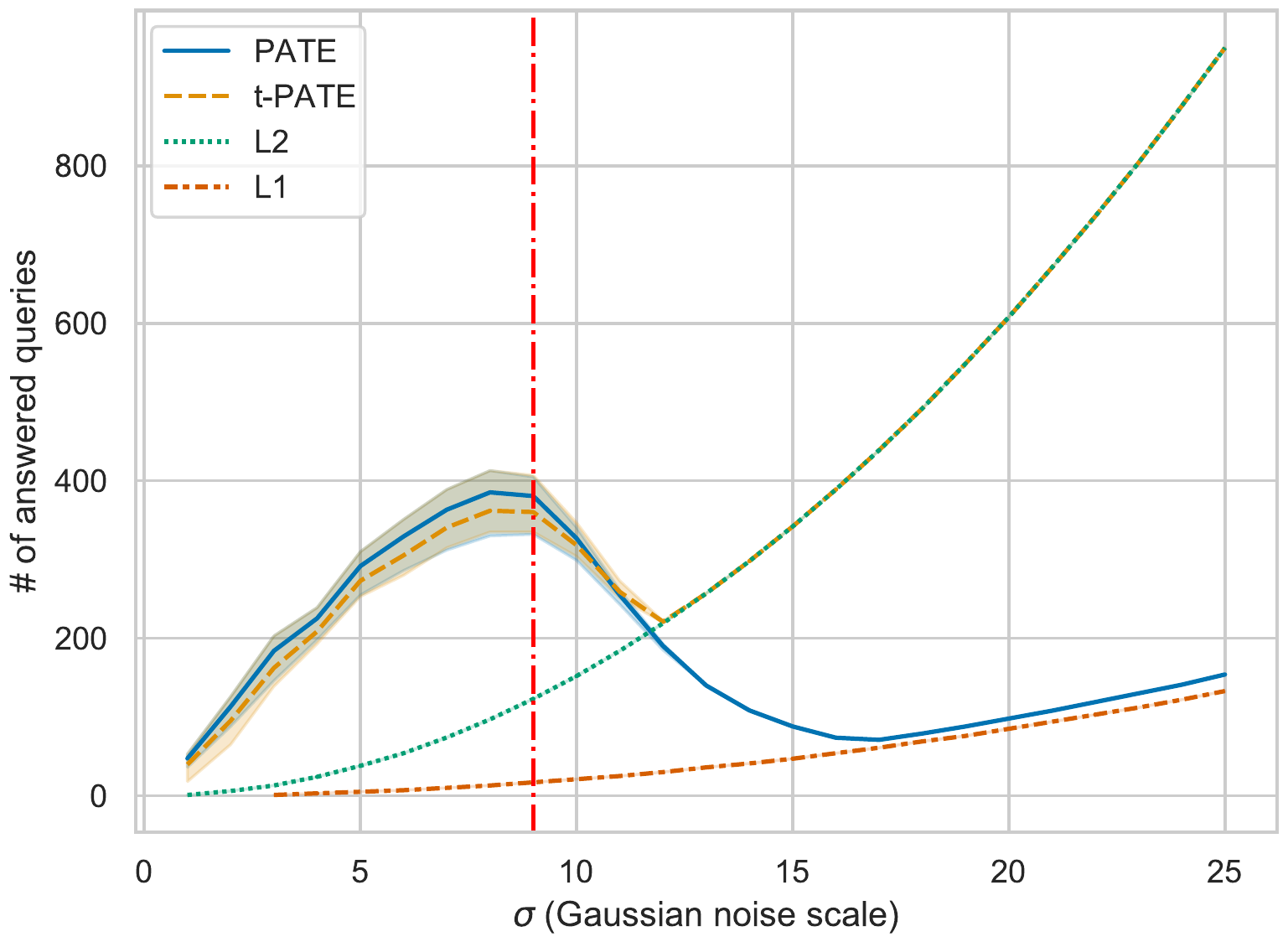}
\caption{\textbf{Compare methods: number of answered queries vs $\sigma$} - using the Pascal VOC dataset, with $\tau_1=3.4$ (set for $\ell_1-$norm clipping, and $\tau_2=1.8$ set for \tpate and $\ell_2-$norm clipping. The red vertical line denotes the selected value of $\sigma$.\label{fig:app-compare-methods-directly}
}
\end{center}
\end{figure}

\begin{table*}
\caption{\textbf{Performance of \multi CaPC} w.r.t. ACC, BAC, AUC, and MAP, on Pascal VOC (PA), CheXpert (CX), MIMIC (MC), and PadChest (PC). (-) denotes N/A. PB ($\varepsilon$) is the privacy budget.}
\label{tab:diff-epsilon}
\vskip -0.3in
\begin{center}
\begin{small}
\begin{sc}
\begin{tabular}{cclccccc}\toprule Dataset & \shortstack{\# of \\ Models} &  State & PB ($\varepsilon$) &          ACC &                  BAC &          AUC &          MAP \\
\midrule
\multirow{4}{*}{PA} & 1 &  Initial &                 - & .97 &  .85 & .97 & .85 \\
& 50 &  Before CaPC &  - & .93$\pm$.02 &          .59$\pm$.01 &  .88$\pm$.01 &  .54$\pm$.01 \\   & 50 &   After CaPC &                10 &  \textbf{.94$\pm$.01} &          .\textbf{62$\pm$.01} &  .88$\pm$.01 &  .54$\pm$.01 \\   & 50 &   After CaPC &                20 &  \textbf{.94$\pm$.01} &          \textbf{.64$\pm$.01} & \textbf{.89$\pm$.01} &  \textbf{.55$\pm$.01} \\\hline\multirow{3}{*}{CX} & 1 &      Initial &                 - &          .79 &                  .78 &          .86 &          .72 \\   & 50 &  Before CaPC &                 - &  .77$\pm$.06 &          .66$\pm$.02 &  .75$\pm$.02 &  .58$\pm$.02 \\   & 50 &   After CaPC &                20 &  .76$\pm$.07 &  \textbf{.69$\pm$.01} &  \textbf{.77$\pm$.01} &  \textbf{.59$\pm$.01} \\\hline\multirow{3}{*}{MC} & 1 &      Initial &                 - &         .90 &                  .74 &          .84 &          .51 \\   & 50 &  Before CaPC &                 - &  .84$\pm$.07 &          .63$\pm$.03 &  .78$\pm$.03 &  .43$\pm$.02 \\   & 50 &   After CaPC &                20 &  \textbf{.85$\pm$.05} &          \textbf{.64$\pm$.01} &  \textbf{.79$\pm$.01} &  \textbf{.45$\pm$.03 }\\\hline\multirow{3}{*}{PC} & 1 &      Initial &                 - &          .86 &                  .79 &          .90 &          .37 \\   & 10 &  Before CaPC &                 - &  .90$\pm$.01 &          .64$\pm$.01 &  .79$\pm$.01 &  .16$\pm$.01 \\   & 10 &   After CaPC &                20 &  .88$\pm$.01 &          .64$\pm$.01 &  .75$\pm$.01 &  .14$\pm$.01 \\
\bottomrule

\end{tabular}
\end{sc}
\end{small}
\end{center}
\end{table*}
\begin{table*}
\caption{\textbf{Performance of \multi CaPC with $\tau$-PATE} w.r.t. ACC, BAC, AUC, and MAP, on Pascal VOC (PA), CheXpert (CX), MIMIC (MC), and PadChest (PC). (-) denotes N/A. PB ($\varepsilon$) is the privacy budget. T (Y/N) in the table refers to the PATE thresholding i.e. corresponding to the parameters $T$ and $\sigma_T$ being used (Y) or not used (N). Note that the probability thresholding per layer was used in these experiments (as described in Section~\ref{sec:prob-threshold}).}
\label{tab:diff-epsilon}
\vskip -0.3in
\begin{center}
\begin{small}
\addtolength{\tabcolsep}{-3.5pt}
\begin{sc}
\begin{tabular}{cclcccccc}\toprule Dataset & \shortstack{\# of \\ Models} &  State & T(Y/N) & PB ($\varepsilon$) &          ACC &                  BAC &          AUC &          MAP \\
\midrule
\multirow{3}{*}{PA} & 1 &      Initial &                 - &            - &                  .88 &          .97 &          .91 \\
& 50 &  Before CaPC &  - & - & .93$\pm$.01 &          .59$\pm$.01 &  .89$\pm$.01 &  .54$\pm$.02 \\ 
& 50 &  After CaPC &  N & 20 & \textbf{.94$\pm$.01} &          \textbf{.60$\pm$.01} &  .89$\pm$.01 &  .54$\pm$.02 \\ 
\hline
\multirow{4}{*}{CX} & 1 &      Initial & - &                - &          .79 &                  .78 &          .86 &          .72 \\   & 50 &  Before CaPC &     - &            - &  .75$\pm$.02 &          .69$\pm$.01 &  .77$\pm$.01 &  .59$\pm$.01 \\   & 50 &   After CaPC &    Y    &        20 &  .73$\pm$.01 &  \textbf{.69$\pm$.01} &  .76$\pm$.01 &  \textbf{.59$\pm$.01} \\ & 50
& After CaPC &    N    &        20 &  .74$\pm$.01 &  \textbf{.70$\pm$.01} &  \textbf{.77$\pm$.01} &  \textbf{.59$\pm$.01}
\\\hline\multirow{3}{*}{MC} & 1 &      Initial &   - &              - &         .90 &                  .74 &          .84 &          .51 \\   & 50 &  Before CaPC &       - &          - &  .84$\pm$.07 &          .63$\pm$.03 &  .78$\pm$.03 &  .43$\pm$.02 \\   & 50 &   After CaPC &  Y &              20 &  .84$\pm$.02 &          \textbf{.64$\pm$.04} &  .77$\pm$.02 &  \textbf{.44$\pm$.01 }\\\hline\multirow{4}{*}{PC} & 1 &      Initial &        - &         - &          .86 &                  .79 &          .90 &          .37 \\   & 10 &  Before CaPC &   - &              - &  .82$\pm$.01 &          .64$\pm$.01 &  .79$\pm$.01 &  .17$\pm$.01 \\ & 10 &   After CaPC &         Y &       20 &  \textbf{.86$\pm$.04} &          .61$\pm$.01 &  .71$\pm$.03 &  .14$\pm$.02 \\   & 10 &   After CaPC &         N &       20 &  \textbf{.86$\pm$.02} &          .60$\pm$.01 &  .72$\pm$.02 &  .15$\pm$.03 \\
\bottomrule

\end{tabular}
\end{sc}
\end{small}
\end{center}
\end{table*}

\begin{figure}[ht]
\vskip 0.1in
\begin{center}
\centerline{\includegraphics[width=\columnwidth]{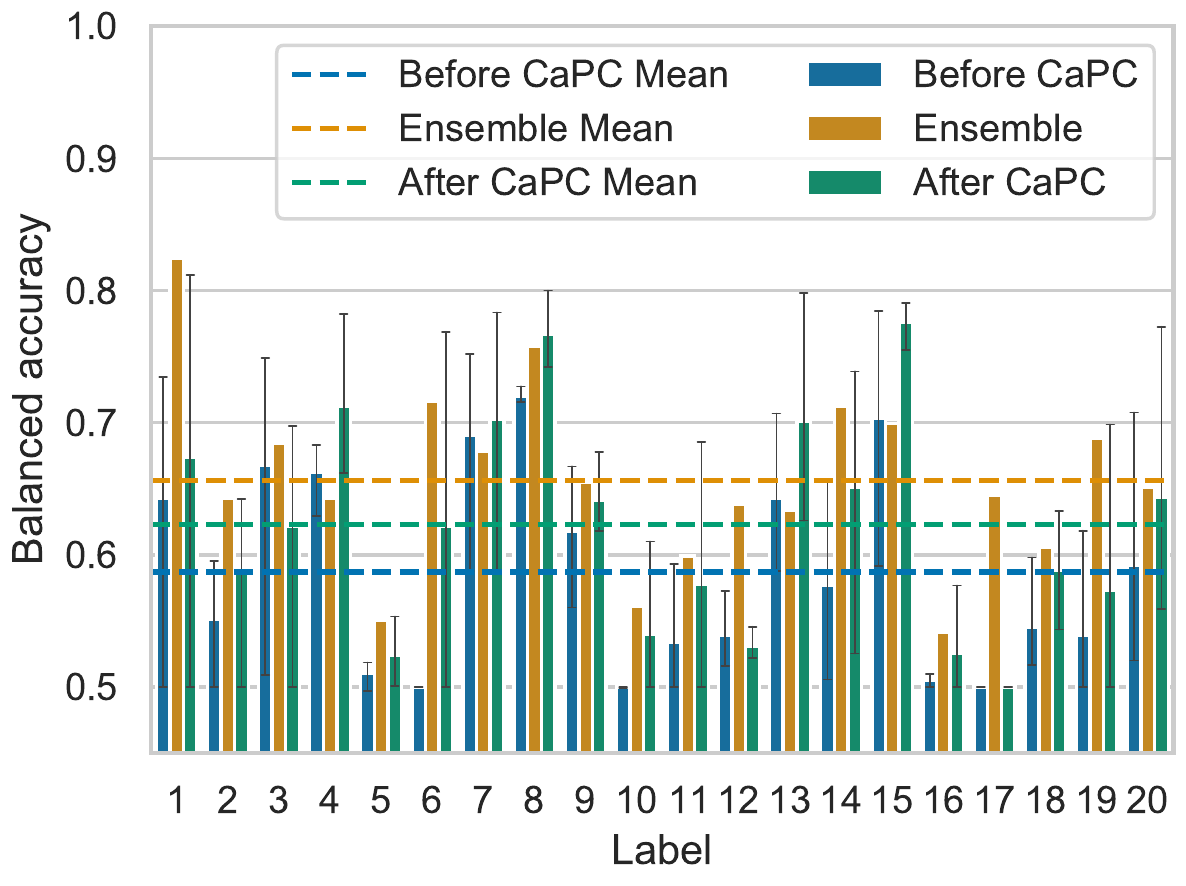}}
\caption{\textbf{Retraining with privacy budget $\varepsilon=10$} for the Pascal VOC dataset.}
\label{fig:PascalVOC-eps10}
\end{center}
\end{figure}

In Table~\ref{tab:diff-epsilon}, we show the performance of the retrained models when using \capc with various privacy budgets $\varepsilon$. In Figure~\ref{fig:PascalVOC-eps10} we show the detailed per label change in BAC for the retraining with privacy budget $\varepsilon=10$. We observe an increase in average BA by around $0.03$ after retraining when the privacy budget is set to $\varepsilon=10$. We present detailed analysis of the Binary PATE performance in Table~\ref{tab:binary-pate-vary-epsilon} on the Pascal VOC dataset when selecting the privacy budget $\varepsilon$ in the range from 1 to 20. The number of answered queries is proportional to the square of the privacy budget.

We also compare the performance after re-training with and without the (confidence) thresholding mechanism used in PATE (represented by $\sigma_{T}$ and threshold $T$ parameters). We observe that in case of the medical datasets, there can be a slightly higher increase of the metrics (accuracy, BAC, AUC, MAP), when we do not perform the thresholding. For example, for the CheXpert dataset, when thresholding is not used, the improvement is higher by about one percentage point across all metrics when compared to the option with thresholding. For PadChest, such increase is for AUC and MAP, for BAC we see a drop by one percentage point, and there is no difference in terms of accuracy (remains at the level of about $0.86$). This requires further investigation. Intuitively, we observe that the metrics can increase substantially with thresholding for the labels (pathologies) that are easy to classify. However, the most difficult to predict pathologies are left without answer and no improvement is made on them. On the other hand, without thresholding, we have to answer all labels and correct predictions on the hardest labels can produce substantial improvement in the classification of these hard labels. 

\subsection{Tuning Probability Threshold}
\label{sec:prob-threshold}
We tune the global (applied to all labels) probability threshold $\gamma$ that determines if a given probability denotes positive ($P > \gamma$), or a negative ($P \le \gamma$) vote. The best global $\gamma$ for the $50$ models trained on the CheXpert dataset is $0.32$ with the balanced accuracy of $0.702$ and the $\mu$ value of $0.45$, where $\mu$ is the average probability of predictions after applying the element-wise sigmoid function to the logits. 

We also tune the $\gamma$ threshold per label for CheXpert. We find the following probability thresholds per label using the validation set on the ensemble: $0.53, 0.5, 0.18, 0.56, 0.56, 0.21, 0.23, 0.46, 0.7, 0.2, 0.32$. Next, we select the 3 teacher models with the lowest BAC and retrain them with \capc. The average metrics BAC, AUC, and MAP before retraining are $0.63, 0.70, 0.53$, and after retraining, we observe a significant improvement to $0.68, 0.75, 0.58$, respectively. Thus, the value for each metric increases by around $0.05$. 
We find the biggest performance improvement for these weakest models after learning from better teachers and retraining. We present detailed results per label and for BA as well as AUC metrics in Figure~\ref{fig:retrain-cxpert-tau}.

\subsection{New Data Independent Bound}

We also proved a new data-independent bound that achieves tighter guarantees (Lemma~\ref{lemma:rdp-gaussian-multilabel}). The results for the method are displayed in Figure~\ref{fig:tpate} under the label \textit{L2-DI}, which denotes $\ell_2$-norm $\tau$ clipping of the ballots. \textit{L2-DI} can be compared with the Binary PATE data-dependent analysis, which is under the label Binary PATE. For the data sets like Pascal VOC, on which we achieve higher value of the metrics (Accuracy (ACC) 94\%, BAC 64\%, AUC 89\%, MAP 55\%), the data-dependent analysis allows us to release around 3.5x more queries (427 queries whereas the data-independent one only 123 queries, with all the parameters, such as amount of Gaussian noise added, being equal). On the other hand, for the data sets like MIMIC with lower performance on the metrics (Accuracy (ACC) 85\%, BAC 64\%, AUC 79\%, MAP 45\%), the difference is less pronounced and the data-dependent analysis allows us to release around 1.9x more queries (94 queries whereas the data-independent one releases 48 queries).

\subsection{Cross-Domain Retraining}

In real-world healthcare scenarios, there are often rare diseases that may be difficult to accurately model with machine learning. Even coalitions of hospitals sharing similar data distributions of the same domain may not see benefits, due to poor aggregate performance. However, cross-domain collaboration through \multi \capc can help improve performance in these cases. The hospital annotation discrepancies may pose barriers: here, we simulate this by the different X-ray image labels between PadChest and CheXpert (see Supplement Section~\ref{app:datasets}). 

To overcome this, we take the union of labels between the medical datasets and follow the experimental setup of~\cite{xrayCrossDomain2020}.
We observe poor performing models
on the PadChest dataset, with a low average performance of BAC $=0.57$. Because of this, the benefits of \multi \capc within this coalition of hospitals are limited, since the ensemble is only marginally better. However, if this group of hospitals collaborated with another from a different but related domain, here represented by CheXpert, they may be able to see additional benefits. The models trained on this dataset achieve a higher BA (particularly on the first $5$ shared pathologies as presented in Figure~\ref{fig:cross-domain}). Thus, the ensemble of all models engages in \multi \capc, and the models trained on CheXpert act as answering parties to provide labels for querying parties from PadChest. Using a $\sigma_G=9$, $T=50$, and $\sigma_{T}=30$, we observe a higher BAC on all of those $5$ pathologies, where we do not see a significant decrease on the other pathologies. Thus, \multi \capc provided benefits for the PadChest models in this cross-domain scenario.

\begin{figure}[t]
\begin{center}
\centerline{\includegraphics[width=1.0\linewidth]{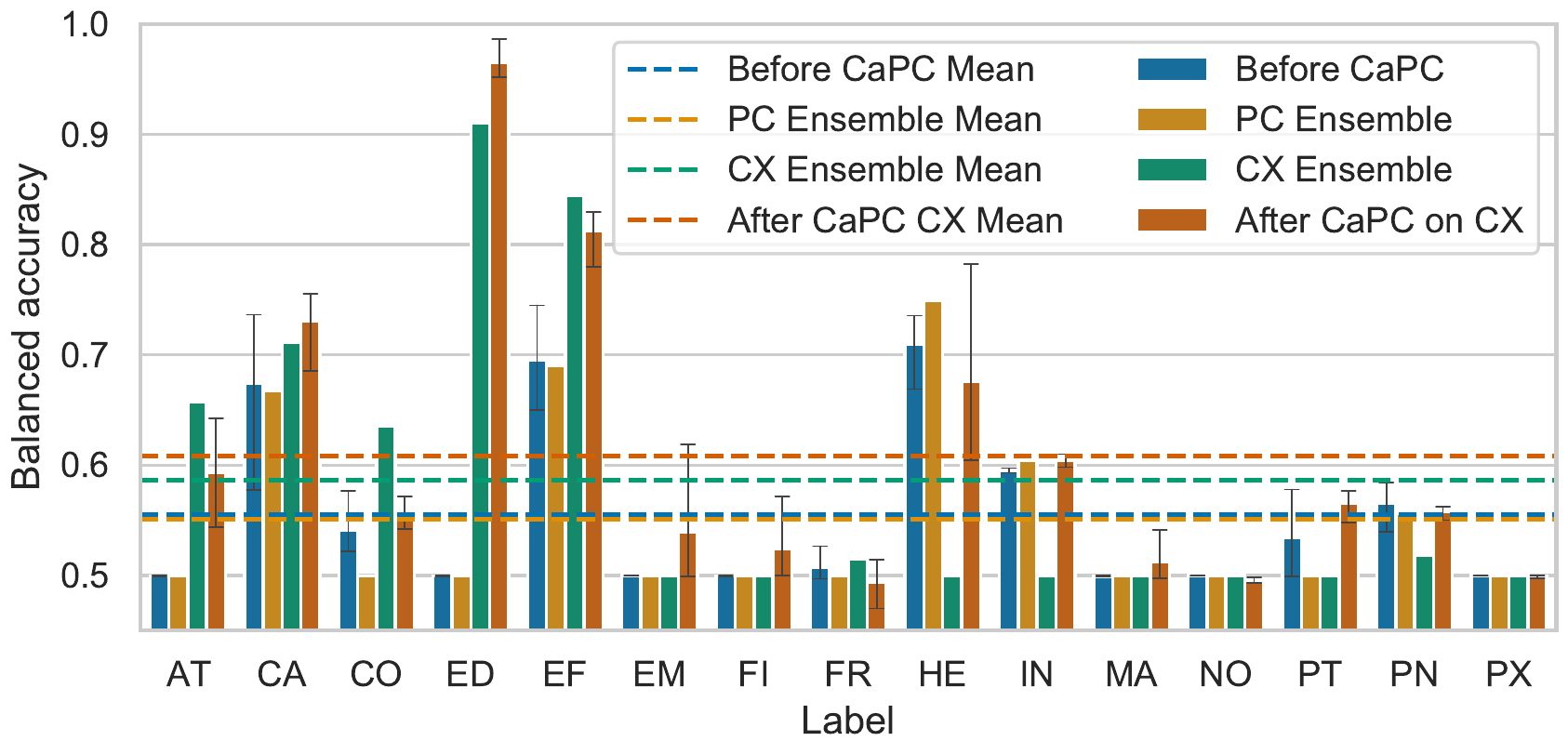}}
\caption{\textbf{Cross-domain retraining with  \capc}. We train 10 models on PadChest (PC) and compare their performance on PadChest test set against the ensemble of these models (PC Ensemble), the ensemble of 50 CheXpert models (CX Ensemble), and finally retrain the 10 PadChest models via binary \multi PATE using the CheXpert ensemble (after \capc on CX).\label{fig:cross-domain}}
\end{center}
\end{figure}


\begin{figure*}[ht]
\begin{center}
\centerline{\includegraphics[width=1.0\linewidth]{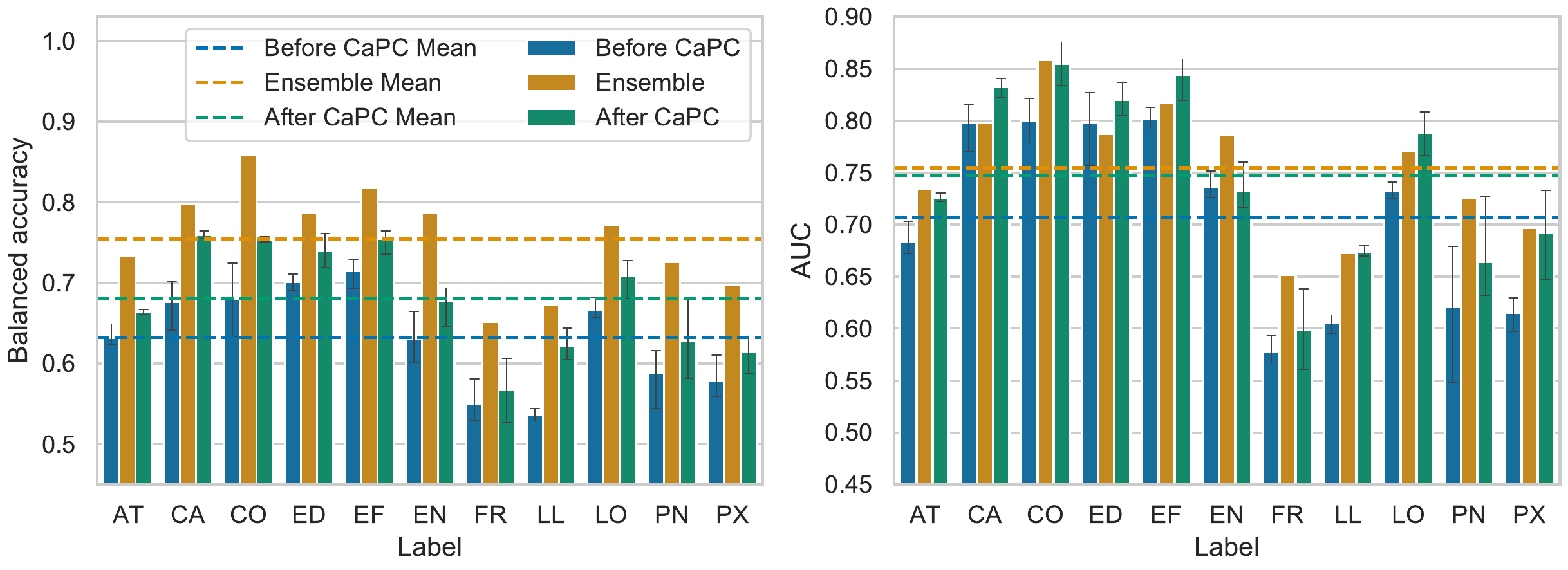}}
\caption{\textbf{Using \capc to improve the weakest models.} \emph{Dashed lines represent mean values of the metrics: Balanced Accuracy (BAC) and AUC.}\label{fig:retrain-cxpert-tau} We retrain a given model using additional CheXpert data labelled by the ensemble of all the other models trained on CheXpert. All metrics are improved after retraining by around $0.05$ on average. 
}
\end{center}
\end{figure*}

\begin{figure*}[h]
\begin{center}

\begin{tabular}{ccc}
\includegraphics[width=0.31\linewidth]{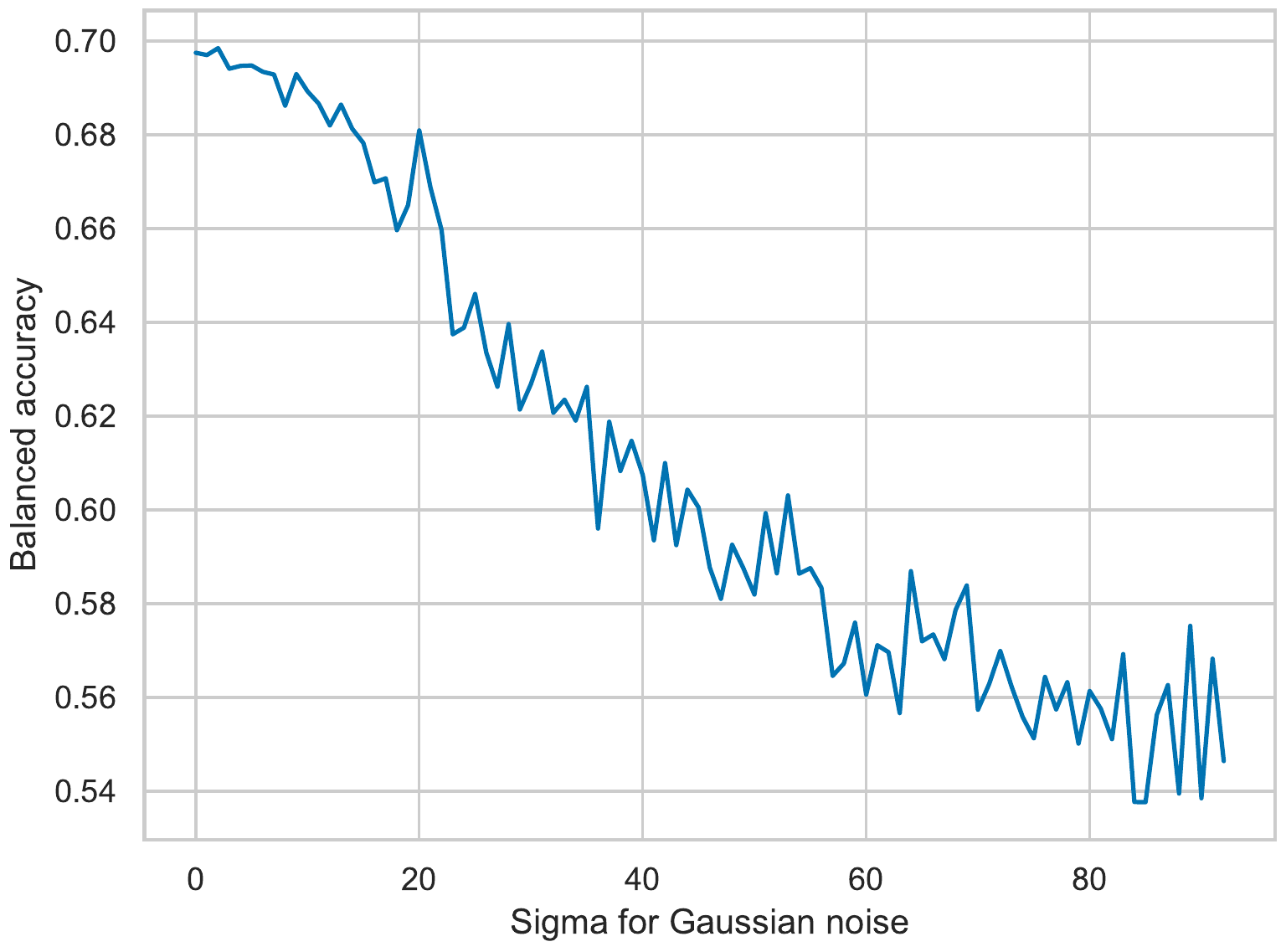} & \includegraphics[width=0.31\linewidth]{images/chexpert_ensemble_sigma_gnmax.pdf} & \includegraphics[width=0.31\linewidth]{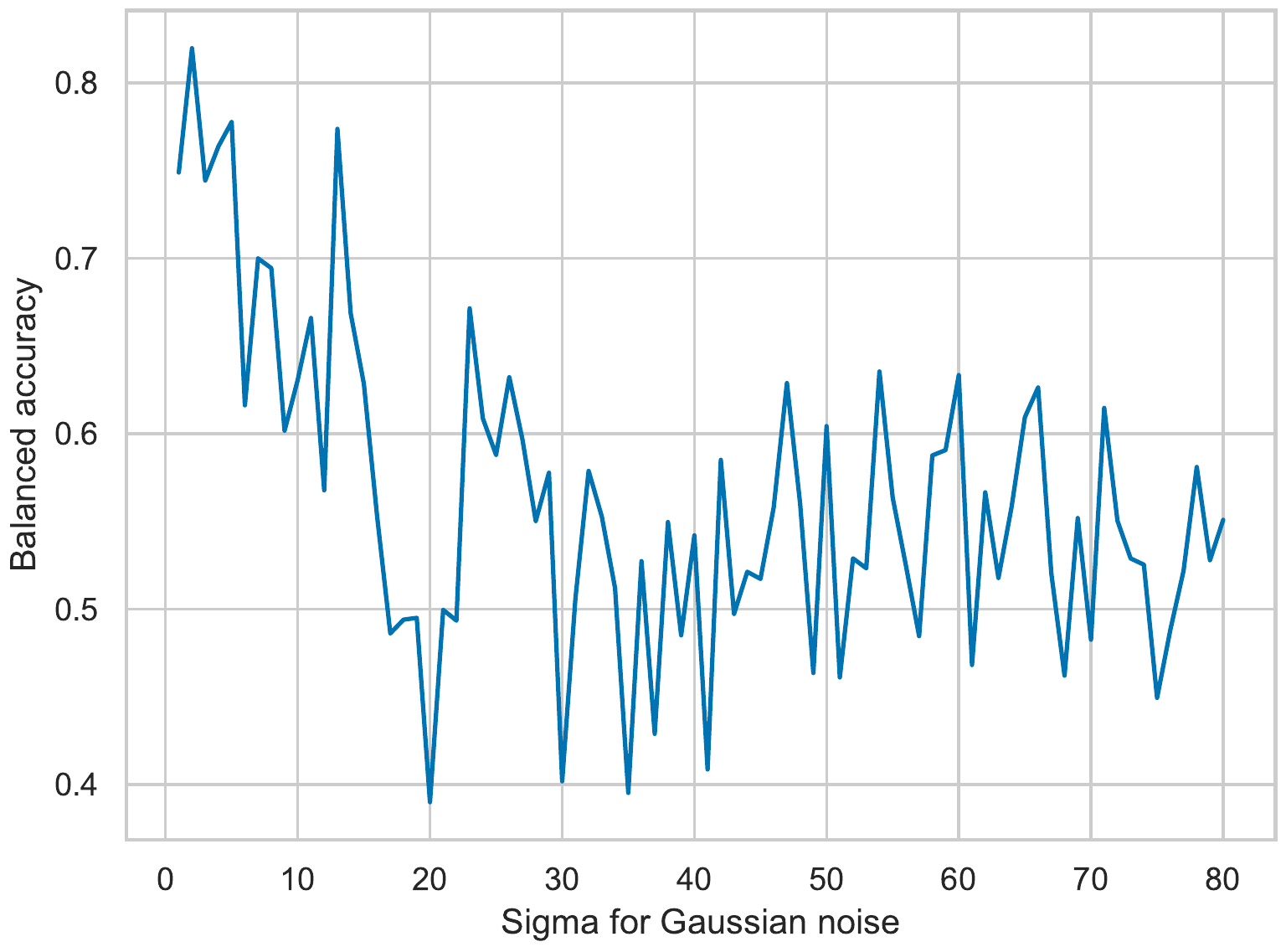} \\
\end{tabular}

\caption{The analysis of the balanced accuracy (y-axis) of votes from the ensemble as we increase the (sigma of the Gaussian) noise (x-axis) from the binary \multi PATE. We search for variance of the noise $\sigma_{G}$ that can preserve high BAC for the ensemble. \textbf{Left}: 50 CheXpert models with train and test on CheXpert, $\sigma_{G} \le 9.0$ preserves more than 0.69 of BAC. \textbf{Middle}: 50 CheXpert models with train on CheXpert and test on PadChest test set, $\sigma_{G} \le 9.0$ preserves more than 0.75 of BAC. \textbf{Right}: 10 PadChest models with train and test on PadChest, $\sigma_{G} \le 5.0$ preserves more than 0.75 of BAC.
\label{fig:ap-chexpert-ensemble-self-pate}}
\end{center}
\end{figure*}

\section{Previous Algorithm for the \Multi Classification}\label{app:prior-work-definitions}

For completeness, we present the previously proposed algorithm~\ref{alg:tau-clipping-privateknn} from~\cite{privateknn2020cvpr}.

\begin{algorithm}
  \caption{\Multi classification with $\tau$-clipping in $\ell_1$-norm from Private kNN by~\cite{privateknn2020cvpr}.}
  \label{alg:tau-clipping-privateknn}
  
  \algorithmicrequire{Data point $x$, clipping threshold $\tau_1$, Gaussian noise scale $\sigma_G$, $n$ teachers, each with model $f_j(x)\in \{0,1\}^k$, where $j \in [n]$.}
  
  \algorithmicensure{Aggregated vector $V \in \{0,1\}^k$ with $V_i=1$ if returned label present, otherwise $V_i=0$, where $i \in [k]$.} 

  \begin{algorithmic}[1]
      \ForAll{teachers $j \in [n]$}
         \State $v_j \gets \min(1, \frac{\tau_1}{\left\lVert f_j(x) \right\rVert_1}) f_j(x)$\Comment{$\tau$-clipping in $\ell_1$ norm}
      \EndFor
      \State $V^1 = \sum_{j=1}^{n} v_j$ \Comment{Number of positive votes per label}
      \State $V^0 = n - V^1$
      \State $V^0 \gets V^0 +  \mathcal{N}(0,\sigma_G)$ \Comment{Add Gaussian noise for privacy protection}
      \State $V^1 \gets V^1 +  \mathcal{N}(0,\sigma_G)$ 
      \ForAll{labels $i \in [k]$}
         \If {$V^1_i > V^0_i$} \Comment{Decide on the output vote}
            \State $V_i = 1$
         \Else
            \State $V_i = 0$
         \EndIf
      \EndFor
  \end{algorithmic}
\end{algorithm}




\section{Data-Dependent Privacy Analysis}
\label{app:data-dependent}
Our binary voting mechanism does leverage both the smooth sensitivity and propose-test-release methods.
For example, the Confident GNMax (proposed by~\cite{papernot2018scalable}) is a form of the propose-test-release method described in Section 3.2 in~\cite{Vadhan2017}.

First explored by~\cite{Cormode2012DifferentiallyPS}, \textbf{differential privacy guarantees can be data-dependent}. These guarantees can lead to better utility with a long history, with several works using them~\cite{Cormode2012DifferentiallyPS,papernot2017semi,papernot2018scalable,chowdhury20aDataDependentDP}. One main caveat is that the released epsilon score must now also be noised because it is itself a function of the data. \cite{papernot2018scalable} provide a way to do this via the smooth sensitivity (Section B in Appendix). Indeed our data-dependent differential privacy guarantees are formally proven and are based on the following intuition. Take the exponential mechanism which gives a uniform privacy guarantee. For instance, when the top score and the second score are very close, applying the exponential mechanism to this data there is a nearly uniform chance of picking either coordinate. However, for some inputs, the utility can be very strong---when the top score is much higher than the second score, then this mechanism is exponentially more likely to pick the top score than the second. This does not require local sensitivity of stability based methods, but is rather derived from the likelihood of picking either coordinate for this mechanism given the gap in the scores.

\section{From Stirling's Approximation to Upper Bound of Factorial}
\label{sec:sterling-factorial}

Stirling's approximation:
\begin{align*}
\sqrt{2\pi n} (n/e)^n \leq n! \leq e^{1/12n} \sqrt{2\pi n} (n/e)^n
\end{align*}

Upper bound of factorial:
\begin{align*}
\binom{n}k&=\frac{n!}{k!(n-k)!}\\
&\le\frac{e^{1/12n} \sqrt{2\pi n} (n/e)^n}{\sqrt{2\pi k}(k/e)^k\sqrt{2\pi(n-k)}((n-k)/e)^{n-k}}\\
&=\frac{e^{1/12n}\sqrt{n}}{\sqrt{2\pi k(n-k)}}\left(\frac{n/e}{k/e}\right)^k\left(\frac{n/e}{(n-k)/e}\right)^{n-k}\\
&=\frac{e^{1/12n}\sqrt{n}}{\sqrt{2\pi k(n-k)}}\left(\frac{n}{k}\right)^k\left(\frac{n}{n-k}\right)^{n-k}\\
&(\text{Note: } \left(\frac{(n-k)+k}{n-k}\right)^{n-k}=\left(1+\frac{k}{n-k}\right)^{n-k}\le e^k)\\
&\le\frac{e^{1/12n}\sqrt{n}}{\sqrt{2\pi k(n-k)}}\left(\frac{n}{k}\right)^k e^k\\
&\le\frac{e^{1/12n}\sqrt{n}}{\sqrt{2\pi(n-1)}}\left(\frac{en}k\right)^k\\
&\le\left(\frac{en}k\right)^k
\end{align*}

\begin{table*}[t]
\caption{\textbf{Model improvements through retraining with \multi CaPC}. DD - denotes Data Dependent. We also present the results with sanitized epsilon values.}
\label{tab:data-stats-final}
\vskip -1.1in
\begin{center}
\begin{small}
\begin{sc}
\begin{tabular}{cclcccccc}\toprule Dataset & \shortstack{\# of \\ Models} &  State & \shortstack{DD \\ $\varepsilon$} & \shortstack{Sanitized \\ $\varepsilon$} &          ACC &                  BAC &          AUC &          MAP \\
\midrule
\multirow{4}{*}{Pascal VOC} & 1 &  Initial &                 - & -& .97 &  .85 & .97 & .85 \\
 & 50 &  Before CaPC &  - & & .93$\pm$.02 &          .59$\pm$.01 &  .88$\pm$.01 &  .54$\pm$.01 \\
& 50 &   After CaPC &                10 & 12.97 & \textbf{.94$\pm$.01} &          .\textbf{62$\pm$.01} &  .88$\pm$.01 &  .54$\pm$.01 \\
& 50 &   After CaPC &                20 & 26.00 &  \textbf{.94$\pm$.01} &          \textbf{.64$\pm$.01} & \textbf{.89$\pm$.01} &  \textbf{.55$\pm$.01} \\
\hline
\multirow{3}{*}{CheXpert} & 1 &      Initial &                 - & &          .79 &                  .78 &          .86 &          .72 \\   
& 50 &  Before CaPC &                 - & -& .77$\pm$.06 &          .66$\pm$.02 &  .75$\pm$.02 &  .58$\pm$.02 \\   
& 50 &   After CaPC &                20 & 25.80 &  .76$\pm$.07 &  \textbf{.69$\pm$.01} &  \textbf{.77$\pm$.01} &  \textbf{.59$\pm$.01} \\
\hline
\multirow{3}{*}{MIMIC} & 1 &      Initial &                 - & -  &      .90 &                  .74 &          .84 &          .51 \\   
& 50 &  Before CaPC &                 - &- & .84$\pm$.07 &          .63$\pm$.03 &  .78$\pm$.03 &  .43$\pm$.02 \\   
& 50 &   After CaPC &                20 & 25.80 &  \textbf{.85$\pm$.05} &          \textbf{.64$\pm$.01} &  \textbf{.79$\pm$.01} &  \textbf{.45$\pm$.03 }\\
\hline
\multirow{3}{*}{PadChest} & 1 &      Initial &                 - &  - &        .86 &                  .79 &          .90 &          .37 \\   
& 10 &  Before CaPC &                 - & -&   .90$\pm$.01 &          .64$\pm$.01 &  .79$\pm$.01 &  .16$\pm$.01 \\   
& 10 &   After CaPC &                20 & 25.80 &  .88$\pm$.01 &          .64$\pm$.01 &  .75$\pm$.01 &  .14$\pm$.01 \\
\bottomrule
\end{tabular}
\end{sc}
\end{small}
\end{center}
\vskip -0.2in
\end{table*}

\end{document}